\declaretheorem[within=section]{theorem}
\declaretheorem[sibling=theorem]{corollary}
\declaretheorem[sibling=theorem]{lemma}
\declaretheorem[sibling=theorem]{definition}
\declaretheorem[sibling=theorem]{remark}
\declaretheorem[sibling=theorem]{question}
\newcommand{\poly}{\mathrm{poly}}
\newcommand{\dist}{\mathrm{dist}}
\renewcommand{\vec}[1]{{\bf{#1}}}
\renewcommand{\leq}{\leqslant}
\renewcommand{\geq}{\geqslant}
\renewcommand{\ge}{\geqslant}
\renewcommand{\le}{\leqslant}
\renewcommand{\epsilon}{\varepsilon}
\newcommand{\eps}{\epsilon}
\newcommand{\N}{\mathbb{N}}
\newcommand{\F}{\mathbb{F}}
\newcommand{\cA}{\mathcal A}
\newcommand{\cD}{\mathcal D}
\newcommand{\cF}{\mathcal F}
\newcommand{\cH}{\mathcal H}
\newcommand{\cI}{\mathcal I}
\newcommand{\cO}{\mathcal O}
\newcommand{\cX}{\mathcal X}
\newcommand{\cY}{\mathcal Y}
\newcommand{\cZ}{\mathcal Z}
\newcommand{\Psymb}{{\bf Pr}}
\DeclareMathOperator*{\ProbOp}{\Psymb}
\renewcommand{\Pr}{\ProbOp}
\newcommand{\Ind}{\mathbb I}
\title{Private PAC Learning May be Harder than Online Learning}
\author[1]{Mark Bun}
\author[2]{Aloni Cohen}
\author[1]{Rathin Desai}
\affil[1]{Boston University}
\affil[2]{University of Chicago}
\date{}
\begin{document}
\maketitle

%Macros for algorithm2e
\SetKwProg{Init}{Initialize}{}{}
\SetKwComment{Comment}{/* }{ */}
\RestyleAlgo{ruled}

%Macros for Learning
\newcommand{\algstyle}[1]{\mathsf{#1}}
\newcommand{\inst}{X}
\newcommand{\expt}{\mathbb{E}}
\newcommand{\class}{\cF}
\newcommand{\concept}{f}

\newcommand{\floor}[1]{\lfloor{#1}\rfloor}

%Macros for Crypto
\newcommand{\lkg}{\algstyle{leak}}
\newcommand{\ciph}{c}
\newcommand{\plain}{m}
\newcommand{\st}{\algstyle{st}}
\newcommand{\tfld}{\algstyle{tfld}}
\newcommand{\ciphvec}{\vec{\ciph}}
\newcommand{\dst}{\algstyle{dist}}

%Macros for FE
\newcommand{\fe}{\algstyle{1FE}}
\newcommand{\fes}{\algstyle{1FE.S}}
\newcommand{\fekg}{\algstyle{1FE.KG}}
\newcommand{\fee}{\algstyle{1FE.E}}
\newcommand{\fed}{\algstyle{1FE.D}}

\newcommand{\FRE}{\algstyle{FRE}}
\newcommand{\ORE}{\algstyle{ORE}}
\newcommand{\Gen}{\algstyle{Gen}}
\newcommand{\Enc}{\algstyle{Enc}}
\newcommand{\Dec}{\algstyle{Dec}}
\newcommand{\Comp}{\algstyle{Comp}}
\newcommand{\Eval}{\algstyle{Eval}}
\newcommand{\params}{\algstyle{params}}

%Macros probably wont need

\newcommand{\ind}{\mathbf{1}}
\newcommand{\secparam}{\lambda}
\newcommand{\EncThr}{\algstyle{LEncThr}}
\newcommand{\fld}{\algstyle{fld}}
\newcommand{\Adv}{\algstyle{Adv}}
\newcommand{\Chl}{\algstyle{Chl}}

%\newcommand{\FLDsetup}{\algstyle{\FLD.Setup}}
%\newcommand{\FLDkeyGen}{\algstyle{\FLD.KeyGen}}
%\newcommand{\FLDenc}{\algstyle{\FLD.Enc}}
%\newcommand{\FLDdec}{\algstyle{\FLD.Dec}}
%\newcommand{\FLDinv}{\algstyle{\FLD.Inv}}
%\newcommand{\FLDeval}{\algstyle{\FLD.Eval}}

%NIZK macros
\newcommand{\ZKsetup}{\algstyle{Setup}}
\newcommand{\ZKprove}{\algstyle{Prove}}
\newcommand{\ZKver}{\algstyle{Ver}}
\newcommand{\ZKcom}{\algstyle{Com}}
\newcommand{\msg}{x}
\newcommand{\msk}{\algstyle{msk}}
\newcommand{\ct}{\algstyle{ct}}
\newcommand{\PP}{PP}
\newcommand{\crs}{\algstyle{crs}}

\newcommand{\sk}{\mathsf{sk}}
\newcommand{\inner}[2]{\langle #1,#2\rangle}

\newcommand{\mon}{\algstyle{mon}}
\newcommand{\monEnc}{\algstyle{\mon.Enc}}
\newcommand{\monDec}{\algstyle{\mon.Dec}}

\newcommand{\io}{\algstyle{iO}}

\newcommand{\mife}{\algstyle{2FE}}
\newcommand{\mikeygen}{\algstyle{2FE.KeyGen}}
\newcommand{\midec}{\algstyle{2FE.Dec}}
\newcommand{\mienc}{\algstyle{2FE.Enc}}
\newcommand{\misetup}{\algstyle{2FE.Setup}}
\newcommand{\enckey}{\algstyle{EK}}
\newcommand{\cipher}{\algstyle{CT}}
\newcommand{\crsgen}{\algstyle{CRSGen}}
\newcommand{\pke}{\algstyle{PKE}}
\newcommand{\pkenc}{\algstyle{PKE.Enc}}
\newcommand{\pkesetup}{\algstyle{PKE.Setup}}
\newcommand{\pkdec}{\algstyle{PKE.Dec}}
\newcommand{\pk}{\algstyle{pk}}

\newcommand{\tfe}{\algstyle{3FE}}
\newcommand{\tkeygen}{\algstyle{3FE.KeyGen}}
\newcommand{\tdec}{\algstyle{3FE.Dec}}
\newcommand{\tenc}{\algstyle{3FE.Enc}}
\newcommand{\tsetup}{\algstyle{3FE.Setup}}

\newcommand{\prf}{\algstyle{PRF}}
\newcommand{\prfgen}{\algstyle{PRF.Gen}}
\newcommand{\prfeval}{\algstyle{PRF.Eval}}

\newcommand{\nizk}{\algstyle{NIZK}}
\newcommand{\negl}{\algstyle{negl}}

\begin{abstract}
We continue the study of the computational complexity of differentially private PAC learning and how it is situated within the foundations of machine learning. A recent line of work uncovered a qualitative equivalence between the private PAC model and Littlestone's mistake-bounded model of online learning, in particular, showing that any concept class of Littlestone dimension $d$ can be privately PAC learned using $\poly(d)$ samples. This raises the natural question of whether there might be a generic conversion from online learners to private PAC learners that also preserves computational efficiency.

We give a negative answer to this question under reasonable cryptographic assumptions (roughly, those from which it is possible to build indistinguishability obfuscation for all circuits). We exhibit a concept class that admits an online learner running in polynomial time with a polynomial mistake bound, but for which there is no computationally-efficient differentially private PAC learner. Our construction and analysis strengthens and generalizes that of Bun and Zhandry (TCC 2016-A), who established such a separation between private and non-private PAC learner.
\end{abstract}

\tableofcontents

%!TEX root=main.tex

\section{Introduction}

Differential privacy~\citep{DworkMNS06} is a formal guarantee of individual-level privacy for the analysis of statistical datasets. Algorithmic research on differential privacy has revealed it to be a central concept to theoretical computer science and machine learning, supplementing the original motivation with 
deep connections to diverse topics including mechanism design~\citep{McSherryT07, NissimST12}, cryptography~\citep{BeimelHMO18}, quantum computing~\citep{AaronsonR19}, generalization in the face of adaptive data analysis~\citep{DworkFHPRR15, HardtU14}, and replicability in learning~\citep{BunGHILPSS23}. 

To investigate the connections between privacy and machine learning in a simple and abstract setting,~\cite{KasiviswanathanLNRS11} introduced the \emph{differentially private PAC model} for binary classification.  Numerous papers~\citep{BeimelBKN14,BunNSV15,FeldmanX15,BeimelNS16,BunZ16,Beimel19Pure,AlonLMM19,KaplanLMNS19,Bun20,SadigurschiS21} have since explored the capabilities and limitations (both statistical and computational) of algorithms in this model. A major motivating question in this area is:

\begin{question} \label{q:general}
When do sample-efficient private PAC learners exist, and %if so, 
when can they be made computationally efficient?
\end{question}

Early work~\citep{BlumDMN05, KasiviswanathanLNRS11} gave us some important partial answers. On the statistical side, they showed that every finite concept class $\class$ can be privately learned with $O(\log |\class|)$ samples, albeit by an algorithm taking exponential time in general. Computationally, they showed that two ``natural'' paradigms for polynomial-time non-private PAC learning have differentially private  analogs: learners in Kearns' statistical query (SQ) model~\citep{Kearns98} and the Gaussian elimination-based learner for parities. Much of the subsequent work on private PAC learning has focused on improving the sample- and computational-efficiency of algorithms for fundamental concept classes, including points, thresholds, conjunctions, halfspaces, and geometric concepts. Meanwhile,~\cite{BunZ16} gave an example of a concept class that has a polynomial-time non-private PAC learner, but no computationally-efficient private PAC learner (under strong, but reasonable cryptographic assumptions).

While this work has led to the development of important algorithmic tools and to fascinating connections to other areas of theoretical computer science, a general answer to Question~\ref{q:general} continues to elude us. Some tantalizing progress was made in a recent line of work connecting private PAC learning to the completely different model of mistake-bounded online learning. This connection is summarized as follows.

\begin{theorem}[\citet{AlonBLMM22, GhaziGKM21}] \label{thm:private-online-equiv}
	Let $\class$ be a concept class with \emph{Littlestone dimension} $d = L(\class)$. Then $\tilde{O}(d^6)$ samples are sufficient to privately learn $\class$ and $\Omega(\log^* d)$ samples are necessary.
\end{theorem}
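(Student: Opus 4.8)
The statement has two independent halves, and I would attack them separately. For the \textbf{upper bound} ($\tilde O(d^6)$ samples suffice), the plan is a three-stage pipeline: \emph{online learner $\Rightarrow$ globally stable learner $\Rightarrow$ differentially private learner}. Since $L(\class) = d$, Littlestone's Standard Optimal Algorithm is a deterministic online learner for $\class$ with mistake bound at most $d$; run on a random i.i.d.\ realizable sample and stopped at a uniformly random time, it is already a (highly unstable) PAC learner, and in particular $\mathrm{VCdim}(\class) \le d$, so $\class$ is non-privately PAC-learnable with $O(d/\alpha)$ samples. The crucial second stage is to upgrade this to a \emph{globally stable} learner in the sense of Bun--Livni--Moran: a (possibly inefficient) randomized algorithm $A$ and a sample size $n = \poly(d)$ such that for every realizable distribution $\cD$ there is a single hypothesis $h_\cD$ of small error with $\Pr_{S \sim \cD^n}[A(S) = h_\cD] \ge 1/\poly(d)$. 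The idea is to exploit the mistake bound: because the online learner errs at most $d$ times, the hypotheses it can output on a random sample live in a small ``core'', and a careful probabilistic argument --- this is where most of the polynomial budget goes, balancing the stability parameter against $n$ --- extracts a single heavy hypothesis.

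For the third stage I would convert global stability to approximate differential privacy in the standard way: draw $k = \poly(d)$ independent samples, run $A$ on each to obtain hypotheses $h_1, \dots, h_k$, and apply the ``stable histogram'' algorithm to privately output a hypothesis occurring with frequency $\gtrsim 1/\poly(d)$; because global stability guarantees such a hypothesis exists and is accurate, this succeeds with a sample cost polynomial in $d$ and with \emph{no} dependence on the (enormous) number of candidate hypotheses. A final private accuracy-amplification (``private boosting'') step, which costs only $\poly(d)$ more by the $\mathrm{VCdim} \le d$ generalization bound, drives the error down to the target $\alpha$; compounding the three stages yields the $\tilde O(d^6)$ sample bound.

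For the \textbf{lower bound} ($\Omega(\log^* d)$ samples are necessary), the plan is to reduce from the known hardness of privately learning thresholds. The first step is the combinatorial fact, extracted from a shattered Littlestone tree of depth $d$ (via a Ramsey-type argument on the tree; this is the quantitative heart of Alon--Livni--Malliaris--Moran's ``privacy implies finite Littlestone dimension''), that $L(\class) \ge d$ forces the class of threshold functions over some ordered domain of size $m = m(d)$ to embed into $\class$, where $m(d)$ grows fast enough in $d$ that $\log^* m(d) = \Omega(\log^* d)$ (e.g.\ $m(d) = \Omega(\log d)$ suffices). The second step is to invoke the Bun--Nissim--Stemmer--Vadhan lower bound: privately learning thresholds over a domain of size $m$ requires $\Omega(\log^* m)$ samples even for approximate differential privacy and improper learning. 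Finally, since restricting a private learner for $\class$ to distributions supported on the embedded domain yields a private learner for these thresholds with the same sample complexity, $\Omega(\log^* d)$ samples are necessary for $\class$.

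The step I expect to be the main obstacle is the middle stage of the upper bound: passing from the online learner to a \emph{globally stable} learner using only polynomially many samples. The naive recursion on the mistake bound gives only a tower-in-$d$ sample bound (as in the original Bun--Livni--Moran argument); squeezing it down to $\poly(d)$ requires the refined argument of Ghazi--Golowich--Kumar--Manurangsi and is precisely what produces the $d^6$ in the statement. By comparison, the private-selection and accuracy-amplification machinery of the third stage is essentially off-the-shelf, and the entire lower bound is routine once the threshold lower bound and the tree-to-thresholds embedding are in hand.
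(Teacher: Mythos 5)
This statement is quoted background (Theorem~\ref{thm:private-online-equiv} is attributed to \citet{AlonBLMM22, GhaziGKM21}); the paper itself gives no proof of it, so there is nothing internal to compare against. Your sketch faithfully reproduces the architecture of the cited works: the upper bound via the Bun--Livni--Moran pipeline (online learner $\Rightarrow$ globally stable learner $\Rightarrow$ private selection via stable histograms $\Rightarrow$ private boosting), with the $\poly(d)$ stability step correctly identified as the Ghazi--Golowich--Kumar--Manurangsi refinement that replaces the tower-type recursion; and the lower bound via the Hodges/Shelah-style extraction of thresholds over a domain of size $\Omega(\log d)$ from a depth-$d$ Littlestone tree, combined with the $\Omega(\log^* m)$ threshold lower bound of Bun--Nissim--Stemmer--Vadhan. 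This is an accurate account of how the theorem is actually established in the literature, and you are right that the only genuinely hard step is the middle stage of the upper bound.
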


Here, the Littlestone dimension of a class $\class$ measures the best possible mistake bound in Littlestone's model of online learning. Thus, at least qualitatively, online learnability characterizes private PAC learnability. In particular, the ``are sufficient'' direction of Theorem~\ref{thm:private-online-equiv} can be viewed as an elaborate online-to-batch conversion, transforming an online learner into a private PAC learner with only a polynomial blowup from mistake bound to private sample cost.  Unfortunately, while it is statistically efficient, the algorithm achieving this~\citep{GhaziGKM21} does not preserve computational efficiency in general. Among other steps, it entails computing $L(\class')$ for various subclasses derived from $\class$, which is believed to be computationally hard in general~\citep{Schaefer99, FrancesL98, ManurangsiR17, Manurangsi23}.

%\textcolor{red}{check later if the cited papers are correct.}

One might nonetheless hope for a different transformation that maintains both statistical and computational efficiency. Indeed, when one's goal is to convert an online learner to a \emph{non-private} PAC learner, the transformation is simple and clearly efficient -- just present random examples to the online learner and output its eventual state as a classifier~\citep{Littlestone89} (see also the variant due to~\cite{KearnsLPV87, Angluin88}, which is a standard topic in graduate classes on learning theory). Moreover, many key techniques from online learning have found differentially private analogs incurring minimal overhead, including follow-the-regularized-leader~\citep{AgarwalS17} and learning from experts~\citep{AsiFKT23}.

Our main result shows that a generic transformation is unlikely to exist.

\begin{theorem}[Informal] \label{thm:online-not-private}
	Under (strong, but reasonable) cryptographic assumptions, there is a concept class that is online learnable by a polynomial-time algorithm with a polynomial mistake bound, but not privately PAC learnable in polynomial-time.
\end{theorem}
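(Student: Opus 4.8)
The plan is to construct a concept class $\class=\{\class_\lambda\}_\lambda$ from a strong cryptographic primitive $\Pi$ (a functional-encryption--style object implied by the stated assumptions, i.e.\ those sufficing for $\io$), following and generalizing the Bun--Zhandry construction, and then to prove three facts about it: (i) $\class$ has Littlestone dimension $\poly(\lambda)$; (ii) $\class$ admits a $\poly(\lambda)$-time online learner with a $\poly(\lambda)$ mistake bound; and (iii) $\class$ has no $\poly(\lambda)$-time differentially private PAC learner. Observe that (i) together with Theorem~\ref{thm:private-online-equiv} already implies that $\class$ \emph{is} privately PAC learnable with $\poly(\lambda)$ samples, so (iii) is necessarily a purely computational separation: the private learner exists statistically but cannot be made efficient. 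The improvement over Bun--Zhandry is that (ii) upgrades the ``positive'' side of their separation from non-private PAC learnability all the way to efficient online learnability (which in particular recovers efficient non-private PAC learnability via the classical online-to-batch conversion), and the generalization lies in isolating a clean cryptographic abstraction that drives (iii).

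For the construction I would take the domain of $\class_\lambda$ to be the ciphertext space of $\Pi$ and let each concept $c_z$ be indexed by a secret string $z$ (a message/key of $\Pi$); on an example $ct$, $c_z$ outputs a simple, efficiently computable predicate of the information $\Pi$ reveals about $ct$ to a holder of $z$ --- with the predicate chosen so that the ``decrypted core'' of every concept lies in a family of small Littlestone dimension (point functions, intervals, or low-complexity thresholds, so that the core has Littlestone dimension, say, $O(1)$). Then bounding $L(\class_\lambda)$ reduces to observing that any mistake tree for $\class$ pushes forward to a mistake tree for the core family, giving $L(\class_\lambda)\le\poly(\lambda)$. The online learner exploits the same feature that makes $\class$ non-privately learnable in the Bun--Zhandry paradigm: the secret $z$ is efficiently recoverable from a bounded number of \emph{labeled} examples. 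Concretely, the learner maintains the version space of candidate secrets consistent with what it has seen, predicts using the core family's optimal online strategy restricted to the surviving candidates, and after $\poly(\lambda)$ mistakes has pinned $z$ down exactly and is correct forever after; each step runs in $\poly(\lambda)$ time.

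The heart of the argument is (iii). Assume for contradiction that $A$ is a $\poly(\lambda)$-time $(\eps,\delta)$-differentially private PAC learner for $\class$. Build an adversary $B$ against $\Pi$ that samples a random secret $z$, forms a sample $S$ of $\poly(\lambda)$ points --- each a \emph{fresh} $\Pi$-encryption, labeled consistently with $c_z$, so that $S$ ``fingerprints'' $z$ in a distributed fashion --- runs $h\leftarrow A(S)$, and uses $h$. The reduction has two prongs, in the style of the fingerprinting / sanitization-hardness arguments underlying Bun--Zhandry and, further back, the line of work on hardness of private data release: (a) a \emph{generalization} prong --- since $A$ is differentially private it is stable, hence $h$ is accurate not only on $S$ but on the underlying ciphertext distribution, which forces $h$ to classify fresh ciphertexts correctly, i.e.\ $h$ ``knows'' $z$ well enough to evaluate the core predicate; and (b) a \emph{privacy} prong --- $h$ depends only negligibly on any single example, so $h$ leaks essentially nothing about the individual plaintexts/coins used to build $S$. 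Prong (a) turns $h$ into an efficient algorithm that performs a task $\Pi$'s security forbids a polynomial-time adversary from performing (extracting the relevant predicate value from, or distinguishing, fresh ciphertexts), while prong (b) certifies that $B$ did not simply read the answer off $S$; together they contradict the security of $\Pi$.

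The main obstacle is making all three properties coexist in one construction. The tension is genuine: efficient online learnability demands that $z$ be recoverable from a handful of labeled examples (a non-private, adaptive learner can ``peel off'' shards of $z$ one at a time), whereas (iii) demands that no \emph{polynomial-time private} algorithm can assemble the analogous information even though, by (i) and Theorem~\ref{thm:private-online-equiv}, a statistically efficient private algorithm can. Reconciling these pushes the encoding toward a functional-encryption-like design in which each labeled example hands a non-private learner a usable shard, but the fingerprinting argument shows that any private aggregate over $\poly(\lambda)$ shards must encode $z$ in a form that breaks $\Pi$. I also expect nontrivial parameter bookkeeping: one must set the sample size, the accuracy $\alpha$, and $(\eps,\delta)$ as functions of $\lambda$ so that the generalization bound activates, $A$ runs in time $\poly(\lambda)$, and $B$'s advantage against $\Pi$ remains inverse-polynomial --- and one must verify that the precise cryptographic abstraction used is implied by (not merely plausibly related to) the stated $\io$-strength assumptions.
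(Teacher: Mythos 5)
The central gap is in your online learner. You propose to ``maintain the version space of candidate secrets'' and to rely on the claim that ``the secret $z$ is efficiently recoverable from a bounded number of labeled examples.'' Neither survives contact with the construction. The learner sees only ciphertexts, so it cannot run the core class's optimal online strategy (binary search over $[2^d]$) on surviving candidates, and the space of secrets is exponentially large, so maintaining a version space over it is not a $\poly$-time operation. More fundamentally, the paper shows that the natural Bun--Zhandry class $\algstyle{EncThr}$ built from order-revealing encryption is provably \emph{not} efficiently online learnable: an adversary that alternates between the smallest remaining positive example and the largest remaining negative example near the threshold forces any efficient learner to guess essentially at random, incurring superpolynomially many mistakes, precisely because ORE reveals order but no distance information. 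The entire technical content of the positive half of the theorem is the design of a \emph{function-revealing} encryption scheme whose leakage $\tfld$ additionally reveals, for each triple of plaintexts, which of the two gaps has the smaller floor-log distance; this ``bisection property'' supports a halving-style potential argument in which every mistake strictly decreases the floor-log distance between the largest positive and smallest negative example seen so far, yielding a mistake bound of $d+4$. Your proposal contains no mechanism for guaranteed progress per mistake against an adversarially chosen example sequence, which is the crux of the result.

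Relatedly, you misdiagnose the tension in the construction: it is not between recoverability of $z$ and privacy, but between leaking enough metric information to bisect and leaking so little that the privacy lower bound still holds (the paper notes that leaking even a multiplicative approximation to plaintext distance would let the exponential mechanism give an efficient private learner). This is resolved by a second property of the leakage, ``log-invariance.'' Your hardness sketch (the two ``prongs'') is directionally consistent with the paper's reduction but omits its load-bearing steps: accuracy plus averaging yields an index $i$ for which the hypothesis distinguishes encryptions from adjacent buckets $B_i$ and $B_{i+1}$ with advantage $\Omega(1/n)$; group privacy over the removal of a polylogarithmic set $R_i$ of examples preserves an inverse-polynomial advantage; and log-invariance guarantees that, after removing $R_i$, the two challenge sequences (two messages from one bucket versus one from each) agree on the leakage function, which is the validity constraint the security game imposes on the adversary. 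Without that last step the reduction submits illegal challenges and proves nothing.
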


A formal description of this result appears as Theorem~\ref{thm:main} in Section~\ref{sec:crypto}, along with further discussion of the cryptographic assumptions. Roughly, our assumptions include functional encryption for all poly-size circuits (an assumption comparable to indistinguishability obfuscation, and recently shown to be obtainable from reasonable assumptions~\citep{JainLS21}), a circuit lower bound for deterministic exponential time, and perfectly sound non-interactive zero knowledge proofs.

Note that owing to the existence of efficient non-private online-to-batch conversions mentioned above, this result strengthens the separation between private and non-private learning from~\cite{BunZ16}. It also complements a pair of papers studying the possibility of a computationally efficient transformation in the opposite direction. Namely,~\cite{GonenHM19} gave conditions under which pure private learners can be efficiently converted into online learners, while~\cite{Bun20} gave a counterexample of a class that is efficiently privately PAC learnable, but not efficiently online learnable. Our result answers an open question from~\cite{Bun20} and, together with that result, shows that polynomial-time online learnability and polynomial-time private PAC learnability are technically incomparable.

\subsection{Techniques}

Our construction of a concept class that separates online learning from private PAC learning builds on the construction from~\cite{BunZ16}, so let us briefly review it here. The starting point of their construction was the concept class of one-dimensional threshold functions $\algstyle{Thr}$ over a domain of the form $[N] = \{1, \dots, N\}$. Each function $f_t \in \algstyle{Thr}$ is itself parameterized by a value $t \in [N]$, and takes the value $f_t(x) = 1$ if $x < t$, and $f_t(x) = 0$ otherwise. Threshold functions are easy to PAC learn non-privately. Given a sampled dataset $((x_i, y_i))_{i = 1}^n$, a non-private algorithm can simply output $f_{x_i}$ where $x_i$ is the largest value for which the label $y_i = 1$. A standard concentration argument shows that this generalizes to the underlying population from which the sample is drawn as long as $n$ is larger that some constant that is independent of the domain size $N$.

On the other hand, this algorithm badly fails to be differentially private, as it exposes the sample $x_i$. In fact, every differentially private learner for this class requires $\Omega(\log^* N)$ samples~\citep{BunNSV15, AlonBLMM22}. However, this lower bound is too small to give a computational separation, so the idea in~\cite{BunZ16} was to use cryptography to preserve the problem's non-private learnability, while making it much harder to achieve differential privacy. Specifically, they defined a class $\algstyle{EncThr}$ by first encrypting each example $x_i$ under an \emph{order-revealing encryption} scheme. Such a scheme allows for ciphertexts to be compared in a manner consistent with the underlying plaintexts, but for nothing else to be revealed besides their order. The ability to make these comparisons is enough for the simple non-private ``largest positive example'' algorithm to go through. Meanwhile, security of the order-revealing encryption scheme intuitively guarantees that comparisons to the specific ciphertexts appearing in the sample are \emph{all} that efficient learners can do, and hence they cannot be differentailly private.

Turning now to our goal of separating online from private PAC learning, we observe that while the class $\algstyle{Thr}$ is efficiently online learnable with mistake bound $\log N$ via binary search (see Section~\ref{sec:online-threshold}), the encrypted class $\algstyle{EncThr}$ is \emph{not}. Intuitively, order-revealing encryption does not reveal enough information to enable a learner to make efficient use of mistakes, as in binary search. More precisely, suppose the target concept is the (encrypted version) of the middle threshold $f_{N/2}$. Consider an adversary who selects examples by randomly choosing either the smallest positive example not presented so far, or the largest negative example. A computationally-bounded adversary who can only compare these examples to those seen so far cannot distinguish between these cases, and is hence liable to make super-polynomially (depending on the security of the underlying ORE) many mistakes.

To obtain our main result, we modify the class $\algstyle{EncThr}$ to make it efficiently online learnable, while keeping it hard enough to carry out a lower bound against differentially private algorithms. Achieving both goals simultaneously turns out to be a delicate task, and it helps to think about each in more abstract terms. To this end, let $\algstyle{L}$ be a \emph{function-revealing encryption} scheme, which (generalizing ORE) enables the revelation (``leakage'') of specific structured relationships between plaintexts, but nothing else. Correspondingly, let $\algstyle{LEncThr}$ be the class of one-dimensional thresholds with examples encrypted under $\algstyle{L}$. Given the inadequacy of ORE for efficient online learning, we will think of $\algstyle{L}$ as revealing not only the order of plaintexts, but some limited information about the distances between plaintexts as well.

First, we identify sufficient conditions on $\algstyle{L}$ to enable the construction of an efficient online learner. Inspired by binary search, we'd like to reveal enough distance information so that every mistake made by an online learner can rule out a constant fraction of the remaining space of consistent concepts. Thinking of distances on a logarithmic scale, we articulate this condition as a ``bisection property'' of the leakage function (Definition~\ref{def:bisection}) and analyze our analog of binary search in Section~\ref{sec:online-learner}.

Second, we identify sufficient conditions on $\algstyle{L}$ to enable a lower bound against differentially private PAC learners. To do so, we simplify the lower bound argument from~\cite{BunZ16}, in particular, bypassing their intermediate abstraction of an ``example re-identification scheme'' and directly showing how to use an accurate, efficient, differentially private learner for $\algstyle{EncThr}$ to construct an adversary violating the security of the underlying $\algstyle{ORE}$ scheme. This simplified argument goes roughly as follows. Consider running a PAC learner on $n$ uniformly random encryptions labeled by the middle threshold $f_{N/2}$. Accuracy of the learner, together with an averaging argument, implies that for some index $i$, it can distinguish random encryptions of messages from $[x_{i-1}, x_i)$ from random encryptions from $[x_i, x_{i+1})$ with advantage $\Omega(1/n)$. Now, differential privacy implies that this noticeable distinguishing advantage remains even when the learner is \emph{not} given example $x_i$, violating the security of the ORE scheme.

This simplified argument makes it easy to use group differential privacy to reason about what happens when not just one, but a small number of examples are removed. In particular, an inverse polynomial distinguishing advantage remains even if we withhold $O(\log n)$ points from the learner. This extra flexibility turns out to be critical in helping us construct pairs of challenge messages in $\algstyle{L}$ security games, which not only need to have the same relative order, but respect the stronger constraints imposed by $\algstyle{L}$-leakage. We describe the precise condition we need as ``log-invariance'' (Definition~\ref{def:swaplog}) and show in Section~\ref{sec:hardness} that any $\algstyle{L}$ with this condition gives rise to a private PAC learning lower bound.

Our final task is to exhibit a function-revealing encryption scheme $\algstyle{L}$ that actually has both the bisection and log-invariance properties. Identifying a leakage function that works turns out to be quite tricky. Our starting point is to reveal the floor of the logarithm of the distance between plaintexts, but this gives too much information for a lower bound to hold. Instead, we reduce this to just comparison information between floor log distances. That is, for any triple of plaintexts $m_0 \le m_1 \le m_2$, we leak whether the floor log distance between $m_0$ and $m_1$ is less than that between $m_1$ and $m_2$.

\section{Preliminaries}

\subsection{PAC Learning}

For each $d \in \N$, let $\inst_d$ be an instance space (such as $\{0, 1\}^d$), where the parameter $d$ represents the size of the elements in $\inst_d$. Let $\class_d$ be a set of boolean functions $\{\concept : \inst_d \to \{0, 1\}\}$. The sequence $(\inst_1, \class_1), (\inst_2, \class_2), \dots$ represents an infinite sequence of learning problems defined over instance spaces of increasing dimension. We will generally suppress the parameter $d$, and refer to the problem of learning $\class$ as the problem of learning $\class_d$ for every $d$.

A learner $L$ is given examples sampled from an unknown probability distribution $\cD$ over $\inst$, where the examples are labeled according to an unknown {\em target concept} $\concept\in \class$. The learner must select a hypothesis $h$ from a hypothesis class $\cH$ that approximates the target concept with respect to the distribution $\cD$. We now define the notion of PAC (``Probably Approximately Correct'') learning formally.

\begin{definition}
The generalization error of a hypothesis $h:\inst\rightarrow\{0,1\}$ (with respect to a target concept $\concept$ and distribution $\cD$) is defined by
$error_{\cD}(\concept,h)=\Pr_{x \sim \cD}[h(x)\neq \concept(x)].$
If $error_{\cD}(\concept,h)\leq\alpha$ we say that $h$ is an {$\alpha$-good} hypothesis for $\concept$ on $\cD$.
\end{definition}

\begin{definition}[PAC Learning, \cite{Valiant84}]\label{def:PAC}
Let $\cH$ be a class of boolean functions over $X$. An algorithm $L : (\inst \times \{0, 1\})^n \to \cH$ is an {\em $(\alpha,\beta)$-accurate PAC learner} for the concept class $\class$ using hypothesis class $\cH$ with sample complexity $n$ if for all target concepts $\concept \in \class$ and all distributions $\cD$ on $X$, given as input $n$ samples $S =((x_i, \concept(x_i)),\ldots,(x_n, \concept(x_n)))$, where each $x_i$ is drawn i.i.d.\ from $\cD$, algorithm $L$ outputs a hypothesis $h\in \cH$ satisfying
$\Pr[error_{\cD}(\concept,h)  \leq \alpha] \geq 1-\beta$. The probability here is taken over the random choice of the examples in $S$ and the coin tosses of the learner $L$.
\end{definition}

We are primarily interested in computationally efficient PAC learners, defined as follows. 
\begin{definition}[Efficient PAC Learning]
A PAC learner $L$ for concept class $\class$ is \emph{efficient} if it runs in time polynomial in the size parameter $d$, the representation size of the target concept $\concept$, and the accuracy parameters $1/\alpha$ and $1/\beta$.
\end{definition}

Note that a necessary (but not sufficient) condition for $L$ to be efficient is that its sample complexity $n$ is polynomial in the learning parameters.

\subsection{Differential Privacy}
%\cite{KasiviswanathanLNRS11} defined a \emph{private learner} as a PAC learner that is also differentially private. We now define differential privacy and state some useful results around differential privacy.

We now define differential privacy and the differentially private PAC model.

\begin{definition}($k$-neighboring datasets)
 Let $S, S' \in Z^n$ for some data domain $Z$. We say that $S$ and $S'$ are $k$-neighboring datasets if they differ in exactly $k$  entries. If $k = 1$, we simply say they are neighboring.
\end{definition}

\begin{definition}[Differential Privacy, \cite{DworkMNS06, DworkKMMN06}]
An algorithm $M : Z^n \to R$ is  $(\eps, \delta)$-\emph{differentially private} if for all sets $T \subseteq \cH$, and neighboring datasets $S, S' \in Z^n$,
\[\Pr[M(S) \in T] \le e^{\eps}\Pr[M(S') \in T] + \delta.\]
\end{definition}

Specializing this definition to the case where $Z = X \times \{0, 1\}$ and $R = \cH$ is a class of hypotheses, we obtain the differentially private PAC model of~\cite{KasiviswanathanLNRS11}.

We now state some simple tools for designing and analyzing differentially private algorithms.

\begin{lemma}[Basic Composition~\cite{DworkMNS06, DworkL09}]
Let $M_1:Z^n\to R_1$ be $(\epsilon,\delta)$-differentially private. Let $M_2:Z^n\times R_1\to R_2$ be $(\epsilon,\delta)$-differentially private for every fixed value of its second argument. Then the composed algorithm $M: Z^n \to R_2$ defined by $M(S)=M_2(S,M_1(S))$ is $(\epsilon_1+\epsilon_2,\delta_1+\delta_2)$-differentially private.
\end{lemma}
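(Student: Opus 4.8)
The plan is to run the standard ``condition on the intermediate output'' argument, which reduces the lemma to a single invocation of each mechanism's privacy guarantee, chained through the output of $M_1$. Fix an arbitrary ordered pair of neighboring datasets $S, S' \in Z^n$ and an arbitrary target set $T \subseteq R_2$; it suffices to show $\Pr[M(S) \in T] \le e^{\epsilon_1+\epsilon_2}\Pr[M(S')\in T] + \delta_1 + \delta_2$, since that is exactly what the definition of $(\epsilon_1+\epsilon_2,\delta_1+\delta_2)$-differential privacy demands. Conditioning on the value $r$ returned by $M_1$, write
\[
\Pr[M(S) \in T] \;=\; \E_{r \sim M_1(S)}\!\big[\,\Pr[M_2(S,r) \in T]\,\big].
\]

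First I would handle the inner probability using the privacy of $M_2$. For each \emph{fixed} $r \in R_1$ the algorithm $M_2(\cdot, r)$ is $(\epsilon_2,\delta_2)$-differentially private, so $\Pr[M_2(S,r)\in T] \le e^{\epsilon_2}\Pr[M_2(S',r)\in T] + \delta_2$ holds pointwise in $r$; taking the expectation over $r \sim M_1(S)$ (and using that the additive $\delta_2$ is a constant) gives
\[
\Pr[M(S)\in T] \;\le\; e^{\epsilon_2}\,\E_{r\sim M_1(S)}[\,g(r)\,] + \delta_2, \qquad g(r) := \Pr[M_2(S',r)\in T]\in[0,1].
\]
Next I would move from $M_1(S)$ to $M_1(S')$ inside the remaining expectation. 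Here the only wrinkle is that the privacy of $M_1$ bounds the probability that $M_1$ lands in a \emph{set}, whereas $g$ is a $[0,1]$-valued function, so I would pass through a layer-cake identity: $\E_{r\sim M_1(S)}[g(r)] = \int_0^1 \Pr\!\big[M_1(S) \in \{r : g(r) > t\}\big]\,dt$, apply $(\epsilon_1,\delta_1)$-privacy of $M_1$ to each superlevel set $\{g > t\}$, and integrate, obtaining $\E_{r\sim M_1(S)}[g(r)] \le e^{\epsilon_1}\E_{r\sim M_1(S')}[g(r)] + \delta_1$. Since $\E_{r\sim M_1(S')}[g(r)] = \E_{r\sim M_1(S')}[\Pr[M_2(S',r)\in T]] = \Pr[M(S')\in T]$, plugging back in yields $\Pr[M(S)\in T] \le e^{\epsilon_1+\epsilon_2}\Pr[M(S')\in T] + \delta_1 + \delta_2$, as desired.

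I do not expect any serious obstacle; the step most deserving of care is the layer-cake passage, together with the attendant measure-theoretic bookkeeping when $R_1$ is not discrete, so that the pushforward ``$\E_{r \sim M_1(S)}$'', the conditioning on $M_1$'s output, and the superlevel sets $\{g > t\}$ are all well defined and measurable. One honest caveat: the pointwise-then-average ordering above actually produces the additive term $e^{\epsilon_2}\delta_1 + \delta_2$, which is at least $\delta_1 + \delta_2$; to recover the stated form exactly one either appeals to the small-$\epsilon$ regime or --- more cleanly --- works with the characterization of $(\epsilon,\delta)$-privacy as $\epsilon$-pure privacy holding outside a probability-$\delta$ ``bad event,'' which makes the two failure probabilities simply add. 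Either way the discrepancy is immaterial for the uses of this lemma in the paper, and everything else is just two applications of the differential-privacy inequality bridged by the intermediate output.
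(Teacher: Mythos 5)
The paper offers no proof of this lemma --- it is quoted as a standard result from the cited references --- so there is nothing internal to compare against; your conditioning-on-the-intermediate-output argument is the standard one, and both individual steps (the pointwise application of $M_2$'s privacy for each fixed $r$, and the layer-cake bound $\E_{r\sim M_1(S)}[g(r)]\le e^{\epsilon_1}\E_{r\sim M_1(S')}[g(r)]+\delta_1$ for $g\in[0,1]$) are correct. Your caveat is also real and you have diagnosed it accurately: chaining the two bounds in either order yields an additive term $e^{\epsilon_2}\delta_1+\delta_2$ (or $\delta_1+e^{\epsilon_1}\delta_2$), not $\delta_1+\delta_2$. The clean repair is the one you name. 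Concretely: $(\epsilon_1,\delta_1)$-privacy of $M_1$ is equivalent to the existence of a probability measure $\tilde\mu_S$ on $R_1$ with $\mathrm{TV}(\mu_S,\tilde\mu_S)\le\delta_1$ and $\tilde\mu_S\le e^{\epsilon_1}\mu_{S'}$ pointwise, where $\mu_S,\mu_{S'}$ are the laws of $M_1(S),M_1(S')$ (one constructs $\tilde\mu_S$ by removing the excess mass $(\mathrm{d}\mu_S-e^{\epsilon_1}\mathrm{d}\mu_{S'})_+$, which totals at most $\delta_1$, and redistributing it where there is slack). Then $\Pr[M(S)\in T]\le\delta_1+\int f_S\,\mathrm{d}\tilde\mu_S\le\delta_1+\int\bigl(e^{\epsilon_2}f_{S'}+\delta_2\bigr)\,\mathrm{d}\tilde\mu_S\le\delta_1+\delta_2+e^{\epsilon_1+\epsilon_2}\Pr[M(S')\in T]$, where the $\delta_2$ integrates to exactly $\delta_2$ because $\tilde\mu_S$ has total mass one, and the main term uses the pointwise density bound rather than a second ratio-plus-additive step. (Two minor remarks: the lemma as stated in the paper declares $M_1,M_2$ to be $(\epsilon,\delta)$-private but concludes with $(\epsilon_1+\epsilon_2,\delta_1+\delta_2)$ --- you have read it in the intended way --- and the weaker bound $(\epsilon_1+\epsilon_2,\,e^{\epsilon_2}\delta_1+\delta_2)$ that your argument literally establishes would already suffice for every use of composition in this paper.)
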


\begin{lemma}[Group Privacy]\label{lemm:group}
Let $M:Z^n\to R$ be $(\epsilon,\delta)$-differentially private. Let $S$ and $S'$ be $k$-neighboring databases. Then for all sets of outcomes $T$, 
$$\Pr[M(S)\in T]\leq e^{k\epsilon}\cdot\Pr[M(S)\in T]+\frac{e^{k\epsilon}-1}{e^{\epsilon}-1}\cdot \delta.$$
\end{lemma}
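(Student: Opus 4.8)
The plan is to prove this by a straightforward induction on $k$, using the defining inequality of $(\epsilon,\delta)$-differential privacy as the base case and chaining it along a path of single-entry modifications. (I note in passing that the right-hand side as written should read $\Pr[M(S')\in T]$ rather than $\Pr[M(S)\in T]$; otherwise the bound is vacuous.)

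For the base case $k=1$, the databases $S$ and $S'$ are neighboring, so the claimed inequality is exactly the definition of $(\epsilon,\delta)$-differential privacy, since $\frac{e^{\epsilon}-1}{e^{\epsilon}-1}=1$. For the inductive step, suppose the bound holds for all $(k-1)$-neighboring databases, and let $S,S'$ be $k$-neighboring. Since they differ in exactly $k$ coordinates, I can choose an intermediate database $S''$ that agrees with $S'$ on one of those $k$ coordinates and with $S$ on every other coordinate; then $S$ and $S''$ are neighboring while $S''$ and $S'$ are $(k-1)$-neighboring. Applying the definition of differential privacy to the pair $(S,S'')$ and then the inductive hypothesis to the pair $(S'',S')$ yields
$$\Pr[M(S)\in T]\le e^{\epsilon}\Pr[M(S'')\in T]+\delta\le e^{\epsilon}\left(e^{(k-1)\epsilon}\Pr[M(S')\in T]+\tfrac{e^{(k-1)\epsilon}-1}{e^{\epsilon}-1}\delta\right)+\delta.$$
The leading term is $e^{k\epsilon}\Pr[M(S')\in T]$ as desired, and it remains only to simplify the coefficient of $\delta$, namely $e^{\epsilon}\cdot\frac{e^{(k-1)\epsilon}-1}{e^{\epsilon}-1}+1=\frac{e^{k\epsilon}-e^{\epsilon}+e^{\epsilon}-1}{e^{\epsilon}-1}=\frac{e^{k\epsilon}-1}{e^{\epsilon}-1}$, which closes the induction.

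There is essentially no real obstacle here: the only points requiring care are (i) choosing the intermediate database $S''$ so that the two ``hops'' have neighboring-distances $1$ and $k-1$ respectively, and (ii) the telescoping of the geometric series $\delta\sum_{j=1}^{k}e^{(j-1)\epsilon}=\frac{e^{k\epsilon}-1}{e^{\epsilon}-1}\delta$ in the error term. One could alternatively unfold the chain $S=S_0,S_1,\dots,S_k=S'$ all at once rather than recursively, which makes the geometric sum transparent but is otherwise identical. I would also remark that this bound implies the cleaner (but weaker) form with $k e^{k\epsilon}\delta$, or $k\delta$ when $\epsilon$ is small, which is the version most often invoked in applications such as the group-privacy step in the hardness argument.
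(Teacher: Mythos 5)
Your proof is correct, and the paper gives no proof of this lemma at all (it is stated as a standard tool), so your chaining induction is exactly the canonical argument one would supply; your observation that the right-hand side should read $\Pr[M(S')\in T]$ is also a genuine typo in the paper's statement. Nothing further is needed.
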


\begin{lemma}[Post-Processing]\label{lemm:post-processing}
Let $M:\inst^n\to R$ be $(\epsilon,\delta)$-differentially private and let $f:R\to R'$ be an arbitrary randomized function. Then $f\circ M:\inst^n\to R'$ is $(\epsilon,\delta)$-differentially private.
\end{lemma}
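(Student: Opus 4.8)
The plan is to prove the post-processing lemma directly from the definition of $(\epsilon,\delta)$-differential privacy, using only elementary facts about (possibly randomized) functions and a union/partition argument over the fibers of $f$. Fix an $(\epsilon,\delta)$-differentially private mechanism $M : \inst^n \to R$, an arbitrary randomized map $f : R \to R'$, and neighboring datasets $S, S' \in \inst^n$. We must show that for every measurable $T' \subseteq R'$,
\[
\Pr[(f \circ M)(S) \in T'] \le e^{\epsilon}\Pr[(f \circ M)(S') \in T'] + \delta.
\]

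First I would handle the case where $f$ is \emph{deterministic}, since this already contains the core idea. Here $(f\circ M)(S) \in T'$ is exactly the event $M(S) \in f^{-1}(T')$, where $f^{-1}(T') = \{ r \in R : f(r) \in T' \}$ is a fixed subset of $R$. Applying the definition of differential privacy for $M$ with the test set $T := f^{-1}(T')$ immediately gives
\[
\Pr[(f\circ M)(S) \in T'] = \Pr[M(S) \in T] \le e^{\epsilon}\Pr[M(S') \in T] + \delta = e^{\epsilon}\Pr[(f\circ M)(S') \in T'] + \delta,
\]
as desired. Next I would reduce the randomized case to the deterministic case. Write $f$ as using an independent source of randomness: there is a deterministic function $g$ and a random string $\rho$ (independent of everything else, in particular of the coins of $M$) such that $f(r)$ is distributed as $g(r,\rho)$. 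Then
\[
\Pr[(f\circ M)(S) \in T'] = \E_{\rho}\!\big[\Pr[\, g(M(S),\rho) \in T' \mid \rho\,]\big].
\]
For each fixed value of $\rho$, the map $r \mapsto g(r,\rho)$ is deterministic, so by the deterministic case the conditional probability inside the expectation satisfies
\[
\Pr[\, g(M(S),\rho) \in T' \mid \rho\,] \le e^{\epsilon}\,\Pr[\, g(M(S'),\rho) \in T' \mid \rho\,] + \delta.
\]
Taking expectation over $\rho$ and using linearity and monotonicity of expectation gives
\[
\Pr[(f\circ M)(S) \in T'] \le e^{\epsilon}\,\E_{\rho}\!\big[\Pr[\, g(M(S'),\rho) \in T' \mid \rho\,]\big] + \delta = e^{\epsilon}\Pr[(f\circ M)(S') \in T'] + \delta,
\]
which is exactly the claimed inequality. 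Since $T'$ was arbitrary, $f\circ M$ is $(\epsilon,\delta)$-differentially private.

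The only genuine subtlety — and the step I would be most careful about — is measurability: one must ensure $f^{-1}(T')$ is a valid test set in the deterministic step, and that the representation $f(r) \sim g(r,\rho)$ with $\rho$ independent of $M$'s coins is legitimate. For the settings in this paper (finite or standard Borel ranges, which covers hypothesis classes $\cH$) both are routine: preimages of measurable sets under measurable maps are measurable, and any randomized algorithm can be modeled as a deterministic algorithm fed an independent uniform random tape. No new ideas are needed beyond the definition, so I would present the argument at essentially the level of detail above.
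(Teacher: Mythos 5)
Your argument is correct: the reduction of the randomized case to the deterministic case by conditioning on an independent random tape, together with taking the preimage $f^{-1}(T')$ as the test set, is the standard proof of post-processing. The paper itself states this lemma without proof (it is a folklore fact in the differential privacy literature), so there is nothing to compare against; your write-up, including the remark on measurability, is a complete and appropriate justification at the level of generality the paper needs.
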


% \begin{fact}(Lemma 5.9\cite{bun2015differentially})
% Efficient Private PAC Learning of a concept class $\class$ implies Efficient
% Private Empirical Learning of a concept class.\label{fact:Efficient-Private-PAC}
% \end{fact}

\subsection{Online Learning, Halving, and Thresholds} \label{sec:online-threshold}
We review Littlestone's model of mistake-bounded learning~\citep{Littlestone87online}. It is defined as a two-player game between a learner and an adversary. Let $\class$ be a concept class. Prior to the start of the game, the adversary fixes a concept $\concept\in \class$. Let $\vert \concept\vert$ represent the description size of the concept and $d$ be the dimension of the instance space. The learning proceeds in rounds. In each round $i$,
\begin{enumerate}
    \item The adversary selects an $x_i\in \{0,1\}^d$ and reveals it to the learner.
    \item The learner predicts a label $\hat{y}_i\in \{0,1\}$.
    \item The adversary reveals the correct label $y_i= \concept(x_i)$.
\end{enumerate}

A learner makes a mistake every time $\hat{y}_i\neq \concept(x_i)$. The goal of the learning algorithm is to minimize the number of mistakes it makes in the game. A learning algorithm learns $\concept \in \class$ with mistake bound $M$ if for every target concept and adversary strategy, the total number of mistakes that the learner makes is at most $M$. We say that an online learner efficiently learns $\class$ if for every $\concept\in \class$ it has a mistake bound of $\poly(d,|\concept|)$ and runs in time $\poly(d,|\concept|)$ in every round.

A basic algorithm in this setting is the halving algorithm. %Halving is a computationally efficient online learner for concept classes where the hypothesis space is finite and at most exponential in the dimension.
Halving guarantees a mistake bound of $\log(|\class|)$ and can be made computationally efficient in certain structured cases. One such case is for the simple class of thresholds, which we describe below and study the halving algorithm for.
%We now describe the concept class of thresholds and then explain how halving can be used to learn thresholds efficiently over a finite space.

On data domain $X_d = [2^d]$, the class of thresholds $\algstyle{Thr}_d = \{f_t : X_d \to \{0, 1\}\}$ over domain $X_d$ is defined as follows. For each $t \in [2^d]$, define
\[f_{t}(x) = \begin{cases}
1 & \text{ if } x < t \\
0 & \text{ otherwise.}
\end{cases}\]

The halving algorithm for learning thresholds is described in Algorithm~\ref{alg:halving} below.

\SetKwComment{Comment}{/* }{ */}

\begin{algorithm}
\SetKwInOut{Input}{Input}
\caption{Halving}\label{alg:halving}
\textbf{Initialize:} $t \gets 2^{d-1},t_+\gets 1,t_-\gets 2^d$

\noindent\Input{Stream of $x_i\in [2^d]$ chosen in rounds $i = 1, 2, \dots$ by an adversary, followed by labels $y_i$}
\For{$i=1,2,\dots$}
{
Set $t=\floor{\frac{t_-+t_+}{2}}$

Predict $\hat{y_i}=h_{t}(x_i)$ on input $x_i$\;

\If{$\hat{y_i}=1\neq y_i$}
{
Update $t_-\gets x_i$
}

\If{$\hat{y_i}=0\neq y_i$}
{
Update $t_+\gets x_i$
}
}
\end{algorithm}

% \begin{algorithm}
% \textbf{Input: }$x_i\in \{0,1\}^d$ at every round chosen by the adversary\\
% \textbf{Initial Hypothesis Space: }$\cH_1= \{f_t\mid t\in [2^d]\}$
% \begin{algorithmic}[1] 
% \State{Set $t_1=2^{d-1}$}
% \For{$i=1$ to $d$} 
% \State{On input $x_i$, output $\hat{y}_i=h_{t_i}(x_i)$}
% \If{$y_i\neq \hat{y}_i$}
% \State{$\cH_{i+1}\gets \cH_{i}\setminus \{h\in \cH_{i} \mid h(x_i)\neq y_i\} $}
% \State Set $t_{i+1}$ to be the middle point in $\cH_{i+1}$ 
% \EndIf
% \EndFor
% \end{algorithmic}
% \caption{Halving}
% \end{algorithm}

Whenever the online learner makes a mistake, the set of remaining candidate thresholds is reduced in size by a factor of 2. Since the size of the hypothesis space at the start of the game is $2^d$, the algorithm terminates after at most $d$ mistakes. We will use a variant of this algorithm to efficiently online learn the concept class we construct later. %We construct a concept class for a computational separation between online learning and private PAC learning. We use the halving algorithm to efficiently learn this constructed concept class which we describe later. 

\section{Concept Class and its Learnability}

\subsection{Computational Separation between PAC and Private PAC learning}
\cite{BunZ16} proved a computational separation between PAC and Private PAC learning by defining a concept class called $\algstyle{EncThr}$. $\algstyle{EncThr}$ intuitively captures the captures the class of threshold functions where examples are encrypted under an Order Revealing Encryption ($\ORE$) scheme. An $\ORE$ scheme is defined by four algorithms $(\Gen,\Enc,\Dec,\Comp)$. We now describe the functionalities of the algorithms.

 \begin{itemize}
     \item $\Gen(1^\lambda,1^{d})$ is a randomized procedure that takes as inputs a security parameter $\lambda$ and plaintext length $d$, and outputs a secret encryption/decryption key $\sk$ and public parameters $\params$.
	\item $\Enc(\sk,m)$ is a potentially randomized procedure that takes as input a secret key $\sk$ and a message $m\in\{0,1\}^d$, and outputs a ciphertext $c$.
	\item $\Dec(\sk,c)$ is a deterministic procedure that takes as input a secret key $\sk$ and a ciphertext $c$, and outputs a plaintext message $m\in\{0,1\}^d$ or a failure symbol $\bot$.
	\item $\Comp(\params,c_0,c_1)$ is a deterministic procedure that ``compares'' two ciphertexts, outputting either ``$>$'', ``$<$'', ``$=$'', or $\bot$.
 \end{itemize}

Each concept in the class $\algstyle{EncThr}$ is parameterized by a string $r$ that represents the coin tosses of the algorithm $\Gen$ and by a threshold $t\in [N]$ for $N=2^d$, where $d$ represents the length of the plaintext. Let $(\sk^r,\params^r)$ be the secret key and the public parameters output by $\Gen(1^\lambda,1^d)$ when run on the sequence of coin tosses $r$. 
 Formally, a concept in $\algstyle{EncThr}$ parameterized by $t,r$ is defined as follows:

\[f_{t, r}(c,\params) = \begin{cases}
1 & \text{ if } (\params=\params^r)\land (\Dec(\sk^r,c)\neq\perp)\land(\Dec(\sk^r,c)<t) \\
0 & \text{ otherwise.}
\end{cases}\]

Intuitively, each concept $f_{t, r}$ evaluates the threshold function with parameter $t$ on the decryption of the input ciphertext $c$. The particular syntax of the definition handles various technical complications; further discussion appears in~\citep{BunZ16}.

Order revealing encryption enables determining the plaintext ordering given the ciphertexts. While it can be shown that no private PAC algorithm can efficiently learn $\algstyle{EncThr}$, it also not possible for any online learner to learn $\algstyle{EncThr}$ efficiently. To see this, fix a target concept at threshold $t=2^{d-1}$ and random coin tosses $r$. As before, the adversary sends examples to the learner where each unlabled example is of the form $(\ciph,\params^r)$. The learner can only compare the plaintext ordering given the examples chosen by the adversary. The adversary maintains the largest example with label $1$ and smallest example with label $0$ that has been presented to the learner so far. In every round, the adversary picks uniformly at random between the smallest available example in the interval on the left side of the threshold (i.e., the smallest available example between the threshold and the largest positive example) and the largest available example in the interval on the right side of the threshold (i.e., the  largest available example between the threshold and the largest positive example). 

Security of the ORE ensures that for any polynomial time horizon, an efficient learner can do no better than random guessing. Thus, an efficient learner must make super-polynomially many mistakes, so it is not possible to design an efficient online learner for the class $\algstyle{EncThr}$.

To overcome this issue, we design an a concept class similar to $\algstyle{EncThr}$ but where the encryption reveals some additional information about the plaintexts that facilitates online learning. That is, we study function-revealing encryption ($\FRE$) schemes that enable a richer class of functionalities over the underlying plaintexts than just comparisons. Tuning this functionality is crucial for proving our separation. On one hand, we need to reveal more than ordering in order to learn online. On the other end, revealing too much information (e.g., the exact distance between the underlying plaintexts) enables constructing an efficient private PAC learner. In fact, even revealing a multiplicative approximation of the distance also allows for constructing efficient private PAC learners -- later, we sketch how such a learner can be built using the exponential mechanism.

\subsection{Function Revealing Encryption}

Function revealing encryption is a cryptographic scheme that lets users evaluate functions on plaintexts given access to only the corresponding ciphertexts. We use $\FRE$ that lets us evaluate a ``leakage'' function $\lkg$ on plaintexts. We will define a concept class $\algstyle{LEncThr}$ in terms of an abstract $\FRE$ such that appropriate conditions on the leakage function $\lkg$ imply the properties we need for our computational separation.

%that has the bisection (definition \ref{def:bisection}) and swap-log (definition \ref{def:swaplog}) properties to construct a concept class $\EncThr$ that allows us to prove a computational separation. 
First, let us describe the general syntax, functionality, and security guarantees we need from $\FRE$, some of which are necessarily non-standard. For instance, the usual definition of $\FRE$ allows for function evaluation with overwhelmingly high probability. But for our applications, we require a stronger notion of correct evaluation. We require the evaluation of the function to succeed with probability $1$ (perfect correctness). Additionally, we require function evaluation over ciphertexts to always behave consistently with decryption. In particular, the evaluation algorithm should output $\bot$ for ciphertexts that are malformed and do not correspond to any messages (strong correctness). %We first show a separation  result with an arbitrary $\lkg$ and $\dst$ function as long as they have \emph{bisection} and \emph{swap-log} properties. We later construct a leakage function which we call $\tfld$ that has the swap-log and bisection properties simultaneously and provide with a construction of a $\FRE$ scheme for evaluating $\tfld$.

In general, the leakage function of a function revealing encryption scheme may have arbitrary arity, but we for simplicity we specialize the arity to 3, which captures the way we use such schemes. 
\begin{definition}
A function revealing encryption scheme $\FRE$ with functionality $\lkg$  is a tuple of algorithms $(\Gen,\Enc,\Dec,\Eval)$ where
 \begin{itemize}
     \item $\Gen(1^\lambda,1^d)$ is a randomized procedure that takes as inputs a security parameter $\lambda$ and plaintext length $d$, and outputs a secret encryption/decryption key $\sk$ and public parameters $\params$.
	\item $\Enc(\sk,m)$ is a  randomized procedure that takes as input a secret key $\sk$ and a message $m\in\{0,1\}^d$, and outputs a ciphertext $c$.
	\item $\Dec(\sk,c)$ is a deterministic procedure that takes as input a secret key $\sk$ and a ciphertext $c$, and outputs a plaintext message $m\in\{0,1\}^d$ or a failure symbol $\bot$.
	\item $\Eval(\params,c_0,c_1,c_2)$ is a deterministic procedure that aims to reveal the value of $\lkg$ on the plaintexts associated with $c_0, c_1, c_2$.
 \end{itemize}
\end{definition}

\begin{paragraph}{Correctness.}
A $\FRE$ scheme must satisfy two separate correctness requirements.
\begin{itemize}
    \item \textbf{Correct Decryption:} This is the standard notion of correctness for an encryption scheme, which says that decryption succeeds. %We will only consider \emph{strongly} correct decryption which requires that the decryption succeeds with probability $1$.
    For all security parameters $\lambda$ and message lengths $d$, and for all messages $m$, 
    $$\Pr[\Dec(\sk,\Enc(\sk,m))=m:(\sk,\params)\gets\Gen(1^\lambda,1^d)]=1.$$

        \item \textbf{Correct Evaluation:} We require that the evaluation function succeeds.  For every $c_0, c_1, c_2$ in the ciphertext space, define the auxiliary function $\Eval_{\algstyle{\lkg}}^{\algstyle{ciph}}(\sk,c_0,c_1,c_2)$ as follows. It first computes $m_b=\Dec(\sk,\ciph_b)$ for $b\in\{0,1,2\}$. If any of $m_0,m_1$ or $m_2$ is $\bot$, then $\Eval_{\algstyle{\lkg}}^{\algstyle{ciph}}$ computes to $\bot$. If $m_0,m_1,m_2\neq \bot$, then the output is $\lkg(\Dec(\sk,\ciph_0),\Dec(\sk,\ciph_1),\Dec(\sk,\ciph_2))$.

Our definition of ``perfect and strong'' correctness requires that the evaluation function $\Eval$ is always consistent with $\Eval_{\lkg}^{\algstyle{ciph}}$. That is, for all security parameters $\lambda$, all message lengths $d$, and all $c_0, c_1, c_2$ in the ciphertext space,
			\[\Pr\left[\Eval(\params,c_0,c_1,c_2)=\Eval_{\lkg}^{\algstyle{ciph}}(\sk,c_0,c_1,c_2):(\sk,\params)\gets\Gen(1^\lambda,1^d)\right]= 1.\]
\end{itemize}
\end{paragraph}

\begin{definition}[Leakage indistinguishablity security] An $\FRE$ scheme $(\Gen,\Enc,\Dec,\Comp)$ is \emph{statically secure} if, for all polynomial-time adversaries $\cA$,  $|\Pr[W_0]-\Pr[W_1]|$ is negligible, where $W_b$ is the event that $\cA$ outputs $1$ in the following interaction between $\cA$ and a ``challenger'' algorithm:
	\begin{itemize}
		\item $\cA$ produces two message sequences $\{m_1^{(L)},m_2^{(L)},\dots,m_q^{(L)}\}$ and $\{m_1^{(R)},m_2^{(R)},\dots ,m_q^{(R)}\}$ such that for all $i, j, k\in [q]$, $\lkg(m_i^{(L)},m_j^{(L)},m_k^{(L)})=\lkg(m_i^{(R)},m_j^{(R)},m_k^{(R)})$.
		\item The challenger samples $(\sk,\params)\gets\Gen(1^\lambda,1^d)$.  It then reveals $\params$ to $\cA$, as well as $c_1,\dots,c_q$ where 
		\[c_i=\begin{cases}
			\Enc(\sk,m_i^{(L)})&\text{if }b=0\\
			\Enc(\sk,m_i^{(R)})&\text{if }b=1.
		\end{cases}\]
		\item $\cA$ outputs a guess $b'$ for $b$.
	\end{itemize}
\end{definition}

 Here, ``statically" secure refers to the fact that the adversary must submit all of its challenge messages in a single batch, in contrast to an ``adaptive" adversary that may issue challenge messages adaptively depending on the previous ciphertexts received.

\begin{paragraph}{On perfectly and strongly correct evaluation.}
%We try to provide the reasoning behind the requirement for a stronger notion of correctness in order to prove our separation result.
While non-standard, our notions of perfect and strong correctness are important in facilitating our efficient online learner. Note that essentially the same conditions were used in the separation of \cite{BunZ16}. For us, these conditions prevent the adversary in the online learning model from either choosing a value of the randomness in $\Gen$ that causes the $\Eval$ procedure to fail, or by sending the learner malformed ciphertexts as examples.
%Consider online learnability of concept class encrypted under $\FRE$ scheme. It is possible for the adversary to force the learner to make more than $\poly(\abs{c},d)$ number of mistakes by either choosing randomness in $\Gen$ such that $\Eval$ procedure fails or outputs the wrong value or by sending the learner malformed ciphertexts. In order to address this issue, we construct an $\FRE$ with leakage that ensures perfect and strong correctness as mentioned previously.

\end{paragraph}

 Let $N=2^d$ and $[N] = \{1,\dots ,N\}$ be a plaintext space. Let $\lkg:[N]^3\to R$ be a (for now, abstract) leakage function with codomain $R$. 
Let $(\Gen, \Enc, \Dec, \Eval)$ be a statically secure $\FRE$ scheme with functionality $\lkg$, satisfying our perfect and strong correctness guarantees. We define a concept class $\EncThr$, which intuitively captures the class of threshold functions where examples are encrypted under the $\FRE$ scheme. Following~\cite{BunZ16}, let $t\in [N]$. Let $(\sk^r,\params^r)$ be the secret key and the public parameters output by $\Gen(1^\lambda,1^d)$ when run on the sequence of coin tosses $r$. We define

%Ideally we would like, for each threshold $t \in [N+1]$ and each $(\sk, \params) \gets \Gen(1^\lambda)$, to define a concept 
%\[f_{t, \sk, \params}(c) = \begin{cases}
%1 & \text{ if } \Dec(\sk,c) < t \\
%0 & \text{ otherwise.}
%\end{cases}\]

%We make a few technical modifications to ensure that $\EncThr$ is efficiently online learnable.

%\begin{enumerate}
%    \item In order for the learner to be able to use the $\Eval$ algorithm, it must be given the public parameters that are generated by the $\FRE$ scheme. We address this by attaching a set of public parameters to each example.
%    \item Only a subset of $(\sk,\params)$ pairs are generated using the $\Gen$ algorithm. In order to represent concepts, we need a reasonable encoding of these pairs. The encoding we choose is the polynomial length sequence of random coin tosses used by the $\Gen$ algorithm.
%\end{enumerate}

%We now formally describe the concept class $\EncThr$. Each concept is parameterized by a string $r$ that represents the coin tosses of the algorithm $\Gen$ and a threshold $t\in [N+1]$ for $N=2^d$. Let $(\sk^r,\params^r)$ be the secret key and the public parameters output by $\Gen(1^\lambda,1^d)$ when run on the sequence of coin tosses $r$. Let 

\[f_{t, r}(\ciph,\params) = \begin{cases}
1 & \text{ if } (\params=\params^r)\land (\Dec(\sk^r,c)\neq\perp)\land(\Dec(\sk^r,c)<t) \\
0 & \text{ otherwise.}
\end{cases}\]Note that given $t$ and $r$, the concept $f_{t,r}$ can be efficiently evaluated.

% \begin{definition}\rathin{Where should this go??}
% Let $\FRE=(\Gen,\Enc,\Dec,\Eval)$ be an instantiation of a Function Revealing Encryption. Let $S=\{(x_1,y_1)\dots ,(x_n,y_n)\}$ denote a dataset with $n$ points. We use \\$\Enc(k,S^n)=\{(\Enc(k,x_1),y_1),\dots (\Enc(k,x_n),y_n)\}$ to denote the dataset $S^n$ encrypted with secret key $\sk=k$.
% \end{definition}

\subsection{Properties of leakage function for separation}

We now describe the properties a leakage function that allow us to separate online learning from private PAC learning. We begin by defining the arity-3 leakage function induced by an abstract ``distance'' function.

Let $\inst=[2^d]$. Let $\dst:\inst\times \inst\to \{-d,\dots ,0,\dots ,d\}$. Think of $\dst$ as an abstract measure of signed distance between inputs, whose absolute value ranges from $\{0, \dots, d\}$. For example, our construction will eventually take $\dst(x, y) = \operatorname{sgn}(x-y) \lfloor \log_2 |x - y| \rfloor$.%Here, the function $\dst$ captures some notion of signed `distance' between the inputs whose absolute value ranges from $\{0,\dots ,d\}$. $\dst$ outputs the signed value of this `distance' between the inputs. We define the sign of $\dst$ such that, for $x<y$, $dst(x,y)$ is negative and $\dst(y,x)$ is positive.
The sign of the distance function corresponds to the order of the inputs. That is,
\[\dst(x, y) \begin{cases}
    < 0 & \text{ if } x < y \\
    = 0 & \text{ if } x = y \\
    > 0 & \text{ if } x > y.
\end{cases}\]

\begin{definition}[Distance-induced leakage]\label{def:leak}
Let $\dst$ be a signed distance function as described above. We define the leakage function $\lkg: \inst^3 \to \{<,>,=\}^3\times \{0,1\}$ induced by $\dst$ as follows.
\[
\lkg(x_0,x_1,x_2)=\left(\Comp(x_0,x_1),\Comp(x_1,x_2),\Comp(x_0,x_2),\Ind(\vert\dst(y_0,y_1)\vert<\vert\dst(y_1,y_2)\vert\right),
\]
where $\Comp$ indicates comparison, i.e., $``<",``>"$ or $``="$, and $y_0 \le y_1 \le y_2$ are the inputs $x_0, x_1, x_2$ in sorted order.
\end{definition}

That is, $\lkg$ reveals the pairwise comparisons between the inputs $x_0,x_1,x_2$. It also reveals a bit indicating whether the smaller two plaintexts are closer to each other than the larger two plaintexts.

We now identify the conditions on $\dst$ and $\lkg$ that allow us to prove a computational separation.

\subsubsection{Sufficient Leakage for Online Learning}

The online learner we eventually construct for $\EncThr$ is based on the halving algorithm, which exploits each mistake to noticeably decrease the space of remaining consistent hypotheses. Our sufficient condition for online learnability, stated as follows, ensures that each mistake is guaranteed to lead to progress.

\begin{definition}\label{def:bisection}
    Let $\inst = [2^d]$. Let $\dst$ and $\lkg$ be the functions as defined in Definition \ref{def:leak}. We say $\dst$ has the \emph{bisection property} if for all $x, y, z \in X$ such that $x<y<z$ either $|\dst(y,x)|<|\dst(z,x)|$ or $|\dst(z,y)|<|\dst(z,x)|$. Additionally, $\dst(x,y)=0$ implies that $x=y$.

    We also say that $\lkg$ has the bisection property if it is induced by a distance function $\dst$ with the bisection property.
%    Moreover, we say that a function $\lkg$ satisfies \emph{bisection property} if it is derived from a $\dst$ function that also satisfies the bisection property.
\end{definition}

\subsubsection{Sufficient condition for Hardness of Private Learning}

\begin{definition}\label{def:swaplog}
    Let $\lkg$ be a leakage function as defined in Definition \ref{def:leak}. We say that $\lkg$ has the \emph{log-invariance} property if there exists a polylogarithmic function $\kappa$ and polynomial $\zeta$ such that for every $n \in \mathbb{N}$, the following holds. With probability at least $1/\zeta(n)$ over a set $S = \{x_1, \dots, x_n\}$ of points drawn uniformly at random from $X$, for every $i \in [n]$, there exists an efficient procedure that outputs a set $R_i$ with $|R_i| \le \kappa(n)$ such that:

    \begin{enumerate}
        \item For all $m_1, m_2 \in S \setminus R_i$ and all $z, z'$ in the interval $(x_{i-1}, x_{i+1})$, we have $\lkg(m_1, m_2, z) = \lkg(m_1, m_2, z')$ and similarly for all permutations of the inputs.
        \item For all $m \in S \setminus R_i$ and all $z_1, z_2, z_1', z_2'$ in the interval $(x_{i-1}, x_{i+1})$, we have $\lkg(m, z_1, z_2) = \lkg(m, z_1', z_2')$ and similarly for all permutations of the inputs.
    \end{enumerate}
\end{definition}

That is, with high probability over uniformly random sets $S$ of plaintexts, the leakage function is robust in the following sense. For every $i$, there is a small (polylogarithmically sized) set of points $R_i$ that can be removed from $S$ such that the leakage function reveals nothing about points in $(x_{i-1}, x_{i+1})$ via their relationship to points in $S \setminus R_i$.

This is an admittedly technical condition. Intuitively, the ``reveals nothing about points in $(x_{i-1}, x_{i+1})$'' condition helps in our lower bound argument to construct pairs of adversarial sequences that respect the constraints imposed by the leakage function. The fact that the set of points $R_i$ has only polylogarithmic size is important for us to use group differential privacy (over removing all of the points in $R_i$) to preserve an inverse polynomial distinguishing advantage.

%!TEX root=main.tex

\section{Efficient Online Learner} \label{sec:online-learner}
We now argue that $\EncThr$ is efficiently online learnable whenever the $\lkg$ function has the bisection property. %, which the $\Eval$ algorithm efficiently evaluates between any three ciphertexts. 
%In order to minimize the number of mistakes made, our online learner $L$ (Algorithm \ref{alg:ol}) predicts the label of an example in two stages. 
Our online learner $L$ (Algorithm \ref{alg:ol}) operates in two phases. In the first phase, $L$ guesses the label $0$ for all examples until it makes its first mistake. This first mistake reveals the correct set of $\params^r$ that characterizes the fixed concept. Once $L$ recovers the correct set of parameters, it enters a second phase where it runs a variant of the halving algorithm. That is, it keeps track of the largest example with a positive ($1$) label and the smallest example with a negative ($0$) label. 
In every iteration, $L$ matches the parameters of the received example with $\params^r$ to check if the example received is malformed; if so, it predicts label $0$.
Otherwise, if the example has the correct public parameters, $L$ uses the $\Eval$ function to check if the plaintext corresponding to the plaintext is smaller than the plaintext corresponding to largest positive example seen so far, predicting $1$ if this is the case. By the guarantee of perfect and strong correctness of the $\Eval$ algorithm, the learner is guaranteed to predict correctly in this case. Similarly, $L$ labels an example with $0$ if the plaintext corresponding to the example is greater than the the plaintext corresponding to the smallest example with a negative label. Finally, in the case that the example lies between the largest positive example and the smallest positive example, $L$ checks if it is closer to the largest positive example or the smallest negative example using $\Eval$. It then predicts a label according to whichever point it is closer to.

Since $\lkg$ satisfies the bisection property, we know that if $L$ makes a mistake in this final case, then the underlying distance $\dist$ between the largest positive and smallest negative example reduces by $1$. Since $\dist$ takes absolute values between $0$ and $d$, the learner can make at most $d+1$ mistakes in this phase.

We now formalize our online learner $L$. We assume that the first phase is over, so that we've received the correct public $\params^r$. We also assume that we've received at least one positive example and at least one negative example, which will be the case after at most $2$ more mistakes.

Simplifying and abusing notation somewhat, let $\Comp(c_0, c_1)$ below denote the information revealed by the leakage evaluation function $\Eval(\params^r, c_0, c_1, c_2)$ about how (the plaintexts underlying) ciphertexts $c_0, c_1$ compare. Similarly, let $\algstyle{DistComp}(c_0, c_1, c_2)$ denote the information revealed about whether (the plaintexts underlying) $c_0, c_1$ are closer together, or if $c_1, c_2$ are closer.

\begin{algorithm}[!ht]
   \caption{Online learner for $\EncThr$ where $\lkg$ has the bisection property}
\label{alg:ol} 
\textbf{Initialize:} Public parameters $\params^r$, largest positive example $x_+ = (c_+, \params^r)$, and smallest negative example $x_- = (c_-, \params^r)$

\KwIn{Stream of $x_i = (c_i, \params_i)$ in rounds $i = 1, 2, \dots$ chosen by an adversary, followed by labels $y_i$}
%\Comment{Obtain correct set of $\params$}
%\While{$y_i\neq 1$}
%{Output $\hat{y_i}=0$}

%Output predictions and set the largest example with label $1$ as $x_+=(c_+,\params)$ and the smallest example with label $0$ as $x_-=(c_-,\params)$

%\Comment{Check if $x_i$ is malformed.}
\uIf{$\params_i \ne \params^r$ or $\Eval(\params^r,c_+,c_i,c_-)=\bot$}
{
Predict $\hat{y_i}=0$
}

%\Comment{Check if plaintext corresponding to $c_i$ is less than or equal to plaintext corresponding to $c_+$}
\uElseIf{$\Comp(c_i,c_+) = ``<"$ or $\Comp(c_i,c_+) = ``="$}
{
Predict $\hat{y_i}=1$ 
}
%\Comment{Check if plaintext corresponding to $c_i$ is greater than or equal to plaintext corresponding to $c_-$}
\uElseIf{$\Comp(c_i,c_-) = ``>"$ or $\Comp(c_i,c_-) =``="$}
{
Predict $\hat{y_i}=0$ 
}
\Else
{
%\Comment{$\dst^{\algstyle{ciph}}(\sk,c_i,c_+)<\dst^{\algstyle{ciph}}(\sk,c_-,c_i)$}
\uIf{$\algstyle{DistComp}(c_+,c_i,c_-)=1$}
{Predict $\hat{y}_i = 1$

\If{$\hat{y_i}\neq y_i$}
{
Update $\ciph_- \gets \ciph_i$
}
}
%\Comment{$\dst^{\algstyle{ciph}}(\sk,c_i,c_+)\geq\dst^{\algstyle{ciph}}(\sk,c_-,c_i)$}
\uIf{$\algstyle{DistComp}(c_+,c_i,c_-)=0$}
{Predict $\hat{y}_i = 0$

\If{$\hat{y_i}\neq y_i$}
{
Update $\ciph_+ \gets \ciph_i$
}
}
}
\end{algorithm}

\subsection{Analysis}

We will use a potential argument to show that the online learner makes at most $d+4$ mistakes. The potential function is simply the absolute value of the distance between the plaintexts underlying $c_+$ and $c_-$. That is, for a fixed target function (and hence, choice of secret key $\sk$), define
\[D(c_+, c_-) = |\dst(\Dec(\sk,c_+), \Dec(\sk,c_-)|.\]
(If either decryption fails, then set $D(c_+, c_-) = \bot$.)

We know that $t\in [2^d]$ by the definition of the concept class. 
We assume that the learner knows the correct set of $\params$ in our analysis since it takes at most one mistake to discover this.

Assume without loss of generality that $c_-=\Enc(\sk,2^d)$ and $c_+=\Enc(\sk,1)$, as  different choices of the underlying plaintexts will only improve the analysis below. Note that the learner makes at most two mistakes to get these initial values.
Thus, we assume that $D(c_+,c_-)\leq d$ at the beginning of the algorithm.
Because of the bisection property of the $\dst$ function, we know that every time the learner makes a mistake (and hence, either $c_+$ or $c_-$ gets updated), the value of $D(c_+,c_-)$ shrinks by at least one. Also, $D(c_+,c_-)$ can never fall below $0$.

\begin{lemma}
If Algorithm~\ref{alg:ol} has made $m$ mistakes, then $D(c_+,c_-) \leq d-m$.
\end{lemma}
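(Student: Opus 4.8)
The plan is to prove the lemma by induction on the number of mistakes $m$, using the distance $D(c_+, c_-)$ as a potential function, exactly as set up in the paragraphs preceding the statement. The base case $m = 0$ is handled by the normalization already in place: by perfect and strong correctness of $\Eval$, once the first phase is complete the learner has the correct $\params^r$, and after at most two more mistakes we may assume $c_- = \Enc(\sk, 2^d)$ and $c_+ = \Enc(\sk, 1)$, so that $D(c_+, c_-) = |\dst(2^d, 1)| \le d$. (The bound $|\dst| \le d$ is part of the hypothesis that $\dst$ maps into $\{-d, \dots, d\}$.)

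For the inductive step, suppose the learner has made $m$ mistakes and $D(c_+, c_-) \le d - m$, and consider the next mistake. I would go through the cases of Algorithm~\ref{alg:ol} and observe that $c_+$ or $c_-$ is updated \emph{only} in the final \texttt{Else} branch — i.e., when the incoming example $c_i$ (with correct public parameters and well-formed, so $\Eval \ne \bot$) satisfies $c_+ < c_i < c_-$ in the plaintext order. In the other branches the prediction is forced to be correct by strong correctness of $\Eval$ (a malformed ciphertext or wrong $\params$ means the concept outputs $0$; $c_i \le c_+ < t$ forces label $1$; $c_i \ge c_- \ge t$ forces label $0$), so no mistake occurs there and the potential does not change. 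Thus a mistake on step $i$ implies we are in the $c_+ < c_i < c_-$ case and either $c_-$ is updated to $c_i$ (when $\algstyle{DistComp}(c_+, c_i, c_-) = 1$, i.e. the leakage says $|\dst| $ between $c_+,c_i$ is smaller than between $c_i,c_-$, but the learner predicted $1$ and was wrong) or $c_+$ is updated to $c_i$ (symmetrically). In either case I invoke the bisection property of $\dst$: writing $p_+ = \Dec(\sk,c_+)$, $p_i = \Dec(\sk,c_i)$, $p_- = \Dec(\sk,c_-)$ with $p_+ < p_i < p_-$, the bisection property says $|\dst(p_i, p_+)| < |\dst(p_-, p_+)|$ or $|\dst(p_-, p_i)| < |\dst(p_-, p_+)|$. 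I need to argue that the branch actually taken corresponds to the inequality that holds — and here the key point is that $\algstyle{DistComp}$ is exactly the bit $\Ind(|\dst(y_0,y_1)| < |\dst(y_1,y_2)|)$ from Definition~\ref{def:leak}, so when the learner updates (because it was wrong), the new potential is the strictly smaller of the two sides. Since $\dst$ is integer-valued, "strictly smaller" means "smaller by at least $1$," so $D$ drops by at least one, giving $D(c_+, c_-) \le d - m - 1 = d - (m+1)$.

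The main obstacle I anticipate is being careful about which of the two quantities $|\dst(p_i,p_+)|$, $|\dst(p_-,p_i)|$ the bisection property guarantees is small in the specific sub-case where the learner errs, and reconciling this with what $\algstyle{DistComp}$ reports and which of $c_+, c_-$ gets replaced. Concretely: if $\algstyle{DistComp}(c_+,c_i,c_-) = 1$ the leakage asserts $|\dst(p_+,p_i)| < |\dst(p_i,p_-)|$ — but this is not literally the bisection inequality unless $|\dst(p_i,p_+)| < |\dst(p_-,p_+)|$, which one must check follows (the bisection property together with the reported comparison forces the "short" side to be the one we move toward, since moving $c_-$ down to $c_i$ makes the new distance $|\dst(p_+,p_i)|$, and we need this $< |\dst(p_+,p_-)|$). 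I would spell out this small deduction: the bisection property says at least one of the two "halves" is strictly shorter than the whole; $\algstyle{DistComp}$ tells us \emph{which} half is shorter (or, if they tie, the bit may go either way but then the other half is also strictly shorter than the whole, so the update is still safe); and the learner always updates the endpoint so that the new interval is that (strictly shorter) half. Everything else — the case analysis over the \texttt{If}/\texttt{ElseIf} branches and the "$D$ never goes below $0$" remark — is routine and follows directly from perfect/strong correctness and the induction hypothesis.
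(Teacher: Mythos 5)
Your proposal is correct and follows essentially the same route as the paper's proof: induction on $m$ with $D(c_+,c_-)$ as the potential, a case split on the value of $\algstyle{DistComp}$ determining which endpoint is replaced, and the observation that the bisection property forces the minimum of the two ``halves'' (which is exactly the half the learner moves to) to be strictly---hence, by integrality, at least one---less than the current potential. The extra care you take in checking that the non-\texttt{Else} branches cannot produce mistakes and in reconciling the $\algstyle{DistComp}$ bit with the bisection inequality is exactly the reasoning the paper leaves implicit.
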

\begin{proof}
We prove this statement by induction on $m$. As our base case, take $m = 0$; then $D(c_+,c_-) \le d$ as observed above.

Now suppose the claim holds for $m$ mistakes. We now show that after an additional mistake, $D(c_+',c_-') \leq d-(m+1)$ where $c_+'$ and $c_-'$ are the updated examples.

Let $i$ be the iteration in which the mistake is made. Following the algorithm, there are two cases we need to analyze.

\paragraph{Case 1: $D(c_+, c_i) < D(c_i, c_-)$.} Here, the learner incorrectly predicted $\hat{y}_i = 1$, giving the update $c_-' = c_i$. The bisection property of $\dst$ guarantees that $D(c_+, c_i) = \min\{D(c_+, c_i),  D(c_i, c_-)\} < D(c_-, c_+)$. Since $D(c_-, c_+) \le d-m$ by our inductive hypothesis, we have that $D(c_+', c_-') = D(c_+, c_i) \le d - (m+1)$.

\paragraph{Case 2: $D(c_+, c_i) \ge D(c_i, c_-)$.} In this case, the learner incorrectly predicted $\hat{y}_i = 0$, resulting in the update $c_+' = c_i$. Then again, the bisection property guarantees that $D(c_i, c_-) \le \min\{D(c_+, c_i),  D(c_i, c_-)\} < D(c_-, c_+)$. Since $D(c_-, c_+) \le d-m$ by our inductive hypothesis, we have that $D(c_+', c_-') = D(c_i, c_-) \le d - (m+1)$.

% $\dst^{\algstyle{ciph}}(\sk,c_-,c)<\dst^{\algstyle{ciph}}(\sk,c,c_+)$ and $\dst^{\algstyle{ciph}}(\sk,c_-,c)>\dst^{\algstyle{ciph}}(\sk,c,c_+)$ where $(c,\params)$ is the example which the learner predicts incorrectly after making $d-m$ mistakes.

%     We first analyze the first case where $\dst^{\algstyle{ciph}}(\sk,c_-,c)<\dst^{\algstyle{ciph}}(\sk,c,c_+)$.

%     As per the algorithm, the learner predicts $\hat{y}=0$ on $(\ciph,\params)$ but as per our setup, it's an incorrect prediction. Following the bisection property of the $\dst$ function, we know that $\dst^{\algstyle{ciph}}(\sk,c_-,c)<\dst^{\algstyle{ciph}}(\sk,c_-,c+)$. Since, $\dst^{\algstyle{ciph}}(\sk,c_-,c_+)\leq d-m$, one can conclude that $\dst^{\algstyle{ciph}}(\sk,c_-,c)\leq d-(m+1)$

%     Since $(\ciph,\params)$ is the largest example with a positive label known to the learner, the online learning algorithm updates $c_+\gets c$.

%     This immediately implies that the $\dst$ between the largest example with a positive label and smallest example with a negative label is at most $d-(m+1).$
    
%    \item We can make a symmetric argument for the case when $\dst^{\algstyle{ciph}}(\sk,c,c_+)<\dst^{\algstyle{ciph}}(\sk,c_-,c)$

This completes the proof of the inductive step.
\end{proof}

Combining this lemma with the fact that $D$ can never fall below $0$, we obtain the following.

\begin{theorem}
Suppose $\lkg$ has the bisection property. Then Algorithm \ref{alg:ol} (with preprocessing described above) learns the associated concept class $\EncThr$ with mistake bound $d+4$ and polynomial runtime per example.
\end{theorem}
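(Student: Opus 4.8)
The plan is to formalize the intuition already laid out in the text: run the preprocessing steps to fix $\params^r$ and to obtain initial positive and negative examples, then invoke the preceding lemma to bound the number of mistakes in the main halving phase, and finally add up the contributions from each phase. First I would account for the preprocessing mistakes explicitly. In the first phase the learner predicts $0$ on every example until the first mistake; this first mistake reveals $\params^r$ (by correctness of $\Eval$ and $\Dec$, a malformed ciphertext or wrong public parameters would force label $0$, which is consistent), costing one mistake. Then the learner waits to see at least one genuine positive example and at least one genuine negative example; since the target concept $f_{t,r}$ has $t\in[2^d]$ with $1\le t\le 2^d$, such examples exist, and acquiring each of these costs at most one more mistake, for at most two additional mistakes. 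So after at most $3$ mistakes the learner is in the state assumed by Algorithm~\ref{alg:ol}, with valid $c_+,c_-$ and $D(c_+,c_-)\le d$ (using the worst-case bound, which as noted is only improved if the first observed extreme examples are not $1$ and $2^d$).

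Next I would combine this with the lemma. The lemma states that after $m$ mistakes in the halving phase, $D(c_+,c_-)\le d-m$; since $D$ is defined as an absolute value of a signed distance with range $\{-d,\dots,d\}$, we always have $D(c_+,c_-)\ge 0$, so the halving phase can sustain at most $d$ mistakes. Actually one must be slightly careful about an off-by-one: after $d$ mistakes $D(c_+,c_-)\le 0$, hence $D(c_+,c_-)=0$, which by the bisection property's second clause ($\dst(x,y)=0\implies x=y$) forces $c_+$ and $c_-$ to encrypt the same plaintext, so no example can fall strictly between them and the $\Else$ branch is never reached again; thus the halving phase contributes at most $d$ mistakes (one can also just say $d+1$ to be safe, which still fits the claimed bound). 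Adding the at most $3$ preprocessing mistakes to the at most $d+1$ halving-phase mistakes gives the claimed bound of $d+4$.

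For the runtime claim, I would observe that in each round the learner performs a constant number of $\Comp$ and $\algstyle{DistComp}$ queries, each implemented by a single call to $\Eval(\params^r,\cdot,\cdot,\cdot)$, an equality check $\params_i\stackrel{?}{=}\params^r$, and a constant number of updates to the stored examples $c_+,c_-$. Since $\Eval$ runs in time polynomial in $\lambda$ and $d$ by definition of the $\FRE$ scheme, and comparing public parameters and copying ciphertexts takes polynomial time, the per-round runtime is $\poly(\lambda,d)$, which is polynomial in the size parameter and in $|\concept|$ (the description size of $f_{t,r}$ includes $r$, hence at least $\lambda$, and $d$). This matches the definition of an efficient online learner.

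The main obstacle is not any single hard step but getting the bookkeeping of the bisection property exactly right in the lemma's inductive step — specifically, verifying that in \emph{both} branches of the $\Else$ case (whether $\algstyle{DistComp}$ reports $1$ or $0$) a mistake genuinely forces the minimum of the two sub-distances $D(c_+,c_i)$ and $D(c_i,c_-)$ to be strictly smaller than the current $D(c_+,c_-)$, and that the branch the algorithm commits to after a mistake is precisely the one achieving that minimum. This is exactly where the bisection property is invoked: for $x<y<z$, at least one of $|\dst(y,x)|<|\dst(z,x)|$ or $|\dst(z,y)|<|\dst(z,x)|$ holds, and the algorithm's use of $\algstyle{DistComp}$ (equivalently, the $\Ind(|\dst(y_0,y_1)|<|\dst(y_1,y_2)|)$ bit in $\lkg$) is set up so that on a mistake it retains the closer endpoint, i.e., the one realizing the strictly smaller distance. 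Once that correspondence is pinned down — which the text's Case 1 / Case 2 analysis already does — the rest is the routine induction and arithmetic sketched above.
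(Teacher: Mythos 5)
Your proof is correct and follows essentially the same route as the paper: it combines the potential-function lemma ($D(c_+,c_-)\le d-m$ after $m$ halving-phase mistakes) with the nonnegativity of $D$, and adds the at most three preprocessing mistakes to reach the $d+4$ bound. The extra care you take with the off-by-one and the per-round runtime accounting is consistent with, and slightly more explicit than, the paper's argument.
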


%\begin{proof}
%By induction, we showed that after $m$ mistakes, $\dst^{\algstyle{ciph}}(\sk,c_-,c_+) \leq d-m$. Since $t\in [2^d]$, we know that it takes at most $d+3$ mistakes for the termination condition to be met (assuming the learner makes two mistakes at the beginning of the game to get the correct set of $\params$, $c_-$ and $c_+$). This proves that our online algorithm makes only $\poly(d)$ number of mistakes. 

%Since $\Eval$ is a poly time algorithm and the algorithm makes at most one call at every round, we can conclude that our online learner is computationally efficient as well.
%\end{proof}

%!TEX root=main.tex
\section{Hardness of Privately PAC Learning  $\EncThr$} \label{sec:hardness}
We now prove that there is no computationally efficient private PAC learner for $\EncThr$ whenever the leakage function $\lkg$ is log-invariant (Definition~\ref{def:swaplog}). %We show that the existence of an efficient private PAC learner for the concept class $\EncThr$ implies the existence of an adversary with non-negligible advantage. We construct a sequence of challenge messages by sampling points uniformly at random and then use the swap-log property of the $\lkg$ function to remove a small subset of points so that the challenge messages satisfy the leakage condition in the security game.
The goal of this section is to prove the following statement.

\begin{theorem}
Let $\FRE = (\Gen,\Enc,\Dec,\Eval)$ be a statically secure $\FRE$
scheme where $\Eval$ efficiently evaluates a log-invariant leakage function $\lkg$. Then there is no polynomial-time differentially private PAC learner for the associated concept class $\EncThr$. %The existence of an efficient private PAC learner for the concept class $\EncThr$ implies the existence
%of an adversary that wins the adversary-challenger game with an advantage of $1/poly(n)$ for the $\FRE$ scheme. 
\end{theorem}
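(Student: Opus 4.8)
The plan is to argue by contradiction. Suppose $M$ is a polynomial-time $(\alpha,\beta)$-accurate, $(\eps,\delta)$-differentially private PAC learner for $\EncThr$, with $\alpha$ small, $\beta$ chosen polynomially small (this costs only a $\poly$ factor in $M$'s running time), $\eps$ a sufficiently small constant, $\delta$ negligible, and polynomial sample complexity $n$ (with $\secparam$ polynomially related to $d$ as in the definition of $\EncThr$). From $M$ we will build a polynomial-time \emph{static} adversary $\cA$ against the leakage-indistinguishability security of $\FRE$, contradicting the hypothesis. Throughout, we have $M$ learn the ``middle'' target concept $f_{N/2,r}$ under the distribution $\cD$ that outputs $(\Enc(\sk^r,m),\params^r)$ for $m \sim U([N])$; by correct decryption, a sample labeled by this concept is just $(\Enc(\sk^r,x_i),\params^r)$ together with the bit $\ind[x_i < N/2]$, so the adversary (who chooses the plaintexts) always knows the labels.

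The first ingredient is an accuracy-to-advantage step. If $h$ is an $\alpha$-accurate hypothesis output by $M$ on a sample $S = \{x_1 < \dots < x_n\}$, write $g_h(m)$ for the probability that $h$ labels a fresh encryption of $m$ by $1$; accuracy says $g_h$ is within $L^1$ distance $\alpha N$ of the monotone step function $\ind[\cdot < N/2]$. Averaging $g_h$ over the $n+1$ intervals cut out by the sample points and telescoping the successive interval-averages, one extracts: for a uniformly random index $i \in [n]$, with probability at least $1/\poly(n)$ over the sample, $M$'s coins, and the choice of $i$, the hypothesis $h$ separates a uniformly random encryption of a plaintext in $(x_{i-1},x_i)$ from a uniformly random encryption of a plaintext in $(x_i,x_{i+1})$ by an additive $\Omega(1/n)$, in the ``left-minus-right'' direction matching the monotone target. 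Consequently, outputting a fixed function of $h$'s label on the challenge ciphertext gives $\cA$ distinguishing advantage $\Omega(1/n)$ conditioned on this event.

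The second ingredient combines log-invariance with group privacy. By Definition~\ref{def:swaplog}, with probability at least $1/\zeta(n)$ the sample $S$ is such that for every $i$ there is an efficiently computable set $R_i$ of size at most $\kappa(n)=\polylog(n)$ (necessarily containing $x_i$) so that the leakage function cannot separate any two plaintexts of $(x_{i-1},x_{i+1})$ through their relations to $S\setminus R_i$. Given a (guessed) good index $i$, $\cA$ submits to the $\FRE$ challenger two message sequences that agree on $S\setminus R_i$ (plus fixed padding consisting of copies of some element of $S\setminus R_i$, to restore the sample size) and that differ only in a single challenge message, lying in $(x_{i-1},x_i)$ in the left sequence and in $(x_i,x_{i+1})$ in the right sequence; since there is only one challenge message, condition~(1) of Definition~\ref{def:swaplog} makes these sequences leakage-equivalent, so $\cA$ is legal. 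On receiving $\params^r$ and the ciphertexts, $\cA$ assembles the labeled training set from the non-challenge ciphertexts, runs $M$ to obtain $h$, evaluates $h$ on the challenge ciphertext, and outputs its guess. The training set $\cA$ feeds $M$ differs from a genuine i.i.d.\ sample only in the $\le \kappa(n)$ rows of $R_i$, and the ``$\Omega(1/n)$-separation at index $i$'' event is literally the \emph{same} set of hypotheses in the genuine-sample world and in $\cA$'s run (it references only the fixed values $x_{i-1},x_i,x_{i+1}$, which are the same whether or not they sit in the training set). Hence group privacy (Lemma~\ref{lemm:group}) together with post-processing (Lemma~\ref{lemm:post-processing}) transfers the event to $\cA$'s run up to a multiplicative factor $e^{O(\kappa(n)\eps)}$; for $\kappa(n)=O(\log n)$ and $\eps$ a small enough constant (equivalently $\kappa(n)\eps = O(\log n)$) this factor is $\poly(n)$, so the event still has probability $1/\poly(n)$ and $\cA$'s overall advantage is $1/\poly(n) = 1/\poly(\secparam)$ --- a non-negligible break of static security, a contradiction.

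I expect the accuracy-to-advantage step to be the crux. PAC accuracy only controls error averaged over the whole domain, and a uniform sample can have anomalously short gaps that in principle absorb all of $h$'s error (note that $\alpha n$ is only $O(\log n)$, not $o(1)$, for an efficient learner), so it is delicate to argue that a noticeable separation actually occurs on a \emph{typical} consecutive pair of intervals rather than being washed out; getting this right requires combining concentration on the gap lengths of a uniform sample with a careful choice of which quantity to preserve (a probability, or a truncated expectation of the per-interval separation) so that the transfer survives the group-privacy blow-up without collapsing, and it is precisely here that the argument must either localize cleanly around a single identifiable sample point or average robustly over all $n$ of them. Tied to this is the quantitative accounting that forces $\kappa(n)=O(\log n)$, $\eps$ a sufficiently small constant, and $\beta$ polynomially small, so that the starting $\Omega(1/n)$ advantage is not swamped. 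The remaining pieces --- legality of $\cA$'s challenge sequences under log-invariance, padding without disturbing leakage, and bookkeeping of the $(\eps,\delta,\beta)$ parameters --- are routine.
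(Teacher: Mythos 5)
Your overall strategy matches the paper's: use accuracy to extract a pair of adjacent sample-gaps that the hypothesis distinguishes by $\Omega(1/n)$, use log-invariance to find a small set $R_i$ whose removal makes the two challenge worlds leakage-equivalent, and use group privacy plus post-processing to transfer the distinguishing event to the learner run on $S\setminus R_i$. However, there is a genuine gap in your adversary design, and it is exactly the failure mode the paper's proof is built to avoid. Your adversary submits a \emph{single} challenge message (in $(x_{i-1},x_i)$ on the left, in $(x_i,x_{i+1})$ on the right) and decides from $h$'s label on that one ciphertext. You then compute the overall advantage as $\Pr[\text{good event}]\cdot\Omega(1/n)$. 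But the good event --- that the randomly chosen index $i$ is one where $h$ separates the two gaps in the correct direction --- only has probability $1/\poly(n)$; on its complement, which has probability close to $1$, the conditional advantage of a label-based guess is proportional to the \emph{signed} quantity $p_i-p_{i+1}$ and can be negative. A small negative contribution on the overwhelming-probability complement can wipe out the $1/\poly(n)$ positive contribution, so the unconditional advantage is not controlled. Nor can you rescue this by averaging the signed advantage over $i$ (telescoping to $\frac{1}{n}(p_1-p_n)$): group privacy preserves probabilities of \emph{events} up to a factor $e^{\kappa(n)\eps}$, not expectations of signed quantities, so a positive signed expectation in the genuine-sample world does not survive the switch to $S\setminus R_i$.

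The paper resolves this with a different challenge structure: \emph{two} challenge messages per world, drawn either from the same bucket (left world) or from the two adjacent buckets (right world), with the guess ``same'' iff $h$ agrees on the two challenge ciphertexts. A direct computation shows the success probability is $\frac{1}{2}\bigl(1+(p_i-p_{i+1})^2\bigr)$, which is at least $\frac{1}{2}$ for \emph{every} realization and every index --- the bad cases contribute zero advantage rather than negative advantage --- so the $1/\poly(n)$-probability event where $|p_i-p_{i+1}|\ge 1/2n$ yields a net advantage of $1/\poly(n)$. (Note also that the event the paper transfers through group privacy involves $|p_i-p_{i+1}|$, an absolute value, precisely because the sign need not be preserved; this is compatible with the two-message test but not with your one-message test.) Your other concern, that short gaps might absorb all the error, is not actually where the difficulty lies: the paper's Lemma~\ref{lemma:jump} handles the accuracy-to-gap step with a length-weighted pigeonhole argument, no gap-length concentration needed at that stage.
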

We first provide a proof sketch of the theorem statement. The idea is to show that if there were an accurate, efficient, differentially private PAC learner for $\EncThr$, then we could use it to construct an efficient adversary that violates the $\FRE$ scheme. %The idea is to show that the following three conditions cannot simultaneously occur:
%\begin{enumerate}
%\item Security of $\FRE$.
%\item Accuracy of an efficient private PAC learner of $\EncThr$.
%\item Privacy of the efficient private PAC learner.
%\end{enumerate}
%We make an argument that all three conditions cannot coexist at the same time. Specifically, we show that assuming the accuracy and privacy of a PAC learner for $\EncThr$ violates the security of the $\FRE$ scheme.

\paragraph{Implications of accuracy of the learner.}
Let $N=[2^d]$ be the space of the plaintexts. Fix the target threshold to $t=N/2$ and construct labeled examples $S=\{(x_1=(\params,\Enc(\sk,m_1)),y_1)$ \\ $,\dots ,(x_n=(\params,\Enc(\sk,m_n)),y_n)\}$ by sampling plaintexts $\{m_1,\dots m_n\}$ uniformly at random. We can break up the plaintexts space into buckets of the form $B_i=[m_i,m_{i+1})$. Suppose $L$ is an $(\alpha,\beta)$-accurate PAC learner. Then with probability at least $1-\beta$, the hypothesis $h$ produced by $L$ can distinguish encryptions of messages $m< t$ from encryptions of messages $m\ge t$ with accuracy at least $(1-\alpha)$. By an averaging argument, there exists an index $i\in [n]$ such that the hypothesis can distinguish between consecutive points sampled from bucket $B_{i-1}$ versus points sampled from $B_{i}$ with probability at least $(1-\alpha)/n$. The ability of the learner to distinguish between consecutive points is crucial for designing the adversary for the $\FRE$ security game. Knowing this, let's see how we can construct an adversary that violates the security of the $\FRE$. A natural first attempt is to construct a pair of challenge sequences $m_1<\dots m_{i-1}<m_i^{(L)}< m_{i+1}\dots m_n$ and $m_1<\dots m_{i-1}<m_i^{(R)}< m_{i+1}\dots m_n$, where $m_i^{(L)}$ is randomly chosen from $B_{i-1}$ and $m_i^{(R)}$ is randomly chosen from $B_i$. Let's assume for now that for any indices of points $u,w\in [n]$, $\lkg(m_u,m_i^{(L)},m_w)= \lkg(m_u,m_i^{(R)},m_w)$ (we will show later how privacy of the learner helps us achieve this). Then if $h$ can distinguish $B_{i-1}$ from $B_i$ , the adversary can distinguish the two sequences. Unfortunately, this approach doesn't quite work. The hypothesis $h$ is only guaranteed to distinguish $B_{i-1}$ from $B_i$ with probability $(1-\alpha)/n$. If $h$ fails to distinguish the buckets or distinguishes them in the opposite direction then, the adversary’s advantage is lost.

Thus, following the approach of~\cite{BunZ16}, we consider sequences of challenge messages that differ on two messages. For the ``left" challenge sequence our adversary samples two messages from the same of either $B_{i-1}$ or $B_i$. For the ``right" challenge sequence our adversary samples from one message from each bucket $B_{i-1}$ and $B_i$. Both challenge sequences are completed with the same messages $m_1,\dots m_{i-1},m_{i+1},\dots m_n$. Let $c^0$ and $c^1$ be the ciphertexts corresponding to the messages that are different between the two sequences. If the learned hypothesis $h$ agrees on $c^0$ and $c^1$, then the challenge messages are more likely to be from the same bucket. If $h$ disagrees, then the challenge sequences are more likely to be from different buckets. With this setup, any advantage $h$ enjoys over random guessing when the learner succeeds is preserved even if it has no advantage when the learner fails. %If $h$ distinguishes then it strictly does better than random guessing. If it fails to distinguish, it at least does as better as random guessing. So, over all it has a noticeable advantage.

The difficulty now is ensuring that the ``left'' and ``right'' challenge messages are indistinguishable with respect to the leakage function $\lkg$. Sampling multiple messages from each bucket makes this task harder still. We now explain how differential privacy helps us overcome this issue.%But sampling multiple messages from the bucket requires a more strict notion of the $\lkg$ function to be satisfied on messages from challenge sequences. It is unclear if it can be achieved for points sampled uniformly at random. 
%We now explain how privacy helps us solve this issue.

\paragraph{Use of differential privacy.} Differential privacy of the learner permits us to swap out points in its input dataset while preserving its distinguishing advantage. Obviously, deleting the example corresponding to $m_i$ is essential to having any hope of constructing a pair of challenge sequences that agree on $\lkg$. But $\lkg$ imposes more stringent constraints. The log-invariance property of $\lkg$ that we define ensures that by removing only a polylogarithmic number of samples (which, by group privacy, doesn't hurt our distinguishing advantage too much), we can indeed construct such challenge sequences.

%swapping out points and ensure that the hypothesis can distinguish between consecutive buckets. This gives us the additional flexibility of removing $\kappa\log n$ points (for some constant $\kappa$) and ensure that the $\lkg$ is exactly the same when evaluated on both the challenge sequences. The swap-log property captures exactly this situation.

We now formalize the ideas described in the proof sketch. First, we show that an accurate learner is likely to output a hypothesis that can distinguish between two adjacent buckets. %We first show that an $(\alpha,\beta)$-accurate learner with input as a dataset encrypted under a random key outputs a hypothesis that on average can distinguish between two consecutive points in the dataset. Here, the randomness is taken over the random coins use by the learner. 

%!TEX root=main.tex

% \begin{remark}
% Let $S=\left\{ \left(x_{1}=(\params,\Enc(\sk,m_1)),y_{1}\right),\dots,\left(x_{n}=(\params,\Enc(\sk,m_n)),y_{n}\right)\right\}$ and let $L$ be a private PAC learner. Let $h$ be the hypothesis that the learner outputs. Note that $(r,\sk,S)$ determines the hypothesis where $r$ are the coin tosses of the learner. We denote $h_{r,\sk,S}\gets L(S)$. We drop the subscripts for $h_{r,\sk,S}$ whenever the characterization of $h_{r,\sk,S}$ is obvious.
% \end{remark}

\begin{lemma}\label{lemma:jump}
Consider the concept class  $\EncThr$. Let $L$ be a $(\alpha=1/4,\beta=1/4)$-accurate PAC learner for $\EncThr$. Fix any pair $(\sk, \params)$ in the range of $\Gen$ and an (encrypted) threshold concept with $t=N/2$.

Let $S=\left\{ \left(x_{1}=(\params,\Enc(\sk,m_1)),y_{1}\right),\dots,\left(x_{n}=(\params,\Enc(\sk,m_n)),y_{n}\right)\right\}$ where $m_i$ are sampled uniformly at random and $y_i=f_t(m_i)$. Then there exists an $i\in\left[n\right]$ such that 
\[
\Pr\left[\left\vert\Pr_{m\sim B_{i}}\left[h\left(\Enc\left(m,\sk\right)\right)=1\right]-\Pr_{m\sim B_{i+1}}\left[h\left(\Enc\left(m,\sk\right)\right)=1\right]\right\vert\geq\frac{1}{2n}\right]\geq\frac{3}{4n}.
\]
Here, the outer probability is over the randomness of the samples $S$ and the randomness of the learner, $h \gets L(S)$, and each $B_i=[m_i,m_{i+1})$.
\end{lemma}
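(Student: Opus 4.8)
The plan is to combine the accuracy guarantee of $L$ with two averaging arguments: one over the buckets $B_0, \dots, B_n$ to locate a ``jump'' index $i$, and one to convert a guarantee that holds with probability $\ge 1 - \beta$ over the learner's output into a per-index guarantee. First I would fix $(\sk, \params)$ and the threshold concept $f_t$ with $t = N/2$, and condition on the (probability $\ge 3/4$) event $\cE$ that $h \gets L(S)$ is $\alpha$-good, i.e. $\Pr_{m \sim \cD}[h(\Enc(\sk,m)) \ne f_t(m)] \le \alpha = 1/4$, where $\cD$ here is the uniform distribution on $[N]$ pushed through $\Enc(\sk, \cdot)$ (with the labels understood). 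I will work with the quantity $p_j := \Pr_{m \sim B_j}[h(\Enc(\sk,m)) = 1]$, where $B_j = [m_j, m_{j+1})$ are the buckets cut out by the sorted sample points (with the convention $m_0 = 1$, $m_{n+1} = N+1$ so the $B_j$ partition $[N]$). Since $t = N/2$ lies in some bucket, on the ``good'' side we have $p_j$ close to $1$ for buckets entirely below $t$ and close to $0$ for buckets entirely above $t$; more precisely, an $\alpha$-good $h$ has average disagreement $\le \alpha$ with $f_t$ over uniform $m$, and since the buckets are (in expectation) of comparable size, with good probability all buckets are of size $O((N/n)\log n)$, so the total ``badness'' is spread across $\le n+1$ buckets.

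The core step is a telescoping/averaging argument: $\sum_{j} |p_{j} - p_{j+1}| \ge |p_0 - p_n|$, and because $h$ is $\alpha$-good, $p_0 \ge 1 - O(\alpha)$ (the bottom bucket is almost entirely below $t$) while $p_n \le O(\alpha)$ (top bucket almost entirely above $t$) — here I would use $\alpha = 1/4$ together with a high-probability bound on bucket lengths to make ``$O(\alpha)$'' a concrete constant bounded away from $1/2$, say $p_0 - p_n \ge 1/2$. Hence $\sum_{j=0}^{n-1} |p_j - p_{j+1}| \ge 1/2$, so by averaging there exists $i$ with $|p_i - p_{i+1}| \ge 1/(2n)$. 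This gives, for each fixed sample $S$ in a high-probability-good event and each realization of $h$ in $\cE$, an index $i = i(S, h)$ achieving the gap. To extract a \emph{single} $i$ that works with probability $\ge 3/(4n)$ over everything, I would apply one more averaging argument over $i \in [n]$: since the event ``$|p_i - p_{i+1}| \ge 1/(2n)$ for some $i$'' has probability $\ge 3/4$ (union-bounding the learner's failure against the bucket-length failure, choosing $n$ large enough that the latter is negligible — or absorbing it into constants), and there are $n$ choices of $i$, some fixed $i^\star$ must have $\Pr[|p_{i^\star} - p_{i^\star+1}| \ge 1/(2n)] \ge \frac{3/4}{n} = \frac{3}{4n}$, as claimed. (Note the statement's $B_{i}, B_{i+1}$ indexing matches this once one fixes the off-by-one convention for $B_0$.)

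The step I expect to be the main obstacle is making the endpoint estimates $p_0 \approx 1$, $p_n \approx 0$ rigorous: this requires controlling the lengths of the extreme buckets $B_0 = [1, m_{(1)})$ and $B_n = [m_{(n)}, N]$ (order statistics of a uniform sample), since an $\alpha$-good hypothesis only controls \emph{average} error over $[N]$, and a single very long bucket could in principle absorb all the error and have $p_j$ far from its ``ideal'' value. The fix is standard — with probability $\ge 1 - e^{-\Omega(n)}$ the first and last gaps are $O((N/n)\log n)$, hence each extreme bucket carries at most an $O(n^{-1}\log n)$ fraction of the domain, so even a totally wrong $h$ on those buckets changes $p_0, p_n$ by only $o(1)$; combined with $\alpha = 1/4$ this still leaves $p_0 - p_n \ge 1/2$ for $n$ larger than a constant. (For the finitely many small $n$ the statement can be made vacuous or handled by adjusting constants.) The remaining work — the two averaging arguments and the telescoping inequality — is routine.
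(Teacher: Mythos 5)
There is a genuine gap at the step you yourself flag as the main obstacle: the endpoint estimates $p_0 \ge 1 - O(\alpha)$ and $p_n \le O(\alpha)$ do not follow from $\alpha$-goodness, and your proposed fix does not repair them. The accuracy guarantee only controls the \emph{length-weighted average} of the per-bucket errors, $\sum_j (\text{err on } B_j)\,\ell_j \le \alpha$ where $\ell_j = |B_j|/N$; it says nothing about the conditional probability $p_j$ on any individual bucket whose length $\ell_j$ is small. Your fix points out that the extreme buckets have mass $O(\log n / n)$ with high probability, but this makes things \emph{worse}, not better: a hypothesis can be completely wrong on a bucket of mass $O(\log n/n)$ at negligible cost to its overall error. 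Concretely, take $h = f_t$ everywhere except $h \equiv 0$ on $B_0$. Then $h$ has error $\ell_0 = O(\log n/n) < 1/4$, yet $p_0 = 0 = p_n$, so your telescoping lower bound $\sum_j |p_j - p_{j+1}| \ge |p_0 - p_n|$ is $0$ and yields nothing. (The lemma's conclusion still holds here via $|p_0 - p_1| = 1$, but your argument does not find it.)

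The paper establishes the $1/2$-gap differently, and the difference is essential. It uses the fact that $t = N/2$ splits the domain into two halves of mass exactly $1/2$ each, and argues by contradiction: if every $p_j$ were within $1/4$ of a common value $p$, then the total weighted accuracy would be strictly less than $(p + 1/4)\cdot\tfrac12 + (1 - p + 1/4)\cdot\tfrac12 = 3/4$, contradicting $(1-\alpha)$-accuracy. This produces \emph{some} pair $i < j$ (not necessarily the endpoints $0$ and $n$) with $|p_i - p_j| \ge 1/2$, and the telescoping/averaging then runs between $i$ and $j$. Notably, this argument needs no control whatsoever on bucket lengths, so the paper also avoids the union bound with the order-statistics event that would erode your $3/4$ to $3/4 - o(1)$ and spoil the exact constant $3/(4n)$. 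Your overall skeleton (find a $1/2$-gap, telescope to an adjacent pair with gap $1/(2n)$, then average over the $n$ indices to fix a single $i^\star$) matches the paper; only the mechanism for producing the $1/2$-gap needs to be replaced by the weighted-average contradiction above.
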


\begin{proof}
Fix a dataset $S$ and let $h$ be the hypothesis produced by the learner on input $S$. Let $B_i=[m_i,m_{i+1})$, $\ell_i=\vert B_i\vert /2^d$ and $p_i=\Pr_{m\sim B_{i}}\left[h\left(\Enc\left(m,\sk\right)\right)=1\right]$ for each $i\in [n]$. Let $k$ be the index of the bucket where the threshold $t$ lies.

Accuracy of the learner implies that with
probability at least $1-\beta \ge 3/4$ over the sample $S$ and the learner's coin tosses, we have
\[\sum_{i=1}^{k-1}p_{i}\ell_i+\sum_{i=k+1}^{n}\left(1-p_{i}\right)\ell_i+p_{k}\ell_a+\left(1-p_{k}\right)\ell_b\geq1-\alpha = \frac{3}{4},\]
where $\ell_a = |t-m_k|/2^d$ and $\ell_b = |m_{k+1}-t-1|/2^d$.

We claim that if this is the case, then there exist indices $i < j$ such that $\left|p_{i}-p_{j}\right|\geq 1/2$. To see this, assume instead for the sake of contradiction that there exists $p$ such that for all $i\in\left[n\right]$, $\vert p_{i}-p\vert<1/4$. Then
\begin{align*}
\sum_{i=1}^{k-1}p_{i}\ell_i+\sum_{i=k+1}^{n}\left(1-p_{i}\right)\ell_i+&p_{k}\ell_a+\left(1-p_{k}\right)\ell_b \\
& <\left(p+1/4\right)\underbrace{\sum_{i = 1}^{k-1}\ell_i+\ell_a}_{1/2}+\left(1-\left(p-1/4\right)\right)\underbrace{\sum_{i= m+1}^n\ell_i+\ell_b}_{1/2}\\
 & =\frac{1}{2}\left(p+1/4+1-\left(p-1/4\right)\right)\\
 & =\frac{1}{2}+\frac{1}{4} = \frac{3}{4}.
\end{align*}
%Fixing the value of $\gamma=\frac{1}{2}$, we get $\sum_{i=1}^{m-1}p_{i}\ell_i+\sum_{i=m+1}^{n}\left(1-p_{i}\right)\ell_i+p_{m}\ell_a+\left(1-p_{m}\right)\ell_b<3/4$,
%which contradicts the accuracy of the learner for $\alpha=\frac{1}{4}$ and $\beta=\frac{1}{4}$.
This contradicts our assumed accuracy of the learner.

Thus, we have shown that
\[
\Pr\left[ \exists i < j \text{ s.t. } \ \left\vert\Pr_{m\sim B_{i}}\left[h\left(\Enc\left(m,\sk\right)\right)=1\right]-\Pr_{m\sim B_{j}}\left[h\left(\Enc\left(m,\sk\right)\right)=1\right]\right\vert\geq\frac{1}{2}\right]\geq\frac{3}{4}.
\]
More compactly, if we denote $\Pr_{m\sim B_{i}}\left[h\left(\Enc\left(m,\sk\right)\right)=1\right]$ by $p_i$ for all $i\in [n]$, that is,

\[
\Pr\left[\exists i<j \text{ s.t. }  \  \left\vert p_i-p_j\right\vert\geq \frac{1}{2}\right]\geq \frac{3}{4}.
\]
If for some $i < j$ we have $\left\vert p_i-p_j\right\vert\geq 1/2$, then the triangle inequality implies $\left\vert p_i-p_{i+1}\right\vert + \left\vert p_{i+1}-p_{i+2}\right\vert +\dots +\left\vert p_{j-1}-p_{j}\right\vert\geq\frac{1}{2}$. By averaging, this in turn implies that there exists an index $k$ where $i \le k \le j-1$ for which $|p_k - p_{k+1}| \ge 1/2n$. Thus we have,
%\[
%\Pr\left[ \exists  i<j \text{ s.t. }  \ \left\vert p_i-p_{i+1}\right\vert + \left\vert p_{i+1}-p_{i+2}\right\vert +\dots +\left\vert p_{j-1}-p_{j}\right\vert\geq\frac{1}{2}\right]\geq\frac{3}{4}.
%\]
%Using an averaging argument we get,
\[
\Pr\left[ \exists k \text{ s.t. }  \ \left\vert p_k-p_{k+1}\right\vert \geq\frac{1}{2n}\right]\geq\frac{3}{4}.
\]

Using the union bound we get,
\[
\Pr\left[\left\vert p_1-p_2\right\vert\geq \frac{1}{2n}\right]+\Pr\left[\left\vert p_2-p_3\right\vert\geq \frac{1}{2n}\right]+\dots +\Pr\left[\left\vert p_{n-1}-p_n\right\vert\geq \frac{1}{2n}\right]\geq\frac{3}{4}.
\]
Now by averaging, we conclude that there exists an $i\in[n]$ such that 
\[
    \Pr\left[\left\vert p_i-p_{i+1}\right\vert\geq \frac{1}{2n}\right]\geq \frac{3}{4n}.
\]
Unpacking the definition of $p_i$, equivalently, there exists an $i\in[n]$ such that
\[
\Pr\left[\left\vert\Pr_{m\sim B_{i}}\left[h\left(\Enc\left(m,\sk\right)\right)=1\right]-\Pr_{m\sim B_{i+1}}\left[h\left(\Enc\left(m,\sk\right)\right)=1\right]\right\vert\geq\frac{1}{2n}\right]\geq\frac{3}{4n}.
\]
\end{proof}

We now use group privacy to show that if we switch $\kappa(n) $ points from $S$ to obtain a new dataset $S_i$ to be used as input to the learner, then the gap above still (approximately) holds. 

%Note that $S$ and $S_i$ as stated in the lemma below do not need to satisfy the swap-log conditions. We instantiate these results later to specific set of points to fulfill the constraints of the swap-log property.

 \begin{lemma}\label{lemma:switch}
  Let $\epsilon \le 1/\kappa(n)$ and $\delta \le 1/10n$. Let $L$ be a $(\alpha=1/4,\beta=1/4)$-accurate and $(\epsilon, \delta)$-differentially private PAC learner for the concept class $\EncThr$. Consider \\$S=\left\{ \left(x_{1}=(\params,\Enc(\sk,m_1)),y_{1}\right),\dots
  ,\left(x_{n}=(\params,\Enc(\sk,m_n)),y_{n}\right)\right\}$ where $m_i$ are sampled uniformly at random. 
  
  Let $i$ be the index guaranteed by Lemma~\ref{lemma:jump} for which
\[
\Pr\left[\left\vert\Pr_{m\sim B_{i}}\left[h_{S}\left(\Enc\left(m,\sk\right)\right)=1\right]-\Pr_{m\sim B_{i+1}}\left[h_{S}\left(\Enc\left(m,\sk\right)\right)=1\right]\right\vert\geq\frac{1}{2n}\right]\geq\frac{3}{4n},
\]
where the outer probability is taken over the sample and the coins of the learner, and $h_S \gets L(S)$.

Let $S_i=S\backslash R_i$ where $R_i$ is any set such that $\vert R_i\vert \le \kappa(n)$. Then
\[
\Pr\left[\left\vert\Pr_{m\sim B_{i}}\left[h_{S_i}\left(\Enc\left(m,\sk\right)\right)=1\right]-\Pr_{m\sim B_{i+1}}\left[h_{S_i}\left(\Enc\left(m,\sk\right)\right)=1\right]\right\vert\geq\frac{1}{2n}\right]\geq\frac{1}{10n},
\]
where $h_{S_i} \gets L(S_i)$.
\end{lemma}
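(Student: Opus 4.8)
The plan is to obtain the claim by a single application of group privacy (Lemma~\ref{lemm:group}) on top of Lemma~\ref{lemma:jump}, carried out pointwise in the random sample $S$ and then averaged. First I would fix the draw of the plaintexts $m_1 \le \dots \le m_n$, which fixes the buckets $B_i = [m_i, m_{i+1})$, $B_{i+1} = [m_{i+1}, m_{i+2})$, and hence the set of ``good'' hypotheses
\[
T_S \;=\; \Bigl\{\, h \;:\; \bigl|\Pr_{m\sim B_i}[h(\Enc(m,\sk)) = 1] - \Pr_{m\sim B_{i+1}}[h(\Enc(m,\sk)) = 1]\bigr| \;\ge\; \tfrac{1}{2n} \,\Bigr\}.
\]
The key point is that $T_S$ depends on the sample only through $m_i, m_{i+1}, m_{i+2}$ (and the fixed key $\sk$), so it is the \emph{same} event whether the learner is fed $S$ or $S_i = S\setminus R_i$: deleting points does not move the buckets. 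Thus $T_S$ is a legitimate fixed output set against which to compare the output distributions of $L(S)$ and $L(S_i)$. (Because differential privacy is stated here for fixed-size datasets, I would formally treat deletion of the $|R_i|\le\kappa(n)$ points of $R_i$ as $|R_i|$ neighboring steps in the add/remove sense, or equivalently pad $S_i$ back to $n$ records with fresh i.i.d.\ samples the learner can disregard.)

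Now I would invoke group privacy. Since $S$ and $S_i$ are $k$-neighboring with $k = |R_i| \le \kappa(n)$, and $\epsilon \le 1/\kappa(n)$ forces $k\epsilon\le 1$, we have $e^{k\epsilon}\le e$ and $\tfrac{e^{k\epsilon}-1}{e^{\epsilon}-1}\le e\kappa(n)$. Applying Lemma~\ref{lemm:group} to the output event $T_S$ and rearranging gives, for every fixed $S$,
\[
\Pr_{h_{S_i}\gets L(S_i)}\!\bigl[\,h_{S_i}\in T_S\,\bigr] \;\ge\; e^{-1}\,\Pr_{h_{S}\gets L(S)}\!\bigl[\,h_{S}\in T_S\,\bigr] \;-\; e\kappa(n)\delta,
\]
which is a valid (if sometimes vacuous) lower bound even when its right side is negative. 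Taking expectations over the random sample $S$ and using Lemma~\ref{lemma:jump}, which gives $\Pr_{S,L}[\,h_S\in T_S\,]\ge \tfrac{3}{4n}$ for the distinguished index $i$, yields $\Pr_{S,L}[\,h_{S_i}\in T_S\,] \ge \tfrac{3}{4en} - e\kappa(n)\delta$, which is at least $\tfrac{1}{10n}$ once $\kappa(n)\delta$ is dominated by the (positive-constant) slack $\tfrac{3}{4e}-\tfrac{1}{10}$ divided by $n$.

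The one place that needs care is exactly this last step. Group privacy amplifies the additive slack by the factor $\tfrac{e^{k\epsilon}-1}{e^{\epsilon}-1}=\Theta(k)$, so the $\delta$-error accumulates to roughly $\kappa(n)\delta$, and one must check that this stays safely below the target $\tfrac{1}{10n}$; this is why the hypotheses tie $\epsilon$ to $1/\kappa(n)$ and push $\delta$ down to an inverse polynomial in $n$ (strictly, one wants $\delta = O(1/(n\kappa(n)))$ to land the stated constant $\tfrac{1}{10n}$). Everything else is a routine combination of Lemmas~\ref{lemma:jump} and~\ref{lemm:group}, together with the bookkeeping observation that the bucket-distinguishing event $T_S$ is insensitive to which points of $S$ the learner actually receives.
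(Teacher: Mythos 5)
Your proof is correct and follows essentially the same route as the paper's: post-process the learner's output to the bucket-distinguishing event and apply group privacy over the $\le \kappa(n)$ removed points, with $\kappa(n)\epsilon\le 1$ giving the factor $e$. Your observation that the additive slack is amplified to $\Theta(\kappa(n))\,\delta$, so that $\delta\le 1/10n$ alone does not literally deliver the constant $\tfrac{1}{10n}$ and one really wants $\delta=O(1/(n\kappa(n)))$, is a legitimate point the paper's own proof glosses over; likewise, conditioning on the realization of $S$ before invoking group privacy (since the target event depends on the sample) is the more careful way to carry out the step the paper performs in one line.
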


\begin{proof}
Consider a postprocessing $A$ of the learner $L$ defined as follows: 
\[
A(h)=\\ \left\vert\Pr_{m\sim B_{i}}\left[h\left(\Enc\left(m,\sk\right)\right)=1\right]-\Pr_{m\sim B_{i+1}}\left[h\left(\Enc\left(m,\sk\right)\right)=1\right]\right\vert.
\]
%Let $T$ be the set of outcomes for which the output of $A$ satisfies the following:
%\[\left\vert\Pr_{m\sim B_{i}}\left[h_{r,\sk,S}\left(\Enc\left(m,\sk\right)\right)=1\right]-\Pr_{m\sim B_{i+1}}\left[h_{r,\sk,S}\left(\Enc\left(m,\sk\right)\right)=1\right]\right\vert\geq\frac{1}{2n}.
%\]
Let $T$ be the set of outcomes for which $A(h) \ge 1/2n$. By hypothesis, we have $\Pr\left[A(L(S))\in T\right]\geq 3/4n$, where the probability is taken over the sample and the coins of the learner.

Using the post-processing property of differentially private mechanisms, we get that $A\circ L$ is $(\epsilon,\delta)$-differentially private. Switching the input dataset from $S$ to $S_i$ and using group privacy, we get
\begin{align*}
  \Pr\left[A\left(L(S)\right)\in T\right]\leq e^{\vert R_i\vert\epsilon}\Pr\left[A\left(L(S_i)\right)\in T\right] + \frac{e^{\vert R_i\vert\epsilon}-1}{e^\epsilon-1}\cdot\delta.
\end{align*}
Since $\vert R_i\vert \le \kappa(n)$, then as long as $\epsilon\le 1/\kappa(n)$ and $\delta \le 1/4n$, we get $\Pr\left[A\left(L(S_i)\right)\in T\right]\geq\frac{1}{10n}$.
\end{proof}

\begin{algorithm}
\begin{enumerate}
\item Set $t=N/2$ and choose uniformly at random $i\sim [n]$.
\item Sample $n$ points uniformly at random and permute in increasing order to get $P=\left(m_{1},\dots,m_{i-1},m_{i},m_{i+1},\dots m_{n}\right)$
\item Construct pairs $\left(m_L^{0},m_L^{1}\right)$ and $\left(m_R^{0},m_R^{1}\right)$
as follows. Let $B_{i-1} = [m_{i-1},m_i)$ and $B_i= [m_i,m_{i+1})$. Sample $m_L^{0}<m_L^{1}$ from $B_j$ for a random choice of $j\in \{i-1,i\}$ and sample $m_R^0$ from $B_{i-1}$ and $m_R^1$ from $B_i$.

\item  Let $P_i=P\setminus R_i$, where $|R_i| \le \kappa(n)$ as guaranteed by log-invariance. Challenge on the pair of sequences $P_i\cup \{m_L^{0},m_L^{1}\}$ and $P_i\cup \{m_R^{0},m_R^{1}\}$ (in sorted order) and receive the sequence of ciphertexts $\left(c_{1},\dots,c_i^{0},c_i^{1},\dots c_{n-|R_i|}\right)$.
\item Remove $c_i^{0},c_i^{1}$ from the set of ciphertexts and construct a dataset by attaching public parameters and labels $y_j = f_t(m_j)$, i.e. $S_i=\{(x_1=(c_1,\params),y_1),\dots ,(x_{n-|R_i|}=(c_{n-|R_i|},\params),y_{n-|R_i|})\}$. Obtain $h\gets_R L(S_{i})$.
\item Set $x_i^0 = (c_i^0, \params)$ and $x_i^1 = (c_i^1, \params)$. Guess $b'=0$ if $h(x_i^0)=h(x_i^1)$. Guess $b'=1$ otherwise.
\end{enumerate}
\caption{Adversarial strategy using DP-PAC Learner}
\label{alg:game}
\end{algorithm}

\begin{theorem} \label{thm:DP-breaks-FRE}
    %Let $\algstyle{FRE}=(\Gen,\Enc,\Dec,\Comp)$ be a statically secure function revealing encryption scheme with log-invariant leakage function $\lkg$. 
    Let $L$ be an $(\alpha=1/4,\beta=1/4)$-accurate and $(\eps, \delta)$-differentially private PAC learner with $\epsilon \le 1/\kappa(n)$ and $\delta \le 1/4n$ for the concept class $\EncThr$, where the underlying $\FRE$ scheme is instantiated using a log-invariant leakage function $\lkg$. Then there exists an adversary that wins the security game of the $\FRE$ with advantage at least $1/\poly(n)$.
\end{theorem}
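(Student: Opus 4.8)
The plan is to show that the adversary in Algorithm~\ref{alg:game} wins the $\FRE$ security game with inverse-polynomial advantage by carefully combining the accuracy-based distinguishing guarantee with differential privacy. First I would verify that the two challenge sequences submitted by the adversary are \emph{admissible}, i.e., that $\lkg$ agrees on all triples of indices from $P_i \cup \{m_L^0, m_L^1\}$ and $P_i \cup \{m_R^0, m_R^1\}$. Here the log-invariance property (Definition~\ref{def:swaplog}) is exactly what we need: by removing the polylogarithmically-sized set $R_i$ (which certainly can be taken to contain $x_i$ itself, or more precisely to witness robustness around the interval $(x_{i-1}, x_{i+1})$), the leakage function reveals nothing about the points in $(x_{i-1}, x_{i+1})$ via their relationships to points in $P_i = P \setminus R_i$. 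Since $m_L^0, m_L^1, m_R^0, m_R^1$ all lie in $B_{i-1} \cup B_i \subseteq (x_{i-1}, x_{i+1})$, conditions (1) and (2) of log-invariance guarantee that every triple involving at least one of these challenge points evaluates to the same leakage value regardless of which bucket each challenge point is drawn from; triples with all three indices in $P_i$ are literally identical on both sides. This establishes admissibility, conditioned on the probability-$\ge 1/\zeta(n)$ event over the draw of the $n$ points that log-invariance holds.

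Next I would analyze the adversary's advantage conditioned on this good event. Write $p_j = \Pr_{m \sim B_j}[h(\Enc(m,\sk)) = 1]$ for $j \in \{i-1, i\}$ where $h \gets L(S_i)$ and $S_i$ is the learner's input built from the returned ciphertexts (with the two challenge ciphertexts removed). By Lemma~\ref{lemma:switch} (applied with the appropriate index shift so that the two buckets in question are the ones surrounding the withheld point), with probability at least $1/10n$ over the sample and the learner's coins we have $|p_{i-1} - p_i| \ge 1/2n$. Now condition on the bit $b$ of the security game. When $b = 0$, the adversary receives encryptions of its ``left'' sequence. If $b' = 0$ branch: with probability $1/2$ the adversary sampled $m_L^0, m_L^1$ both from $B_{i-1}$, and with probability $1/2$ both from $B_i$; in either case, since $h(x_i^0), h(x_i^1)$ are i.i.d.\ draws of $h(\Enc(m,\sk))$ for $m$ uniform in a single bucket, $\Pr[h(x_i^0) = h(x_i^1)] = p^2 + (1-p)^2 \ge 1/2$ for whichever bucket was chosen. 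When $b = 1$, the adversary receives encryptions of $m_R^0 \sim B_{i-1}$ and $m_R^1 \sim B_i$ independently, so $\Pr[h(x_i^0) = h(x_i^1)] = p_{i-1} p_i + (1-p_{i-1})(1-p_i) = 1/2 + \tfrac12(p_{i-1} - p_i)(\text{something})$; more precisely, $\Pr[h(x_i^0) \ne h(x_i^1)] = p_{i-1}(1-p_i) + (1-p_{i-1}) p_i$, and comparing this to the $b=0$ case, the gap in $\Pr[h(x_i^0) = h(x_i^1)]$ between the two worlds equals $\tfrac12\big((p_{i-1}-p_i)^2 + \text{(average of within-bucket terms)}\big)$ after averaging over $j$; in any case one extracts a term proportional to $(p_{i-1} - p_i)^2 \ge 1/(2n)^2$ whenever the gap event holds.

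Then I would assemble the quantitative bound: the adversary guesses $b' = 0$ iff $h(x_i^0) = h(x_i^1)$, so its advantage $|\Pr[b' = 1 \mid b = 1] - \Pr[b' = 1 \mid b = 0]|$ equals $|\Pr[h(x_i^0) \ne h(x_i^1) \mid b=1] - \Pr[h(x_i^0) \ne h(x_i^1) \mid b=0]|$. Averaging over all the relevant randomness — the choice of $i \sim [n]$ (losing a factor $1/n$ since Lemma~\ref{lemma:jump} only promises the good index exists), the event that log-invariance holds (losing the factor $1/\zeta(n)$), the event that the learner's gap survives (contributing the $1/10n$ from Lemma~\ref{lemma:switch}), and the squared gap $\ge 1/(4n^2)$ — yields an overall advantage of at least $1/\poly(n)$. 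One technical point to be careful about: the index promised by Lemma~\ref{lemma:jump} is deterministic given the problem, so the adversary must ``guess'' it uniformly, contributing the extra $1/n$; alternatively one can observe the adversary could iterate over all $i$, but the uniform guess is cleaner and loses only a polynomial factor.

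The main obstacle I expect is the admissibility check — making sure that the pair of challenge sequences genuinely satisfies the constraint $\lkg(m_u, \cdot, m_w) = \lkg(m_u^{\text{(other)}}, \cdot, m_w^{\text{(other)}})$ for \emph{all} triples of indices, including triples that involve \emph{both} swapped positions $m_i^0$ and $m_i^1$ simultaneously. Log-invariance as stated (conditions (1) and (2)) handles triples with one or two challenge points drawn from $(x_{i-1}, x_{i+1})$; I would need to check that a triple consisting of $m_i^0, m_i^1$, and one point of $P_i$ is covered by condition (2) with the roles assigned correctly, and that a triple of $m_i^0, m_i^1$ and a third point also in the interval (which does not arise here, since only two challenge points are in the interval) is not needed. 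A secondary subtle point is the bookkeeping of the index shift between ``bucket $B_i$ vs.\ $B_{i+1}$'' in Lemma~\ref{lemma:jump}/\ref{lemma:switch} and ``bucket $B_{i-1}$ vs.\ $B_i$ surrounding the withheld point $m_i$'' in Algorithm~\ref{alg:game}; this is purely notational and I would reconcile it by relabeling so the withheld index sits between the two buckets of interest.
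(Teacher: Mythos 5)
Your proposal follows essentially the same route as the paper's proof: guess the good index $i$ uniformly (losing a factor $1/n$), invoke log-invariance to construct admissible challenge sequences (losing $1/\zeta(n)$), use Lemma~\ref{lemma:switch} to preserve the $1/2n$ bucket gap with probability $1/10n$ after removing $R_i$, and compute the adversary's success probability as $\tfrac12\bigl(1+(p_i-p_{i+1})^2\bigr)$, yielding an overall $1/\poly(n)$ advantage. The admissibility concern you flag is indeed handled by condition (2) of Definition~\ref{def:swaplog} (the only triple involving both swapped points has its third coordinate in $P_i$), and the index shift is, as you say, purely notational, so the argument is correct.
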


\begin{proof}
We describe our adversarial strategy as Algorithm~\ref{alg:game}. 

Note that a randomly chosen $i\in[n]$ meets the guarantee described in Lemma \ref{lemma:jump} with probability at least $1/n$. Moreover, since $\lkg$ is log-invariant, we know that there exists an efficient procedure that outputs $P_i$ as described in step 4 with probability at least $1/\zeta(n)$  for some polynomial $\zeta$. (Otherwise, our adversary can just output a random guess.) Putting these together, we get the following guarantee.

\begin{align}\label{eq:switching}
\Pr\left[\left\vert\Pr_{m\sim B_{i}}\left[h_{S_i}\left(\Enc\left(m,\sk\right)\right)=1\right]-\Pr_{m\sim B_{i+1}}\left[h_{S_i}\left(\Enc\left(m,\sk\right)\right)=1\right]\right\vert\geq\frac{1}{2n}\right]&\geq\frac{1}{10 n}\cdot \frac{1}{n}\cdot \frac{1}{\zeta(n)}\nonumber\\
&=\frac{1}{\poly(n)}.
\end{align}

%Note that, with probability $1/\zeta(n)$ we get a valid set of challenge sequences and the security game goes through. In the case that we do not get a valid set of challenge sequences, the game gets aborted. So, the advantage of the adversary is not lost. We now calculate the advantage of the adversary.

%We suppress the subscripts for $h_{r,\sk,S_i}$ and refer to the output of the learner as $h$. 

Now fix a realization of $S$.
As before, let $p_i = \Pr_{m\sim B_i}[h_{S_i}(\Enc(m,\sk))=1]$ for each $i\in [n]$. The advantage of the adversary in the security game under this realization of $S$ is
    \begin{align*}
\Pr\left[b'=b\right] & =\frac{1}{2}\left(\Pr\left[h(x_i^0)=h(x_i^1)\mid b=L\right]+\Pr\left[h(x_i^0)\neq h(x_i^1)\mid b=R\right]\right)\\
 & =\frac{1}{2}\left(\frac{1}{2}\left(p_{i}^{2}+(1-p_{i})^{2}+p_{i+1}^{2}+(1-p_{i+1})^{2}\right)+\left(1-p_{i}p_{i+1}-\left(1-p_{i}\right)\left(1-p_{i+1}\right)\right)\right)\\
 & =\frac{1}{2}\left(\frac{1}{2}\left(2p_{i}^{2}+2p_{i+1}^{2}+2-4p_{i}p_{i+1}\right)\right)\\
 & =\frac{1}{2}\left(1+\left(p_{i}-p_{i+1}\right)^{2}\right).
\end{align*}

Thus, if $p_i-p_{i+1}\geq \frac{1}{2n}$, then the advantage is at least $\frac{1}{4n^2}$. For other values of $p_i$ and $p_{i+1}$, the advantage is still non-negative. From Equation \ref{eq:switching}, we know that $p_i-p_{i+1}\geq \frac{1}{2n}$ is with probability at least $1/\poly(n)$. Hence, the overall advantage of the adversary over the random choice of $S$ is at least $1/\poly(n)$.
\end{proof}

%!TEX root=main.tex
\newcommand{\num}{\cI}
\newcommand{\total}{\mathsf{count}}
\newcommand{\inter}{U}
\section{Identifying an Appropriate Leakage function: $\tfld$}\label{sec:strong ORE}

\subsection{Results with $\tfld$ leakage}

In this section, we describe an explicit distance and induced leakage function, denoted $\tfld$, that has both the bisection and log-invariance properties.

\begin{definition}
    Let $\inst = [2^d]$ and $m_0,m_1\in \inst$. We define $\fld$ as a function that reveals the signed floor-log distance between the inputs i.e. 
    \[\fld(m_0,m_1) = \begin{cases}
        0  & \text{ if } m_0 = m_1 \\
        \quad\lfloor\log(m_0-m_1)\rfloor + 1 & \text{ if } m_0 > m_1 \\
        -\;\lfloor\log(m_1-m_0)\rfloor - 1 & \text{ otherwise.}
\end{cases}\]

The induced leakage function $\tfld$ is thus
	\begin{align*}
         \tfld(\plain_0,\plain_1,\plain_2) = \big(\algstyle{Comp}(m_0,m_1),\algstyle{Comp}(m_1,m_2),&\algstyle{Comp}(m_0,m_2), \\
        &\Ind(\floor{\log\vert\plain_1-\plain_0\vert} <\floor{\log\vert\plain_2-\plain_1\vert})  \big)\\
    \end{align*}
    where $\algstyle{Comp}(m_0,m_1)$ reveals if $m_0<m_1$ or $m_0>m_1$ or $m_0=m_1$.
\end{definition}

\begin{lemma}
    The floor-log distance function $\fld$ (and hence, its induced leakage function $\tfld$) has the bisection property.
\end{lemma}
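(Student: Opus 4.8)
The plan is to reduce the claim to an elementary fact about floor-logarithms and then dispatch it with a one-line pigeonhole argument. First I would unpack Definition~\ref{def:bisection} for $\dst = \fld$. The nondegeneracy requirement (that $\fld(x,y)=0$ implies $x=y$) is immediate: the only branch in the definition of $\fld$ that outputs $0$ is the branch $m_0=m_1$, while the other two branches have absolute value $\floor{\log\abs{m_0-m_1}}+1 \ge 1$. For the main requirement, fix $x<y<z$ in $X$ and set $a = y-x \ge 1$ and $b = z-y \ge 1$, so $z-x = a+b$. Since $y>x$, $z>y$, and $z>x$, all three of $\fld(y,x)$, $\fld(z,y)$, $\fld(z,x)$ lie in the ``$m_0>m_1$'' branch, so $\abs{\fld(y,x)} = \floor{\log a}+1$, $\abs{\fld(z,y)} = \floor{\log b}+1$, and $\abs{\fld(z,x)} = \floor{\log(a+b)}+1$. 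Cancelling the additive $1$, the bisection property becomes exactly: $\floor{\log a} < \floor{\log(a+b)}$, or $\floor{\log b} < \floor{\log(a+b)}$.

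Next I would prove this disjunction by contradiction, using monotonicity of $t \mapsto \floor{\log t}$. Since $a+b > a$ and $a+b > b$, we always have $\floor{\log(a+b)} \ge \floor{\log a}$ and $\floor{\log(a+b)} \ge \floor{\log b}$; hence the negation of the desired disjunction is precisely $\floor{\log a} = \floor{\log b} = \floor{\log(a+b)} =: k$. But $\floor{\log a} = k$ forces $a \ge 2^k$, and likewise $b \ge 2^k$, so $a + b \ge 2^{k+1}$, i.e.\ $\floor{\log(a+b)} \ge k+1$, contradicting $\floor{\log(a+b)} = k$. This establishes the bisection property for $\fld$, and by the last sentence of Definition~\ref{def:bisection} the induced leakage function $\tfld$ inherits it.

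As for difficulty: there is essentially no obstacle here. The entire content is the observation that the dyadic block $[2^k, 2^{k+1})$ cannot simultaneously contain $a$, $b$, and $a+b$ — which is the pigeonhole step above. The only points requiring a little care are (i) checking that the relevant differences are at least $1$ so that we remain in the correct branch of $\fld$ and the floor-logs are well defined, and (ii) invoking monotonicity of $\floor{\log\,\cdot\,}$ to argue that ``not strictly less'' means ``equal'', which is what collapses the proof-by-contradiction to the single equality case.
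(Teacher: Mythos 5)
Your proof is correct and follows essentially the same route as the paper: both arguments reduce the claim to the statement that the three floor-log distances cannot all be equal, and derive a contradiction from $a,b\ge 2^k$ implying $a+b\ge 2^{k+1}$. Your write-up is in fact slightly more careful than the paper's, since you explicitly verify the nondegeneracy condition and the monotonicity step that collapses "not less than" to "equal."
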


\begin{proof}
    %First we argue that $\tfld$ satisfies the \emph{bisection} property. The guarantee of our online learner with respect to the $\tfld$ leakage follows immediately.

    First, the definition of $\fld$ ensures that it reveals the ordering of inputs. Moreover, it is easy to see that $0\leq \fld(m_0,m_1)\leq d$.

    We now argue that for any $m_0<m_1<m_2$, either $\fld(m_1,m_0)<\fld(m_2,m_0)$ or $\fld(m_2,m_1)<\fld(m_2,m_0)$.    Let $z = \fld(m_2,m_0)$ which implies that $2^{z-1}\leq m_2-m_0 < 2^{z}$. For the sake of contradiction, assume that neither $\fld(m_1,m_0)$ nor $\fld(m_2,m_1)$ are less than $z$. It is immediate that neither $\fld(m_1,m_0)$ or $\fld(m_2,m_1)$ can be greater than $z$. In the case that both of them are equal to $z$,  we would have $2^{z-1}\leq m_2-m_1 < 2^{z}$ and $2^{z-1}\leq m_1-m_0 < 2^{z}$. This implies that $m_2-m_0\geq 2^{z}$ which is a contradiction. 
\end{proof}

\begin{corollary} \label{cor:online}
 Algorithm \ref{alg:ol} learns $\EncThr$ under leakage function $\tfld$ with mistake bound $d+4$ and polynomial runtime per example.
\end{corollary}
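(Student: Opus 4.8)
The plan is to obtain this statement as an immediate composition of the two results proved just before it. First I would invoke the lemma showing that the floor-log distance function $\fld$ has the bisection property; by Definition~\ref{def:bisection}, the induced leakage function $\tfld$ then also has the bisection property. Second I would apply the theorem asserting that whenever $\lkg$ has the bisection property, Algorithm~\ref{alg:ol} --- run together with the constant-length preprocessing phase that first discovers the correct public parameters $\params^r$ and then acquires an initial positive and an initial negative example --- learns the associated concept class $\EncThr$ with mistake bound $d+4$ and polynomial runtime per example. Specializing $\lkg = \tfld$ gives exactly the claim of Corollary~\ref{cor:online}.

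The only things to verify are bookkeeping rather than substance. One should confirm that the ``polynomial runtime per example'' guarantee is genuinely inherited: in each round Algorithm~\ref{alg:ol} performs only a constant number of calls to $\Comp$, $\algstyle{DistComp}$, and $\Eval$ (all polynomial-time by the efficiency of the $\FRE$ scheme for $\tfld$) plus $O(1)$ comparisons and pointer updates, so the per-round cost is polynomial. One should also note that the mistake-bound analysis relies on the codomain bound $0 \le \fld(m_0,m_1) \le d$, which holds because $|m_0 - m_1| < 2^d$ over the plaintext space $[2^d]$ --- a fact already recorded in the bisection lemma for $\fld$. There is essentially no obstacle here: the substantive work lives in the online-learner theorem and in the bisection lemma for $\fld$, and this corollary merely plugs the latter into the former.
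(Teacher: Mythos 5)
Your proposal is correct and matches the paper's treatment exactly: the corollary is obtained by specializing the online-learner theorem of Section~\ref{sec:online-learner} (which holds for any $\lkg$ with the bisection property) to $\lkg = \tfld$, using the preceding lemma that $\fld$ has the bisection property. The paper states this as an immediate corollary without further proof, and your bookkeeping remarks on per-round runtime and the codomain bound are consistent with what the cited results already establish.
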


%We now argue about the computational hardness of learning $\EncThr$ with $\tfld$ as leakage. We first prove a few results with respect to the sizes of the intervals for points sampled uniformly at random and then show that $\tfld$ satisfies the swap-log property. 

We now argue that $\tfld$ has the log-invariance property. First, we establish two helpful senses in which uniformly random points are well-spread.

\begin{lemma} \label{lem:small-int}
Call a multiset of points $S = \{m_1, \dots, m_n\} \subseteq [2^d]$ \emph{regular} if for every $i$, we have $|A_i \cap S| \le 50 \log^2 n$ where
\[A_i = \left\{x \in [2^d] \mid 2^z - \frac{4 \log n \cdot 2^d }{n} \le |x - m_i| \le 2^z + \frac{4 \log n \cdot 2^d}{n} \text{ for some } z \in \{0, 1, \dots, d-1\}\right\}.\]
A uniformly random set of points $S$ is regular with probability at least $1 - 1/n$.
\end{lemma}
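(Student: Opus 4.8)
The plan is to prove the lemma by a union bound over $i$: it suffices to show that for each fixed $i \in [n]$, the probability over the draw of $S$ that $|A_i \cap S| > 50\log^2 n$ is at most $1/n^2$. To analyze a single index $i$, I would condition on the realized value of $m_i$. Once $m_i$ is fixed, $A_i$ is a deterministic subset of $[2^d]$, and the other points $m_j$ ($j \neq i$) remain i.i.d.\ uniform on $[2^d]$ and independent of this conditioning. Hence $|A_i \cap S|$ is at most $1$ (for $m_i$ itself) plus a sum $Y$ of $n-1$ independent $\{0,1\}$ random variables, each equal to $1$ with probability $p := |A_i|/2^d$.

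The crux is an upper bound on $|A_i|$. Write $w := 4\log n \cdot 2^d/n$, so that $A_i$ consists of those $x$ with $|x - m_i|$ in the union $\bigcup_{z=0}^{d-1}[2^z - w,\, 2^z + w]$; each such $z$ contributes at most two integer intervals of length $2w$ about $m_i$. The key observation is that the annuli with $2^z \le 2w$ all overlap and are jointly contained in the single interval $\{x : |x - m_i| \le 3w\}$ of size $O(w)$, whereas the annuli with $2^z > 2w$ are pairwise disjoint and there are at most $\log_2(2^d/(2w)) = \log_2\!\big(n/(8\log n)\big) \le \log_2 n$ of them. Therefore $|A_i| = O(w\log n) = O\!\big(\log^2 n \cdot 2^d/n\big)$, so $p = O(\log^2 n / n)$ and $\E[Y] \le (n-1)p = O(\log^2 n)$. (Here I use that $2^d$ is large relative to $n$, so that $w \ge 1$ and $p < 1$; this is the regime of interest.)

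With $\mu := \E[Y] = O(\log^2 n)$ in hand, the tail estimate is routine: choosing the implicit constant so that $\mu \le C_0 \log^2 n$ for some $C_0 < 49$, a standard multiplicative Chernoff bound gives $\Pr[Y \ge 49\log^2 n] \le \exp\!\big(-\Omega(\log^2 n)\big)$, which is at most $1/n^2$ for $n$ larger than an absolute constant. A union bound over the $n$ choices of $i$ then shows that with probability at least $1 - 1/n$, every $i$ satisfies $|A_i \cap S| \le 1 + 49\log^2 n \le 50\log^2 n$, i.e., $S$ is regular.

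I expect the only non-routine step to be the count of $|A_i|$ in the second paragraph --- specifically, the observation that $|A_i|$ carries only a factor of $\log n$ rather than of $d$, because the sub-$2w$ annuli collapse into a single interval and only the $O(\log n)$ dyadic scales between $2w$ and $2^d$ contribute genuinely disjoint pieces. Once that estimate is established, everything else is a textbook first-moment-plus-concentration argument, and the constants ($50$, $4\log n$) are generous enough that no delicate optimization is needed.
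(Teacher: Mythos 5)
Your proposal is correct and follows essentially the same route as the paper's proof: a deterministic bound $|A_i| = O(\log^2 n \cdot 2^d/n)$ obtained by collapsing the dyadic annuli below the width scale into one interval and counting the $O(\log n)$ disjoint larger annuli, followed by conditioning on $m_i$, a Chernoff bound on the $n-1$ remaining uniform points, and a union bound over $i$. Your explicit handling of the $+1$ for $m_i$ itself and the remark that the argument needs $2^d$ large relative to $n$ are minor refinements the paper elides, but the substance is identical.
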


\begin{proof}
First observe that regardless of the realization of $m_i$, we have
\[|A_i| \le 2 \cdot \frac{4 \log n \cdot 2^d}{n} + \sum_{z = \log(4\log n\cdot 2^d / n)}^d 4 \cdot \frac{4\log n \cdot 2^d}{n} \le \frac{24 \log^2 n \cdot 2^d}{n}.\]
Given $m_i$, the remaining points in $S$ remain uniformly random. Therefore, each of these $n-1$ remaining points intersects $A_i$ independently with probability $24 \log^2 n / n$. By a Chernoff bound, the probability that more than $50 \log^2 n$ of these points intersects $A_i$ is at most $e^{-4\log^2 n}$. Taking a union bound over $i = 1, \dots, n$ completes the proof.
\end{proof}

\begin{lemma}\label{lemma:min}
Let $S = \{m_1, \dots, m_n\}$ consist of $n$ points drawn uniformly at random from $[2^d]$, and arranged in nondecreasing order. With probability at least $1-\frac{1}{n}$ over the sampling of $S$, for all $i\in \{0, 1, \dots, n\}$, we have $\vert B_i\vert \le \frac{4\log n\cdot 2^d}{n}$, where $B_i = [m_i,m_{i+1})$ and $m_0=0$ and $m_{n+1}=2^d$. 
\end{lemma}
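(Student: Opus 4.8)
Here is how I would prove Lemma~\ref{lemma:min}.

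\medskip

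\emph{Overall approach.} This is a ``no large gap'' estimate for uniform points, and the plan is to reduce the event that \emph{some} gap $B_i$ is large to the event that \emph{some block} of a suitably coarse fixed partition of $[2^d]$ receives no sample point, and then to union bound over the (few) blocks. Note first that we may assume the $n$ points are distinct: a repeated sample only merges gaps and can never create a gap not already present, so the ``bad'' event can only shrink. Also, when $2^d$ is tiny relative to $n$ (so that $\ell := \tfrac{4\log n\cdot 2^d}{n}$ is a small constant) or when $n$ is below an absolute constant, the statement is either vacuous or follows from a one-line balls-in-bins computation; I would dispatch these degenerate cases separately and focus on the regime $2^d \gg n$ relevant to the application, where $\ell$ is comfortably large.

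\medskip

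\emph{Deterministic reduction.} Fix $\ell = \tfrac{4\log n\cdot 2^d}{n}$ and let $v = \lfloor \ell/2\rfloor$, so that $v \le \ell/2$ and (in the regime above) $v$ is within a tiny factor of $\ell/2$. Partition $[2^d] = \{1,\dots,2^d\}$ into consecutive integer blocks $J_1,\dots,J_K$ of length at most $v$ each, with block starts at $1, v+1, 2v+1,\dots$; here $K = \lceil 2^d/v\rceil \le 2^d/v + 1$, which is at most $n$. The key deterministic claim is: if every $J_k$ contains a point of $S$, then every gap $B_i$ has length at most $\ell$. Indeed, suppose $|B_i| = m_{i+1} - m_i > \ell \ge 2v$ (the boundary indices $i = 0$ and $i = n$, with $m_0 = 0$ and $m_{n+1} = 2^d$, are treated identically). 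The run of integers $\{m_i+1,\dots,m_{i+1}-1\}$ has more than $2v - 1$ elements; since the block starts are $v$-spaced, every window of $v$ consecutive integers contains a block start, and one checks — using $Kv \ge 2^d$ to rule out the possibly-shorter terminal block from causing trouble — that this run therefore contains an entire length-$v$ block, which is then disjoint from $S$, a contradiction.

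\medskip

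\emph{Probability bound and conclusion.} For each fixed $k$, since the $n$ points of $S$ are independent and uniform on $[2^d]$ and $|J_k| \le v \le \ell/2 = \tfrac{2\log n\cdot 2^d}{n}$,
\[
\Pr[S \cap J_k = \emptyset] \;=\; \Paren{1 - \frac{|J_k|}{2^d}}^{n} \;\le\; \Paren{1 - \frac{2\log n}{n}}^{n} \;\le\; e^{-2\log n} \;\le\; n^{-2},
\]
with the last step using $\log = \log_2$ and leaving slack. A union bound over the at most $n$ blocks gives $\Pr[\exists k: S\cap J_k = \emptyset] \le 1/n$, and combining this with the deterministic claim finishes the proof.

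\medskip

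\emph{Main obstacle and an alternative.} There is no conceptual difficulty here; the only real care is the discretization bookkeeping in the deterministic step — tiling $[2^d]$ by integer blocks, verifying that a run of length greater than $2v - 1$ genuinely swallows a full block even near the endpoints $0$ and $2^d$, and confirming that the constant $4$ in $\ell$ is large enough to absorb simultaneously the factor-$2$ loss between block length and gap length and the union bound over $\le n$ blocks. A slicker alternative, trading elementariness for brevity, is to pass to the continuous model: the $n+1$ spacings of $n$ i.i.d.\ uniform points on $[0,1]$ are exchangeable with $\Pr[\text{spacing}_1 > t] = (1-t)^n$, so $\Pr[\max_i |B_i|/2^d > t] \le (n+1)(1-t)^n$; plugging in $t = 4\log n/n$ gives a bound far below $1/n$, and rounding the points to the integer grid perturbs each gap by at most $1$, which is negligible given the slack.
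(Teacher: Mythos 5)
Your proposal is correct and follows essentially the same route as the paper: partition $[2^d]$ into roughly $n/(2\log n)$ consecutive blocks of length about $\tfrac{2\log n\cdot 2^d}{n}$, show via $(1-1/K)^n \le e^{-n/K}$ and a union bound that every block is hit with probability at least $1-1/n$, and conclude that no gap can exceed twice the block length. Your write-up is somewhat more careful about the deterministic "every block hit $\Rightarrow$ no gap exceeds $2v$" step and the boundary/degenerate cases, which the paper leaves implicit, but the argument is the same.
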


\begin{proof}
    Consider dividing $[2^d]$ into disjoint consecutive intervals of length $\frac{2\log n\cdot 2^d}{n}$. We show that with high probability over a random sampling of $n$ points, every interval will contain at least one sampled point.

    Let $\inter_1,\dots ,\inter_{\num}$ denote these disjoint intervals, where $\num=\frac{n}{2\log n}$. Our goal is to show that $\Pr[\forall i\in[\num]:\total(\inter_i)\geq 1]\geq 1-\frac{1}{n}$, where $\total(\inter_i)$ evaluates the total number of sampled points in the interval $\inter_i$. 

%    We make our argument by considering the complement of the aforementioned stated event i.e. $\Pr[\exists i\in[\num]:\total(\inter_i)<1]$. Using the union bound we get, 
%    \begin{equation}\label{eq:unionbound}
%        \Pr[\exists i\in[\num]:\total(\inter_i)<1]\leq\sum_{i\in[\num]}\Pr[\total(\inter_i)<1].
%    \end{equation}

    Fix some $i\in[\num]$. Then %We now bound the probability of the count of $U_i$ being strictly less than $1$ for this fixed $i$.
\[      \Pr[\total(\inter_i)<1]=\Pr[\total(\inter_i)=0]= \left(\frac{\num-1}{\num}\right)^n\]

    By the union bound, we get
    \begin{align*}
        \Pr[\exists i\in[\num]:\total(\inter_i)<1]&\leq\num\cdot \left(\frac{\num-1}{\num}\right)^n\\
        &=\num\cdot \left(1-\frac{1}{\num}\right)^n\\
        &\leq \frac{\num}{\exp\left(n/\num\right)}\\
        &\leq \frac{n}{n^2}=\frac{1}{n}.
    \end{align*}
    
    Taking the complement of this event, we conclude that with probability at least $1-\frac{1}{n}$ over the sampling of $S$, for all $i\in\{0,1,\dots ,n\}$, we have $\vert B_i\vert\leq \frac{4\log n\cdot 2^d}{n}$.
\end{proof}

\begin{lemma} \label{lem:fldspread}
Let $S=\{m_1,\dots m_n\}$ consist of points sampled uniformly at random from $[2^d]$. Then with probability at least $1 - 2/n$ over the sampling,
for all $i\in\left[n\right]$, there exists an efficiently computable set of points $R_{i}$ with $\vert R_i\vert \le 50\log^2 n$
such that for all $y\in S\setminus R_{i}$, $\mathsf{fld}\left(y,m_{i-1}\right)=\mathsf{fld}\left(y,m_{i+1}\right)$.
\end{lemma}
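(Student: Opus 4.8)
The plan is to condition on two high-probability structural properties of the random set $S$ and then argue that, for each $i$, the only points $y\in S$ that can witness $\fld(y,m_{i-1})\neq\fld(y,m_{i+1})$ are those lying very close to a power of $2$ away from $m_i$ --- and these are few by Lemma~\ref{lem:small-int}. Concretely, I would first invoke Lemma~\ref{lemma:min} (every bucket satisfies $|B_j|\le 4\log n\cdot 2^d/n$) and Lemma~\ref{lem:small-int} ($S$ is regular, i.e.\ $|A_i\cap S|\le 50\log^2 n$ for every $i$), each failing with probability at most $1/n$; by a union bound both hold with probability at least $1-2/n$, and I work on this event for the rest of the argument.

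For a fixed $i$ I would simply take $R_i:=A_i\cap S$. This is efficiently computable (for each $y\in S$, test whether $\big|\,|y-m_i|-2^z\,\big|\le 4\log n\cdot 2^d/n$ for some $z\in\{0,\dots,d-1\}$), and $|R_i|\le 50\log^2 n$ by regularity. I would also observe that $\{m_{i-1},m_i,m_{i+1}\}\subseteq A_i$ (each lies within a bucket width $\le 4\log n\cdot 2^d/n$ of $m_i$, hence inside the innermost, $z=0$, band of $A_i$), so these three ``obstructing'' points --- on which $\fld(\cdot,m_{i-1})$ and $\fld(\cdot,m_{i+1})$ trivially disagree (one value is $0$, or the two have opposite signs) --- are automatically excluded from $S\setminus R_i$.

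The heart of the argument is to show that every $y\in S\setminus R_i$ does satisfy $\fld(y,m_{i-1})=\fld(y,m_{i+1})$. Since $y\notin\{m_{i-1},m_i,m_{i+1}\}$ and these are three consecutive elements of sorted $S$, either $y<m_{i-1}$ or $y>m_{i+1}$; assume the former (the other case is symmetric). Then both values are negative and equality reduces to $\lfloor\log(m_{i-1}-y)\rfloor=\lfloor\log(m_{i+1}-y)\rfloor$. Suppose this fails; then some power of two $2^z$ satisfies $m_{i-1}-y<2^z\le m_{i+1}-y$, and since $m_{i-1}-y\ge 1$ and $m_{i+1}-y\le 2^d-1$ we get $z\in\{1,\dots,d-1\}$. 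Now $|y-m_i|=m_i-y$ lies between $m_{i-1}-y$ and $m_{i+1}-y$, differing from the former by exactly $|B_{i-1}|$ and from the latter by exactly $|B_i|$. I would then split on whether $2^z\le m_i-y$ or $2^z>m_i-y$: in the first case $\big|\,|y-m_i|-2^z\,\big|\le|B_{i-1}|$, in the second $\big|\,|y-m_i|-2^z\,\big|\le|B_i|$; either way this is at most $4\log n\cdot 2^d/n$. Hence $y\in A_i$, contradicting $y\notin R_i=A_i\cap S$.

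The argument is fairly direct once Lemmas~\ref{lemma:min} and~\ref{lem:small-int} are in hand, and I expect the only real subtlety to be constant-tracking in the last step: a naive comparison of $|y-m_i|$ to $2^z$ would use the full gap $|B_{i-1}|+|B_i|$ and lose a factor of $2$ relative to the radius defining $A_i$. The clean fix --- splitting according to which side of $|y-m_i|$ the power of two falls on, and charging only the one-sided bucket width $|B_{i-1}|$ or $|B_i|$ --- is what makes radius $4\log n\cdot 2^d/n$ suffice and keeps $|R_i|$ within the stated $50\log^2 n$.
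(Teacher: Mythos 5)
Your proposal is correct and follows essentially the same route as the paper's proof: both condition on Lemmas~\ref{lem:small-int} and~\ref{lemma:min}, take $R_i = A_i \cap S$, and argue that any $y \notin A_i$ has $|y - m_i|$ at distance more than a bucket width from every power of two, so the floor-log distance cannot change when $m_i$ is replaced by $m_{i-1}$ or $m_{i+1}$. Your contrapositive phrasing (a disagreement forces a separating power of two within one bucket width of $|y-m_i|$) is just a rearrangement of the paper's direct sandwich $2^{p-1}+G < |m-m_i| < 2^p - G$, and your explicit handling of $y \in \{m_{i-1}, m_i, m_{i+1}\}$ is a welcome bit of extra care.
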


\begin{proof}
Let $G = 4 \log n \cdot 2^d / n$. By Lemma~\ref{lem:small-int}, we have that $S$ is regular with overwhelming probability. By Lemma~\ref{lemma:min}, we have that all bucket lengths $|B_i| \le G$ with high probability. We will show that if both of these events hold, then we can construct the appropriate sets $R_i$. For each $i$, define $R_{i} = A_i \cap S$, using the notation from Lemma~\ref{lem:small-int}, which has the requisite size.

% Using results from lemma \ref{lemma:min}, equation \ref{eq:size}, we get
% \[\Pr[\vert B\vert < 2^d/n^3]\geq 1/4n^4\].

Fix some $m\in S\setminus R_{i}$ such that $m \le m_{i-1}$. (We can make a symmetric argument for $m\ge m_{i+1}$). 
Let $p = \fld\left(m,m_{i}\right)$. By properties of floor-log
distance and the construction of the set $R_{i}$ we know that $\left|m-m_{i}\right|<2^{p}-G.$
By the triangle inequality, we have $\vert m-m_{i+1}\vert\leq\vert m-m_{i}\vert+\vert m_{i}-m_{i+1}\vert<2^{p}-G + G=2^{p}$.

On the other hand, from the construction of $R_i$, we have that $\left|m-m_{i+1}\right|\geq \left|m-m_{i}\right| > 2^{p-1}+ G$. This implies that $\fld\left(m,m_{i+1}\right)=p = \fld\left(m,m_{i}\right)$. 

We now argue that $\fld(m,m_{i-1})=p$ as well. It is easy to see that $\vert m-m_{i-1}\vert < \vert m-m_i\vert < 2^{p}-G$. On the other hand,
\begin{align*}
    \vert m-m_{i-1}\vert &= \vert m-m_i\vert -\vert m_{i-1}-m_i\vert \\
                         &\geq \vert m-m_i\vert - G \\
                         & > 2^{p-1} + G - G = 2^{p-1}.
\end{align*}

This proves that $\fld(m,m_{i-1})=\fld(m,m_i)=\fld(m,m_{i+1})=p$ for all $m\in S\setminus R_i$.

%We now argue about the size of the set $R_{i}$. Since we have a lower
%bound on the size of the buckets, we know the minimum value of the
% floor-log distance between any two points is at least $d-3\log n$. Similarly,
% we know that the maximum value of the floor-log distance is $d$.
% Hence, the total number of values that the floor-log distance can take on is at
% most $3\log n$. For a fixed point $m_{i}$, there are at most $6\log n$
% points that are exactly $2^z$ distance away for $z\in [d-3\log n,d]$. For every such point,
% we remove at most two points that are within distance $G$ away. Hence in total the size of $R_{i}$ is at most $12\log n$.
%This proves the required statement.
\end{proof}

\begin{corollary} \label{cor:tfld}
    The leakage function $\tfld$ induced by $\fld$ has the log-invariance property with $\kappa(n) = 50\log^2 n$.
\end{corollary}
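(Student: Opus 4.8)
The plan is to verify Definition~\ref{def:swaplog} directly with $\kappa(n) = 50\log^2 n$ and $\zeta(n) = n$, using Lemma~\ref{lem:fldspread} together with the two well-spreadness facts behind it. First I would condition on the good event of Lemmas~\ref{lem:small-int}, \ref{lemma:min}, and~\ref{lem:fldspread}, which has probability at least $1 - 2/n \ge 1/\zeta(n)$. Writing $G = 4\log n\cdot 2^d/n$, on this event every bucket satisfies $|B_j|\le G$ and, for each $i$, the efficiently computable set $R_i := A_i\cap S$ satisfies $|R_i|\le 50\log^2 n$. Since $G\ge 1$ (which holds whenever $2^d\ge n$, and may be assumed without loss of generality), the point $m_i$ itself lies in $A_i$, hence in $R_i$; therefore $S\setminus R_i$ contains no sample point strictly inside the interval $(x_{i-1},x_{i+1})=(m_{i-1},m_{i+1})$ (handling ties in the obvious way).

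The only thing I need beyond the literal statement of Lemma~\ref{lem:fldspread} is a mild strengthening that falls out of its proof: on the good event, for every $i$, every $m\in S\setminus R_i$, and every $z\in(m_{i-1},m_{i+1})$, we have $\fld(m,z)=\fld(m,m_i)$, and the common magnitude $p_m := |\fld(m,m_i)|$ satisfies $2^{p_m-1} > 2G$. Indeed, $m\notin A_i$ pins $|m-m_i|$ strictly between $2^{p_m-1}+G$ and $2^{p_m}-G$, which forces $2G < 2^{p_m-1}$; and since $|m_i-z|\le\max\{|B_{i-1}|,|B_i|\}\le G$ for every such $z$, the triangle inequality places $|m-z|$ strictly between $2^{p_m-1}$ and $2^{p_m}$, so $|\fld(m,z)|=p_m$, with sign determined by whether $m\le m_{i-1}$ or $m\ge m_{i+1}$. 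This is literally the argument in the proof of Lemma~\ref{lem:fldspread} with an arbitrary interval point $z$ in place of $m_{i+1}$; the $p_m=d$ boundary requires a one-line separate check.

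With the strengthening in hand, both conditions of Definition~\ref{def:swaplog} are short. The last coordinate of $\tfld$ evaluated on any arrangement of three points depends only on which two of the points get paired and on the floor-log distances of the two resulting gaps. For condition~1, take $m_1,m_2\in S\setminus R_i$ (both outside the interval) and $z\in(m_{i-1},m_{i+1})$: the pairwise comparisons in $\tfld(m_1,m_2,z)$ are fixed (a comparison of $z$ with $m_1$ or $m_2$ is determined by the side of the interval that point lies on), and each gap among $\{m_1,m_2,z\}$ is either $\{m_1,m_2\}$ (magnitude $|\fld(m_1,m_2)|$) or $\{z,m'\}$ for $m'\in\{m_1,m_2\}$ (magnitude $p_{m'}$ by the strengthening) --- in all cases independent of $z$. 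So $\tfld(m_1,m_2,z)$, and all its slot-permutations, are independent of $z$. For condition~2, take $m\in S\setminus R_i$ and pairs $z_1<z_2$, $z_1'<z_2'$ in the interval (the permutation clause lets us fix the relative order of the two interval points). The pairwise comparisons again agree across the two triples, and the last coordinate compares the floor-log distance $\lfloor\log|z_1-z_2|\rfloor$ against $\lfloor\log|m-z_j|\rfloor = p_m-1$. Here is the crux: $|z_1-z_2| < m_{i+1}-m_{i-1} = |B_{i-1}|+|B_i|\le 2G < 2^{p_m-1}$, so $\lfloor\log|z_1-z_2|\rfloor < p_m-1$, and this coordinate takes the same value regardless of which interval points are used. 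Hence $\tfld(m,z_1,z_2)=\tfld(m,z_1',z_2')$ for every slot-permutation, completing the verification and giving log-invariance with $\kappa(n)=50\log^2 n$.

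The main obstacle is exactly that crux inequality: one must guarantee that two points lying inside the interval can never be far enough apart, on the floor-log scale, to flip the ``which pair is closer'' bit as the representatives vary. This is precisely what the $\pm G$-neighborhood structure of $A_i$ is engineered to deliver --- deleting from $S$ the points whose distance to $m_i$ is multiplicatively within $G$ of a power of $2$ forces the surviving $p_m$ to strictly exceed the largest floor-log distance realizable inside a width-$2G$ window, which by Lemma~\ref{lemma:min} contains all of $(m_{i-1},m_{i+1})$. Everything else --- efficiency and polylog size of $R_i$ (inherited from Lemma~\ref{lem:fldspread}), the $p_m=d$ and tie edge cases, and checking the slot-permutation variants --- is routine bookkeeping.
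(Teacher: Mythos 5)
Your proposal is correct and follows the same route as the paper: build $R_i = A_i\cap S$ via Lemma~\ref{lem:fldspread} and check both clauses of Definition~\ref{def:swaplog} directly. In fact you supply two details the paper's terse proof leaves implicit --- extending $\fld(m,m_{i\pm1})=\fld(m,m_i)$ to arbitrary interior points $z$, and the bound $|z_1-z_2|\le 2G<2^{p_m-1}$ that fixes the last coordinate in condition~2 --- so your write-up is, if anything, more complete than the paper's.
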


\begin{proof}
    Let $S$ consist of uniformly random samples and construct the sets $R_i$ as in Lemma~\ref{lem:fldspread}. It immediately follows that for all $m_1,m_2\in S\setminus R_i$ and all $z,z'$ in the interval of $(m_{i-1},m_{i+1})$, we have $\tfld(m_1,m_2,z)=\tfld(m_1,m_2,z')$ since $\fld(m_1,z)=\fld(m_1,z')$ and $\fld(m_2,z)=\fld(m_2,z')$. It also follows that for all; $m\in S\setminus R_i$ and all $z_1,z_2,z_1',z_2'$ in the interval $(m_{i-1},m_{i+1})$, we have $\tfld(m,z_1,z_2)=\tfld(m,z_1',z_2')$.
\end{proof}

Combining Corollary~\ref{cor:tfld} with Theorem~\ref{thm:DP-breaks-FRE} yields the following hardness result.

\begin{corollary} \label{cor:hardness}
    Let $\algstyle{FRE}=(\Gen,\Enc,\Dec,\Comp)$ be a statically secure function revealing encryption scheme with leakage function $\tfld$. Then there is no $(\alpha=1/4,\beta=1/4)$-accurate and $(\eps, \delta)$-differentially private PAC learner for the concept class $\EncThr$ with $\epsilon=1/50 \log^2 n$ and $\delta=1/4n$.
\end{corollary}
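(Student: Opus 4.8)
The plan is to read off this corollary as the concrete instantiation of the abstract hardness machinery of this section. I would invoke Corollary~\ref{cor:tfld}, which establishes that the distance-induced leakage function $\tfld$ has the log-invariance property with $\kappa(n) = 50\log^2 n$, and then plug this into Theorem~\ref{thm:DP-breaks-FRE}; the conclusion of that theorem contradicts the assumed static security of the $\FRE$ scheme, so the whole argument proceeds by contradiction against the $\FRE$ security game.

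Concretely, the steps in order are: (i) suppose toward a contradiction that there is a polynomial-time $(1/4,1/4)$-accurate and $(\eps,\delta)$-differentially private PAC learner $L$ for $\EncThr$ under $\tfld$ leakage, with $\eps = 1/(50\log^2 n)$ and $\delta = 1/(4n)$ where $n$ is the sample complexity of $L$; (ii) by Corollary~\ref{cor:tfld}, $\tfld$ is log-invariant with $\kappa(n) = 50\log^2 n$, so these parameters satisfy exactly the hypotheses $\eps \le 1/\kappa(n)$ and $\delta \le 1/(4n)$ required by Theorem~\ref{thm:DP-breaks-FRE}; (iii) apply Theorem~\ref{thm:DP-breaks-FRE} to obtain the adversary of Algorithm~\ref{alg:game}, which uses the efficient $R_i$-producing procedure guaranteed by log-invariance to form a left/right pair of challenge sequences that agree on $\tfld$ for all triples, runs $L$ once on the returned (labelled) ciphertexts, and guesses based on whether the learned hypothesis agrees on the two swapped ciphertexts --- winning the $\FRE$ security game with advantage $1/\poly(n)$; (iv) observe that this adversary is polynomial-time (one batch of challenge queries, the efficient $R_i$ procedure, and a single invocation of the efficient learner $L$), and that, under the standard convention that the plaintext length $d$ and hence $N = 2^d$ are taken polynomial in the cryptographic security parameter $\lambda$, any efficient $L$ has $n = \poly(\lambda)$, so advantage $1/\poly(n)$ is non-negligible in $\lambda$; (v) conclude that the existence of $L$ contradicts static security, so no such $L$ exists.

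I expect the only genuinely delicate step to be (iv): the bookkeeping that ties the learner's sample complexity $n$ --- which simultaneously pins down the privacy regime $\eps = 1/50\log^2 n$, $\delta = 1/4n$ and the resulting advantage $1/\poly(n)$ --- to the security parameter $\lambda$ against which ``negligible'' is measured, exactly as in \citet{BunZ16}. Once that polynomial relationship is fixed (so that a computationally efficient learner uses $n = \poly(\lambda)$ samples and the derived adversary runs in time $\poly(\lambda)$), the remainder is a routine substitution of $\kappa(n) = 50\log^2 n$ into Theorem~\ref{thm:DP-breaks-FRE}, using also that the given $\FRE$ scheme comes with an efficiently computable $\Eval$ for $\tfld$.
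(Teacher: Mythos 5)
Your proposal is correct and matches the paper's own (one-line) proof exactly: the paper derives this corollary by combining Corollary~\ref{cor:tfld} (log-invariance of $\tfld$ with $\kappa(n)=50\log^2 n$) with Theorem~\ref{thm:DP-breaks-FRE}, which is precisely your steps (i)--(v). Your additional bookkeeping in step (iv) relating $n$ to the security parameter is a reasonable elaboration of what the paper leaves implicit.
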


We now state a result from~\cite{balle2018privacy} that (building on the ``secrecy of the sample'' argument from~\cite{KasiviswanathanLNRS11}) that enables efficiently reducing $\eps$ parameter of a differentially private algorithm using random sampling. We then use this theorem to obtain our main separation result (Theorem~\ref{thm:sos}).

\begin{theorem}
    Fix $\eps\leq 1$ and let $\cA$ be an $(\eps,\delta)$-differentially private algorithm operating on datasets of size $m$. For $n\geq 2m$, construct an algorithm $\tilde{\cA}$ that, on input a dataset $D$ of size $n$, subsamples (without replacement) $m$ records from $D$ and runs $\cA$ on the result. Then $\tilde{\cA}$ is $(\tilde{\eps},\tilde{\delta})$--differentially private for
    \begin{align*}
        \tilde{\eps}=\frac{\left(e^{\eps}-1\right)m}{n}\quad\text{and}\quad\tilde{\delta}=\frac{m}{n}\cdot \delta.
    \end{align*}
\end{theorem}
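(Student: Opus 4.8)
The plan is to establish this via the coupling argument for privacy amplification by subsampling due to~\cite{balle2018privacy}. First I would fix neighboring datasets $D, D'$ that agree on every coordinate except one --- say the last, where $D$ holds record $a$ and $D'$ holds record $b$ --- fix an arbitrary output event $T$, and write $q = m/n$. Viewing $\tilde{\cA}(D)$ as ``sample a uniformly random size-$m$ index set $I \subseteq [n]$, form the subsampled dataset $D_I$, and output $\cA(D_I)$'', I would condition on whether $I$ contains the last coordinate to get $\Pr[\tilde{\cA}(D)\in T] = (1-q)B + qA$ and $\Pr[\tilde{\cA}(D')\in T] = (1-q)B + qA'$, where $A = \Pr[\cA(D_I)\in T \mid n \in I]$, $A' = \Pr[\cA(D'_I)\in T \mid n \in I]$, and $B = \Pr[\cA(D_I)\in T \mid n\notin I]$; the value $B$ is common to $D$ and $D'$ because they agree off the last coordinate.

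The core of the argument is two coupling claims, each producing a pair of \emph{neighboring} subsampled datasets so that $(\eps,\delta)$-privacy of $\cA$ may be invoked. (i) Conditioned on $n\in I$, the subsamples of $D$ and of $D'$ can be coupled to use the same $m-1$ indices drawn from $[n-1]$, so that the two resulting datasets differ only in the record $a$ versus $b$; this yields $A \le e^\eps A' + \delta$. (ii) The ``$n\in I$'' subsample of $D$ can be coupled with the ``$n\notin I$'' subsample of $D$: draw a uniform size-$m$ set $K \subseteq [n-1]$ (serving as the ``$n\notin I$'' index set), draw a uniform $k^\star \in K$, and take $(K\setminus\{k^\star\})\cup\{n\}$ as the ``$n\in I$'' index set. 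A short binomial-coefficient computation shows this has the correct marginal, and the two datasets differ only in the record at index $k^\star$ versus $a$, giving $A \le e^\eps B + \delta$. (Claim (ii) uses $n > m$, which follows from $n\ge 2m$.) Combining, $A \le e^\eps\min(A', B) + \delta$.

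The last step is purely calculational: substitute $A \le e^\eps\min(A',B) + \delta$ into $\Pr[\tilde{\cA}(D)\in T] = (1-q)B + qA$ and check that $(1-q)B + qA \le e^{\tilde\eps}\big((1-q)B + qA'\big) + q\delta = e^{\tilde\eps}\Pr[\tilde{\cA}(D')\in T] + \tilde\delta$ with $\tilde\eps = q(e^\eps-1)$, $\tilde\delta = q\delta$. I would split on whether $A' \le B$ (using $A \le e^\eps A' + \delta$) or $B \le A'$ (using $A \le e^\eps B + \delta$); in each case, after cancelling the common $B$-terms, the required inequality reduces to one of $\tilde\eps \le e^{\tilde\eps}-1$ or $1+\tilde\eps \le e^{\tilde\eps}$. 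Since $T$ was arbitrary, this proves $\tilde{\cA}$ is $(\tilde\eps,\tilde\delta)$-differentially private.

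I expect claim (ii) to be the main obstacle: one must verify that ``promoting one of the $m$ sampled indices to the special index $n$'' produces exactly the uniform-$(m-1)$-subset distribution, and take care that neighboring subsampled datasets are compared as unordered multisets (so that swapping a single element keeps them neighboring). It is worth noting that claim (ii), together with the shared mixture component $B$, is exactly what produces the amplification: using only claim (i) gives the weaker conclusion $\Pr[\tilde{\cA}(D)\in T] \le \Pr[\tilde{\cA}(D')\in T] + q(e^\eps-1) + q\delta$, which carries no multiplicative improvement over $\cA$'s own guarantee.
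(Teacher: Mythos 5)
The paper does not prove this statement --- it is imported verbatim from \cite{balle2018privacy} as a black box --- so there is no in-paper argument to compare against. Your proof is correct and is essentially the standard coupling argument from that source: the mixture decomposition over whether the distinguished index is sampled, the two coupling claims giving $A \le e^{\eps}\min(A',B)+\delta$ (your binomial-coefficient check for claim (ii) is right: $\binom{n-1}{m}\cdot\frac{n-m}{m}\cdot\frac{1}{m}$... more precisely $(n-m)/(m\binom{n-1}{m}) = 1/\binom{n-1}{m-1}$), and the final case split, which does reduce to $\tilde\eps \le e^{\tilde\eps}-1$ and $1+\tilde\eps\le e^{\tilde\eps}$ as you say (in the first case one must also invoke $A'\le B$ to absorb the excess $q(e^{\eps}-e^{\tilde\eps})A'$ into the $B$-term, which is exactly the case hypothesis). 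Two minor remarks: your argument never uses $\eps\le 1$ or $n\ge 2m$ beyond $n>m$, and in fact it establishes the tighter $\tilde\eps=\log(1+\frac{m}{n}(e^{\eps}-1))$ before relaxing; and the ordered-tuple issue you flag in claim (ii) does need a word (e.g., have $\tilde{\cA}$ present the subsample in uniformly random order, or define the coupling to place $a$ in the slot vacated by $D[k^{\star}]$) so that the two subsampled datasets are neighboring under the paper's entrywise definition.
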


% Combining Corollaries~\ref{cor:online} and~\ref{cor:hardness}, and using the fact that the $\eps$ parameter of a differentially private algorithm can be efficiently amplified by random sampling~\cite{KasiviswanathanLNRS11}, we obtain our main result.

\begin{theorem}\label{thm:sos}
    Let $\algstyle{FRE}=(\Gen,\Enc,\Dec,\Comp)$ be a statically secure function revealing encryption scheme with leakage function $\tfld$. Define the associated concept class $\EncThr$. Then $\EncThr$ is online learnable in polynomial time with a polynomial mistake bound. However, there is no $(\alpha=1/4,\beta=1/4)$-accurate and $(\eps = 1, \delta = 1/4n)$-differentially private PAC learner for $\EncThr$.
\end{theorem}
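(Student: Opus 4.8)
The plan is to assemble the theorem from pieces already in hand. The statement has two halves, and the first is immediate: by Corollary~\ref{cor:online}, Algorithm~\ref{alg:ol} learns $\EncThr$ under the $\tfld$ leakage function with mistake bound $d+4$ (together with a constant number of additional mistakes for the preprocessing phase) and polynomial runtime per round, so $\EncThr$ is online learnable in polynomial time with a polynomial mistake bound. All the remaining work is in the hardness half. There, the gap to close is between Corollary~\ref{cor:hardness}, which only rules out learners that are $(\eps,\delta)$-differentially private for the tiny value $\eps = 1/(50\log^2 n)$, and the claimed impossibility for the constant value $\eps = 1$; the bridge is privacy amplification by subsampling.

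So I would argue the hardness half by contradiction. Suppose $L$ is an efficient $(1/4,1/4)$-accurate, $(\eps = 1, \delta = 1/(4n))$-differentially private PAC learner for $\EncThr$ with sample complexity $n$ (which is necessarily polynomial in the learning parameters). Fix a sufficiently large absolute constant $c$, set $N = c\,n\log^2 n$, and let $\tilde L$ be the learner that, given a dataset of size $N$, subsamples $n$ records without replacement and runs $L$ on them. Then I would verify three things. First, accuracy is preserved: a uniform subsample without replacement of an i.i.d.\ sample is again an i.i.d.\ sample from the population, so $\tilde L$ inherits the $(1/4,1/4)$ guarantee (this is the ``secrecy of the sample'' observation of~\cite{KasiviswanathanLNRS11}). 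Second, efficiency is preserved: $N = \poly(n)$ and $\tilde L$ merely subsamples and then runs $L$. Third, privacy is amplified: by the subsampling theorem of~\cite{balle2018privacy} stated above (applicable since $\eps = 1 \le 1$ and $N \ge 2n$), $\tilde L$ is $(\tilde\eps,\tilde\delta)$-differentially private with $\tilde\eps = (e-1)\,n/N$ and $\tilde\delta = (n/N)\cdot(1/(4n)) = 1/(4N)$, and choosing $c$ large enough forces $\tilde\eps = (e-1)n/N \le 1/(50\log^2 N)$.

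Since $(\eps,\delta)$-differential privacy is monotone in $\eps$, $\tilde L$ is in particular $(1/(50\log^2 N),\,1/(4N))$-differentially private. Hence $\tilde L$ is an efficient $(1/4,1/4)$-accurate PAC learner for $\EncThr$ with sample complexity $N$ and exactly the privacy parameters that Corollary~\ref{cor:hardness} rules out (instantiated with dataset size $N$) --- a contradiction. Therefore no such $L$ exists, which completes the hardness half and hence the theorem.

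The only point that is not pure bookkeeping is the choice of $N$: it must be a polynomial in $n$ satisfying the self-referential inequality $N \ge 50(e-1)\,n\log^2 N$ that is needed to drive $\tilde\eps$ below $1/(50\log^2 N)$, while remaining polynomial so that $\tilde L$ is still efficient (and Corollary~\ref{cor:hardness}, which concerns efficient learners, applies). This works because $\log^2(c\,n\log^2 n) = O(\log^2 n)$, so $N = c\,n\log^2 n$ with $c$ a large enough constant suffices; I expect this small verification to be the main --- and quite mild --- obstacle, with everything else following from Corollary~\ref{cor:online}, Corollary~\ref{cor:hardness}, the subsampling theorem, and monotonicity of differential privacy.
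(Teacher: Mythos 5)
Your proposal is correct and follows essentially the same route as the paper: the online half is Corollary~\ref{cor:online}, and the hardness half is a proof by contradiction that amplifies $\eps=1$ down to $1/(50\log^2 N)$ via subsampling without replacement and then invokes Corollary~\ref{cor:hardness}. Your explicit verification of the self-referential inequality $N \ge 50(e-1)\,n\log^2 N$ is a detail the paper handles only implicitly (``for sufficiently large $n$''), but the argument is the same.
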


\begin{proof}
    Corollary~\ref{cor:online} shows the existence of an efficient online learner for $\EncThr$ under the leakage function $\tfld$ with mistake bound $d+4$. This proves the efficient learnability of $\EncThr$ in the online learning model.

    We now argue about the computational hardness of $\EncThr$.
    Corollary~\ref{cor:hardness} shows that there is no $(\alpha=1/4,\beta=1/4)$-accurate and $(\eps=1/50\log^2n,\delta=1/4n)$-differentially private PAC learner for the concept class $\EncThr$, where the sample size $n$ is any polynomial in the problem description size $d$. We now use the result that $\eps$ can be amplified efficiently by subsampling to show the non-existence of a learner with same accuracy guarantess but worse privacy guarantees. 

    For the sake of contradiction, assume the existence of a learner $\tilde{L}$ for $\EncThr$ that is $(\alpha=1/4,\beta=1/4)$-accurate and $(\eps = 1, \delta = 1/4m)$-differentially private using $m=\poly(d)$ samples. We now construct a learner $L$  for $\EncThr$ that is $(\alpha=1/4,\beta=1/4)$-accurate and $(\eps=1/50\log^2n,\delta=1/4n)$-differentially private using $n = 100m\log^2m = \poly(d)$ samples. The learner $L$ subsamples a dataset of size $m$ without replacement from its input and runs $\tilde{L}$ on it. 

    From the guarantees of Theorem~\ref{thm:sos}, we obtain that $L$ is $(\eps=1/50\log^2n,\delta=1/4n)$-differentially private for sufficiently large $n$. Moreover, since we are running $\tilde{L}$ on subsamples that were sampled without replacement from a dataset whose elements were sampled in an i.i.d. fashion, the output of $\tilde{L}$ is identically distributed to the output of $L$. This guarantees that $L$ is $(\alpha=1/4,\beta=1/4)$-accurate. 

    However, we have shown in Corollary~\ref{cor:hardness} that such an $L$ cannot exist. So we conclude the non-existence of an $(\alpha=1/4,\beta=1/4)$-accurate and $(\eps = 1, \delta = 1/4n)$-differentially private PAC learner for $\EncThr$.
\end{proof}

\section{Constructing $\FRE$ with $\tfld$ Evaluation} \label{sec:crypto}

We now describe sufficient cryptographic and complexity theoretic assumptions to construct function revealing encryption with any leakage computable by poly-size circuits, including $\tfld$.

\begin{figure}
    \centering
    
    \tikzstyle{block} = [rectangle, draw, text width=6em, text centered, rounded corners, minimum height=6em]
    \tikzstyle{line} = [draw, -latex']
    
    \begin{tikzpicture}[node distance=1cm, auto]
        \node (init) {};
        \node [block] (A) {Single input functional encryption ($\fe$)};
        \node [block, below=3cm of A] (B) {Perfectly correct $\fe$};
        \node [block, right=3cm of A] (C) {Perfectly correct $\tfe$};
        \node[block,right=3cm of B] (D) {Perfectly correct $\FRE$};
        \node[block,right=3cm of C] (E) {Perfectly and strongly correct $\FRE$};

        \path [line] (A) -- node [text width=1.5cm,left,midway,align=center] {\cite{BitanskyV22}} (B);
        
        \path [line] (B) -- node [text width=1.5cm,pos=0.75, below left,align=center ] {\cite{BrakerskiKS18}\\} (C);
        
        \path [line] (C) -- node [text width=1.5cm,midway,right,align=center ]{}(D);
        
         \path [line] (D) -- node [text width=1.5cm,pos=0.20,above right,align=center ] {\cite{BunZ16}} (E);
    \end{tikzpicture}
    \caption{Sketch of construction}
    \label{fig:enter-label}
\end{figure}
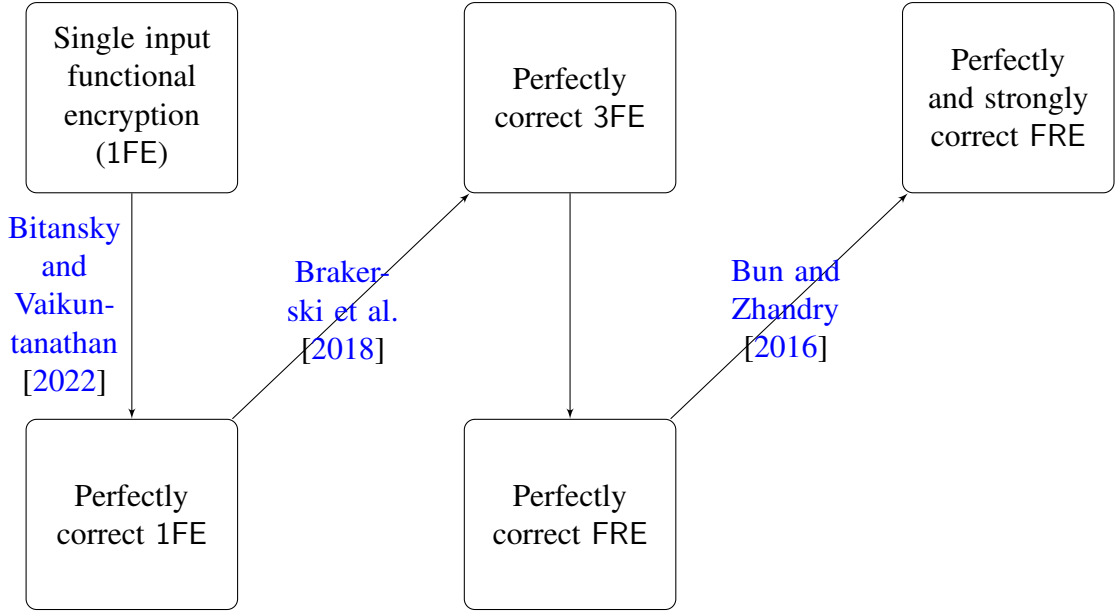

The ``heavy hammer'' in our construction is single-input functional encryption for all poly-size circuits. The existence of this primitive is roughly equivalent to indistinguishability obfuscation; a recent breakthrough of~\cite{JainLS21} showed that both can be based on a slate of reasonable assumptions described below.

\begin{theorem}[\cite{JainLS21}] \label{thm:FE-from-reasonable}
 Let $\lambda$ be a security parameter, $p$ be an efficiently sampleable $\lambda$-bit prime, and $k=k(\lambda)$ be a large enough polynomial. Assume:
    \begin{itemize}
        \item The $\algstyle{SXDH}$ assumption with respect to a bilinear groups of order $p$,
        \item The $\algstyle{LWE}$ assumption with modulus-to-noise ratio $2^{k^\epsilon}$ where $k=k(\lambda)$ is the dimension of the secret,
        \item The existence of $\gamma-$ secure perturbation resilient generators $\Delta\algstyle{RG}\in (\algstyle{deg}\;2,\algstyle{deg}\;d)$ over $\mathbb{Z}_p$ for some constant $d\in \N$, with polynomial stretch.       
    \end{itemize}
    Then there exists a secret-key functional encryption scheme for polynomial sized circuits having adaptive collusion resistant security, full compactness and perfect correctness.   
\end{theorem}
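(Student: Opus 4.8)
The plan is to follow the blueprint of~\cite{JainLS21}: prove the statement by a chain of reductions that boosts functional encryption from a primitive expressible with bilinear maps up to full-fledged FE for $\poly$-size circuits, while keeping ciphertexts compact. At the top, I would invoke the standard bootstrapping machinery: a secret-key FE scheme for constant-degree arithmetic computations (equivalently, for $\mathsf{NC}^0$) whose ciphertext size is \emph{sublinear} in the function description can be upgraded, using only $\algstyle{LWE}$, to a fully compact FE scheme for all of $\mathsf{P}/\poly$. This upgrade is the composition of (i) a randomized-encoding and FHE based transformation that trades circuit depth for low-degree preprocessing (following Ananth--Jain, Lin--Tessaro, and Bitansky--Vaikuntanathan), and (ii) a collusion-amplification step that turns single-key security into unbounded collusion resistance. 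So the real content is to build, from the three stated assumptions, that sublinearly compact FE for a simple function class.

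I would construct this core object in two layers. The first layer uses $\algstyle{SXDH}$ over the bilinear group of order $p$ to obtain a \emph{partially hiding} FE scheme that supports degree-$2$ evaluation: a ciphertext carries a public part and a private part, a key evaluates a quadratic form, and security guarantees that nothing about the private part leaks beyond the quadratic outputs. Pairing-based techniques already in the literature give exactly such a scheme (it generalizes inner-product FE). The second, and genuinely new, layer closes the gap between ``degree $2$'' and the effective degree that iO-grade FE seems to require. This is where the perturbation-resilient generators $\Delta\algstyle{RG}\in(\algstyle{deg}\;2,\algstyle{deg}\;d)$ are used: one rewrites the target $\mathsf{NC}^0$ functionality so that, once its intermediate values are masked by the structured pseudorandom output of a $\Delta\algstyle{RG}$, every term the partially hiding scheme must compute is degree $2$ in the quantities it can handle, while the higher-degree ``leakage'' terms are exactly the ones rendered pseudorandom by $\gamma$-security of the generator. $\algstyle{LWE}$ with modulus-to-noise ratio $2^{k^\eps}$ supplies the noise-flooding needed to make those masks simulatable in the security proof.

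Putting the pieces together, full compactness holds because the degree-$2$ scheme has ciphertexts independent of the quadratic form and the generator has polynomial stretch, so the $\mathsf{NC}^0$-FE ciphertexts are sublinear in the function size; the top-level bootstrapping then removes all dependence on circuit size and on the number of issued keys. Adaptive, collusion-resistant security is obtained by instantiating each layer in its adaptively secure form (the pairing-based scheme admits an adaptively secure variant, and the bootstrapping transformations preserve adaptivity) and then applying collusion amplification, whose hybrid argument charges each extra key to a fresh instance of $\algstyle{SXDH}$, $\algstyle{LWE}$, or $\Delta\algstyle{RG}$-security. Perfect correctness is carried through by instantiating the FHE, randomized-encoding, and pairing components to be perfectly correct; where a sub-component is only statistically correct, one first applies a correctness-amplification transformation (in the spirit of the perfect-correctness compiler of~\cite{BitanskyV22}) before bootstrapping.

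The hard part will be the degree-$2$-to-$\mathsf{NC}^0$ compilation in the second layer: bilinear maps natively compute only quadratic forms, yet the functionality needed for iO-grade FE is not quadratic, and a naive linearization leaks the secret through its cross terms. The technical heart is the algebraic design of the $\Delta\algstyle{RG}$ and of the encoding so that every ``extra'' term is absorbed by pseudorandomness, together with a long multi-hybrid proof that manipulates $\algstyle{SXDH}$, $\algstyle{LWE}$, and generator security simultaneously across many adaptive key queries. Making the parameters consistent --- the $2^{k^\eps}$ LWE ratio, the polynomial stretch, and the constant $d$ and advantage $\gamma$ --- so that the resulting scheme is simultaneously secure and polynomial time is the delicate point; by contrast, the bootstrapping, collusion amplification, and correctness amplification steps are comparatively routine appeals to prior work.
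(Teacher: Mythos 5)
This statement is not something the paper proves: it is imported verbatim as a black-box result of \cite{JainLS21}, so there is no internal proof to compare your attempt against. Your sketch is a reasonable reconstruction of the actual JLS blueprint --- a partially hiding, pairing-based FE for degree-$2$ computations from $\algstyle{SXDH}$, degree reduction for $\mathsf{NC}^0$ functionalities via the perturbation-resilient generators with the extra terms smudged using the subexponential-ratio $\algstyle{LWE}$ noise, and then the known sublinear-FE-to-compact-FE bootstrapping and collusion amplification --- so at the level of architecture you are recounting the right proof, which lives in the cited work rather than in this paper.

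One caution, since it matters for how this paper actually uses the theorem: your last step obtains \emph{perfect} correctness by, ``where a sub-component is only statistically correct,'' invoking a correctness-amplification compiler in the spirit of \cite{BitanskyV22}. That compiler is not free --- it needs one-way functions plus the derandomization-style assumption (functions computable in deterministic time $2^{O(n)}$ with non-deterministic circuit complexity $2^{\Omega(n)}$), which is \emph{not} among the three assumptions listed in the statement. Indeed, in this paper the \cite{BitanskyV22} step is kept separate (Theorem~\ref{thm:bv}) and its circuit lower bound is assumed explicitly in Theorem~\ref{thm:main}; an earlier draft of this paper (visible in the commented-out appendix) likewise states the base JLS result only with high-probability correctness for $\mathsf{NC}^0$ and derives perfect correctness as a corollary under the extra assumptions. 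So if you route perfect correctness through that compiler, you are proving a weaker statement than the one quoted (or, equivalently, you are silently enlarging the assumption set); to prove the statement as written you would need to argue that every component of the JLS construction can be instantiated with correctness error exactly zero from $\algstyle{SXDH}$, $\algstyle{LWE}$, and $\Delta\algstyle{RG}$ alone, which your sketch does not establish.
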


Figure~\ref{fig:enter-label} shows our path for building the perfectly and strongly correct $\FRE$ we need from single-input functional encryption. 

First, we use the following result of~\cite{BitanskyV22} which gives a complexity-theoretic assumption under which we can guarantee correctness with probability 1.

\begin{theorem}[\cite{BitanskyV22}] \label{thm:bv} Assume the existence of one-way functions and functions with deterministic (uniform) time complexity $2^{O(n)}$, but non-deterministic circuit complexity $2^{\Omega(n)}$. Then any cryptographic scheme that is secure under parallel repetitions can be made perfectly correct.
\end{theorem}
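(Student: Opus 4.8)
The plan follows the ``amplify, then derandomize'' template. Fix a security parameter $\lambda$ and let $n = \poly(\lambda)$ bound the bit-length of every object (message, function, $\dots$) relevant at this parameter. \textbf{Amplification.} Run $k = k(\lambda)$ independent copies of the scheme in parallel and decode by a repetition code (e.g.\ majority of the $k$ recovered plaintexts). Since one copy errs on a fixed input with at most negligible probability, the $k$-fold scheme errs with probability $2^{-\Omega(k)}$, and a large enough polynomial $k$ makes this at most $2^{-2n}$; a union bound over the $\le 2^{n}$ relevant inputs then shows that, except for a $2^{-n}$ fraction of the coins used by key generation and other setup steps, \emph{every} input is handled correctly. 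This is exactly where the hypothesis ``secure under parallel repetitions'' is used: the $k$-fold composition must remain secure. Let $\cB_\lambda \subseteq \{0,1\}^{\poly(\lambda)}$ be the resulting set of ``bad'' setup coins, of density at most $2^{-n}$.

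\textbf{Nondeterministic recognizability and an NW generator.} The event ``$\rho \in \cB_\lambda$'' says precisely that there exist an input and some online (e.g.\ encryption) randomness on which the scheme, run with setup coins $\rho$, outputs incorrectly --- a statement witnessed by that input and those coins and verified in time $\poly(\lambda)$. Hence $\cB_\lambda$ is decided by a nondeterministic circuit of size $\poly(\lambda)$. Now invoke the hypothesized function, of deterministic time complexity $2^{O(n)}$ but nondeterministic circuit complexity $2^{\Omega(n)}$: this is exactly the hardness the Nisan--Wigderson construction needs to produce a generator $G_\lambda : \{0,1\}^{O(\log \lambda)} \to \{0,1\}^{\poly(\lambda)}$ that fools nondeterministic $\poly(\lambda)$-size circuits, while the deterministic-time bound keeps $G_\lambda$ itself computable in $\poly(\lambda)$ time on its logarithmic seeds. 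Because $G_\lambda$ fools the ``bad coins'' recognizer and $\cB_\lambda$ has density $\le 2^{-n}$, we get $\Pr_s[\,G_\lambda(s) \in \cB_\lambda\,] \le 2^{-n} + \negl(\lambda) < 1$, so there is a seed $s^\star_\lambda$ with $G_\lambda(s^\star_\lambda) \notin \cB_\lambda$.

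\textbf{Assembling the scheme.} Hardwire $s^\star_\lambda$ (a $O(\log\lambda)$-bit string, i.e.\ harmless non-uniform advice), run setup with coins $G_\lambda(s^\star_\lambda) \notin \cB_\lambda$ --- which makes correctness \emph{perfect} --- and, for randomness that must genuinely be fresh per use (encryption coins, round randomness), derive it as $F_{K}(\mathsf{context})$ for a pseudorandom function $F$ built from one-way functions (this is where the one-way-function hypothesis enters), where the PRF key $K$ is also part of the hardwired advice and is likewise taken to be $G_\lambda$ of a short seed; note the previous step's test can quantify over contexts too, so the same derandomization makes the PRF-derived randomness good everywhere. Security is inherited: $G_\lambda$ fools deterministic $\poly(\lambda)$-size circuits as well (a special case of nondeterministic ones), so replacing truly random setup/PRF-key material by $G_\lambda$-pseudorandom material can help no efficient adversary by more than a negligible amount, and pseudorandomness of $F$ handles the online coins.

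\textbf{Main obstacle.} The delicate step --- the one I expect to be the crux --- is reconciling the two demands on the single hardwired seed $s^\star_\lambda$: perfect correctness is a \emph{worst-case}, ``dodge every bad string'' requirement, whereas security is an \emph{average-case} statement about an adversary's advantage, and one cannot naively average the latter over seeds (advantages can cancel). The resolution is that one object, $G_\lambda$, does both jobs: fooling the nondeterministic correctness-failure predicate shrinks the correctness-bad seeds to a negligible fraction, so good seeds are abundant; fooling poly-size deterministic circuits means that passing to pseudorandom coins preserves security; and one then argues that among the abundant correctness-good seeds one can be chosen that also keeps the scheme secure. Making the quantitative chain consistent --- amplification strength versus input length versus PRG/PRF error versus seed length --- and correctly treating schemes whose correctness or security is intertwined with per-use randomness is where the real work lies.
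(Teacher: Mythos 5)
This theorem is imported from \cite{BitanskyV22}; the paper gives no proof of its own, so the comparison is against the known argument there. Your first two steps (amplify correctness so the bad setup coins have density $2^{-n}$; observe that ``$\rho$ is bad'' is witnessed by an input and some online coins, hence decidable by a small nondeterministic circuit; derandomize with an NW/IW-style generator against nondeterministic circuits built from the assumed function in $\mathsf{E}$ with exponential nondeterministic circuit complexity) match the actual proof. The assembly step, however, contains a genuine error. You hardwire a single seed $s^\star_\lambda$ and run setup with the \emph{fixed, publicly computable} coins $G_\lambda(s^\star_\lambda)$, and likewise hardwire the PRF key $K$. With fixed known setup coins the secret key is a deterministic function of public information, so the scheme has no security at all; a hardwired PRF key is equally fatal. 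Your ``main obstacle'' paragraph correctly senses this tension but the proposed resolution does not close it: the statement ``$G_\lambda$ fools deterministic circuits, so replacing truly random coins by pseudorandom ones preserves security'' is about $G_\lambda(U_{O(\log\lambda)})$ for a \emph{random} seed; it says nothing about a single fixed seed, and no fixed $O(\log\lambda)$-bit seed can serve as secret randomness (an adversary can brute-force all seeds, or simply have $s^\star_\lambda$ hardwired, since it is part of the scheme description).

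The way \cite{BitanskyV22} actually resolve this is the reason the hypothesis reads ``secure under parallel repetitions.'' Rather than selecting one good seed, the transformed scheme \emph{enumerates all} $2^{O(\log\lambda)}=\poly(\lambda)$ seeds, retains genuinely fresh randomness $r$, runs the underlying scheme in parallel on the coin strings $\{G_\lambda(s)\oplus r\}_{s}$, and combines the outputs (e.g.\ by majority decoding). Because the bad set is tiny and nondeterministically recognizable, the generator guarantees that for \emph{every} $r$ a majority of the shifted outputs avoid it, which yields perfect correctness without ever committing to a single seed; and because each $G_\lambda(s)\oplus r$ is marginally uniform, security reduces---via a hybrid over seeds and the generator's indistinguishability against (deterministic) polynomial-size circuits---to the security of the original scheme under parallel repetition. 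So the parallel-repetition hypothesis is consumed by the \emph{security} of the final combined scheme, not merely by the initial correctness amplification as in your sketch.
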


Next, we apply a transformation of~\cite{BrakerskiKS18}, who show how to construct a multi-input functional encryption scheme from a single-input functional encryption scheme for all circuits. Note that perfect correctness of the single-input scheme translates into perfect correctness of the resulting multi-input scheme.

\begin{theorem} [\cite{BrakerskiKS18}] \label{thm:perf-corr}
Assume the existence of
\begin{itemize}
\item A private-key single-input functional encryption scheme for all polynomial-size circuits.
\item A pseudorandom function family.
\end{itemize}
Then there exists a private-key three-input functional encryption scheme for all polynomial-size circuits.
\end{theorem}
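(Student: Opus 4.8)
This is the private-key arity-amplification theorem of~\cite{BrakerskiKS18}, and the plan is to follow their blueprint. The guiding idea is that in the \emph{private-key} setting a single-input functional encryption scheme is already strong enough to bootstrap arity one step at a time: it suffices to give a transformation that turns an $i$-input scheme (together with the single-input scheme) into an $(i+1)$-input scheme, and then to apply it twice, starting from the scheme in the hypothesis, to reach arity three. Throughout, perfect correctness of the base scheme will be preserved, which is exactly what the $\FRE$ construction in the previous section needs.

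Two preliminary reductions set the stage. First, I would invoke the generic transformation that makes any private-key single-input FE scheme \emph{function private} -- compiling the circuit into the ciphertext behind a symmetric-key layer (available from the pseudorandom function family) and moving the decryption logic into the key, in the style of the ``Trojan'' method -- so that the single-input scheme may be assumed to hide functions as well as messages; this step is essentially syntactic and keeps correctness perfect. Second, the assumed pseudorandom function family will supply, from one short key, deterministic-but-pseudorandom coins indexed by a tag; this is precisely what is needed to make the lower-arity functional keys that are generated \emph{inside} the construction mutually consistent while keeping the eventual security loss polynomial.

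The heart of the matter is the arity step, which uses the single-input scheme as ``glue.'' A message $x_{i+1}$ placed in the new $(i+1)$-st slot is encrypted as a single-input functional \emph{key} for the circuit $U_{x_{i+1}}$ that has $x_{i+1}$ and an index tag hardwired and, given the $i$-input master key, the PRF key, and the description of a circuit $f$, outputs an $i$-input functional key for the partial application $f(\cdot,\dots,\cdot,x_{i+1})$ computed with PRF-derived coins. An $(i+1)$-input functional key for $f$ is the single-input \emph{ciphertext} carrying exactly this payload; a ciphertext for slot $j\le i$ is simply its $i$-input ciphertext. Decryption peels one slot off at a time: apply the single-input layer to the slot-$(i+1)$ ciphertext and the functional key to recover an $i$-input key for $f(\cdot,\dots,\cdot,x_{i+1})$, then run $i$-input decryption on the first $i$ slots, recursing. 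Correctness is then immediate and, crucially, perfect: if the single-input and $i$-input schemes never err and the PRF is deterministic, neither does the $(i+1)$-input scheme, so a perfectly correct single-input FE yields a perfectly correct three-input FE.

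The step I expect to be the main obstacle is the security proof, not the construction. A multi-input adversary can freely mix and match ciphertexts across the three slots, so to prove that two challenge tuples of ciphertext vectors are indistinguishable one must run a hybrid that walks over all consistent cross-slot combinations -- of which there are only polynomially many, since the arity three is a constant -- switching one slot's ciphertext at a time. Each hybrid transition invokes function privacy of the single-input layer (so that the hardwired $x_{i+1}$ inside $U_{x_{i+1}}$ leaks nothing beyond the permitted outputs) together with its message privacy (so that the $i$-input master key buried in the functional-key ciphertexts stays hidden), and uses the PRF to replace the derived coins by fresh randomness so that the inductive $i$-input security hypothesis can be applied. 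Keeping track of this hybrid, its ordering, and its polynomial loss is the delicate part, and for the precise accounting I would defer to~\cite{BrakerskiKS18}.
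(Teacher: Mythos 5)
Your proposal follows essentially the same route as the paper, which simply imports this result from Brakerski–Komargodski–Segev and sketches their construction: use the single-input scheme (with PRF-derived coins) as glue to aggregate one slot at a time, observe that every step is deterministic given the coins so perfect correctness is preserved, and defer the polynomial-size cross-slot hybrid for security to the original work. The only cosmetic difference is that you dualize which component of the new slot is a single-input ciphertext versus a functional key relative to their construction, which does not affect the argument.
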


A function revealing encryption scheme is a special case of a multi-input functional encryption scheme where only a single fixed functionality is supported. So what remains is to ensure ``strong'' correctness. To obtain this, we can invoke a transformation of~\cite{BunZ16}, who showed how to obtain strong correctness for ORE by attaching a NIZK proof that encryption was performed correctly. Their construction (stated as Theorem 4.1 in their paper) is not specific to ORE and holds for general leakage as stated below.

\begin{theorem}[\cite{BunZ16}]
Assuming the existence of a function-revealing encryption scheme with leakage $\lkg$, a perfectly binding commitment scheme, and perfectly sound non-interactive zero knowledge proofs for $\mathsf{NP}$, there is a strongly correct function-revealing encryption scheme with leakage $\lkg$.
\end{theorem}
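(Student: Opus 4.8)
The plan is to attach to each underlying ciphertext a non-interactive zero-knowledge proof that it was produced honestly, with the decryption key pinned down by a perfectly binding commitment placed in the public parameters. Perfect soundness of the NIZK together with perfect binding of the commitment then forces every ciphertext carrying an accepting proof to be a genuine encryption under the committed key, which is exactly the guarantee that strong correctness demands of $\Eval$.

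Concretely, write $(\Gen_0,\Enc_0,\Dec_0,\Eval_0)$ for the given perfectly (but not strongly) correct $\FRE$ scheme with leakage $\lkg$, let $\ZKcom$ be a perfectly binding (and computationally hiding) non-interactive commitment, and let $(\ZKsetup,\ZKprove,\ZKver)$ be a perfectly sound NIZK with perfect completeness for the $\mathsf{NP}$ relation
\[
\mathcal R \;=\; \bigl\{\, \bigl((c,\mathsf{com}),\,(m,\rho,\sk,s)\bigr) \;:\; c = \Enc_0(\sk,m;\rho)\ \wedge\ \mathsf{com} = \ZKcom(\sk;s) \,\bigr\}.
\]
The new $\Gen$ runs $(\sk,\params_0)\gets\Gen_0(1^\lambda,1^d)$, samples $\crs\gets\ZKsetup(1^\lambda)$, sets $\mathsf{com}=\ZKcom(\sk;s)$ for fresh $s$, and outputs secret key $(\sk,s)$ and public parameters $\params=(\params_0,\crs,\mathsf{com})$ (folding any commitment key into $\params$). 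The new $\Enc$ on message $m$ computes $c=\Enc_0(\sk,m;\rho)$ and $\pi\gets\ZKprove(\crs,(c,\mathsf{com}),(m,\rho,\sk,s))$ and outputs $(c,\pi)$. The new $\Dec$ on $(c,\pi)$ outputs $\bot$ if $\ZKver(\crs,(c,\mathsf{com}),\pi)=0$, and $\Dec_0(\sk,c)$ otherwise. The new $\Eval$ on $(c_0,\pi_0),(c_1,\pi_1),(c_2,\pi_2)$ outputs $\bot$ if some proof fails to verify against the statement $(c_b,\mathsf{com})$, and $\Eval_0(\params_0,c_0,c_1,c_2)$ otherwise.

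Correct decryption follows from perfect completeness of the NIZK together with perfect decryption correctness of the underlying scheme, and the two correct-evaluation requirements from a short case analysis. Fix any $(\sk,\params)$ in the range of the new $\Gen$ and any triple of (possibly adversarial) ciphertexts. If some attached proof fails to verify, the new $\Eval$ outputs $\bot$, and so does $\Eval_{\lkg}^{\algstyle{ciph}}$ computed with the new $\Dec$, since that $\Dec$ already returns $\bot$ on the offending ciphertext. If all three proofs verify, then because $\crs$ lies in the support of $\ZKsetup$ it is sound, and because $\mathsf{com}$ perfectly binds $\sk$, perfect soundness forces each $c_b=\Enc_0(\sk,m_b;\rho_b)$; hence $\Dec_0(\sk,c_b)=m_b$ and, by perfect correctness of the underlying $\Eval_0$, $\Eval_0(\params_0,c_0,c_1,c_2)=\lkg(m_0,m_1,m_2)$, matching $\Eval_{\lkg}^{\algstyle{ciph}}$. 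Thus the new $\Eval$ agrees with $\Eval_{\lkg}^{\algstyle{ciph}}$ with probability exactly $1$.

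For static leakage-indistinguishability security I would use a chain of hybrids between the ``left'' game and the ``right'' game; recall that the adversary's two message sequences are constrained to agree on $\lkg$ for all triples, which is exactly what lets us eventually invoke security of the underlying scheme. Starting from the real left game: (i) switch $\crs$ to a simulated CRS and all $\pi_i$ to simulated proofs (indistinguishable by zero knowledge); (ii) replace $\mathsf{com}$ by a commitment to $0^{|\sk|}$ (indistinguishable by computational hiding); (iii) switch the underlying ciphertexts from encryptions of the left messages to encryptions of the right messages (indistinguishable by static security of $(\Gen_0,\Enc_0,\Dec_0,\Eval_0)$); then undo (ii) and (i) on the right side. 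The main obstacle, and the reason for this ordering, is step (iii): a reduction to secret-key $\FRE$ security must run without knowing $\sk$ (it only receives $\params_0$ and the challenge ciphertexts), so every other dependence of the public parameters and ciphertexts on $\sk$ must first be removed; steps (i) and (ii) do precisely this, after which the reduction can append the simulated CRS, the junk commitment, and the simulated proofs with no knowledge of $\sk$ or of which message sequence was used. Summing the hybrid gaps bounds the advantage by a negligible function, completing the argument.
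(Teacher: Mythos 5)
Your construction is exactly the one the paper imports from Bun--Zhandry (their Theorem~4.1): commit to the key in the public parameters, attach a perfectly sound NIZK of correct encryption to each ciphertext, have $\Eval$ reject on failed proofs, and prove security by simulating the CRS and proofs and switching the commitment to junk before reducing to the underlying scheme's static security. The proposal is correct and takes essentially the same approach.
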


Perfectly binding commitments can be built from injective one-way functions; moreover, the injectivity requirement can be removed if the circuit lower bound described in Theorem~\ref{thm:perf-corr} holds~\citep{BarakOV07}. Perfectly sound NIZKs can be built from bilinear maps~\citep{GrothOS06}.

Invoking Theorem~\ref{thm:sos} with the construction we've outlined, and using the fact that (functional) encryption implies the existence of one-way functions, we obtain the following separation.

\begin{theorem}\label{thm:main} Assume the existence of functional encryption for poly-size circuits (obtainable via the assumptions in Theorem~\ref{thm:FE-from-reasonable}), functions computable in time $2^{O(n)}$ with non-deterministic circuit complexity $2^{\Omega(n)}$, and perfectly sound non-interactive zero knowledge proofs for $\mathsf{NP}$.
    Then there exists a concept class that is is online learnable in polynomial time with a polynomial mistake bound. However, this class cannot be learned by a $(\alpha=1/4,\beta=1/4)$-accurate and $(\eps = 1, \delta = 1/4n)$-differentially private algorithm.
\end{theorem}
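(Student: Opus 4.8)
The plan is to assemble a statically secure, perfectly and strongly correct $\FRE$ scheme with leakage function $\tfld$ from the three stated assumptions, and then simply invoke Theorem~\ref{thm:sos}, which already packages exactly the desired conclusion for any such scheme. First I would note that $\tfld$ is computable by a poly-size circuit: evaluating it requires only pairwise comparisons of $d$-bit integers and computing $\lfloor \log_2 |m_i - m_j| \rfloor$ (essentially the index of the leading bit of a difference) followed by one comparison of two such values. Hence it suffices to build a multi-input functional encryption scheme for all poly-size circuits and specialize it to the single functionality $\tfld$; leakage-indistinguishability (``static'') security of the resulting $\FRE$ then follows from the (adaptive, hence also static) collusion-resistant security of the underlying functional encryption, since whenever two challenge message sequences agree on all values of $\tfld$, the single released key evaluates identically on the two sequences.

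Next I would trace the chain depicted in Figure~\ref{fig:enter-label}. Start from single-input functional encryption for poly-size circuits, which exists under the assumptions of Theorem~\ref{thm:FE-from-reasonable} (a consequence of \cite{JainLS21}). Apply Theorem~\ref{thm:bv} of \cite{BitanskyV22} --- using the assumed function computable in time $2^{O(n)}$ with non-deterministic circuit complexity $2^{\Omega(n)}$, together with the one-way functions implied by functional encryption --- to upgrade to a \emph{perfectly correct} single-input scheme. Then apply Theorem~\ref{thm:perf-corr} of \cite{BrakerskiKS18}, instantiating the required pseudorandom function from one-way functions, to obtain a perfectly correct three-input functional encryption scheme; restricting this to the functionality $\tfld$ yields a perfectly correct $\FRE$ with leakage $\tfld$. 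Finally, apply the NIZK-based transformation of \cite{BunZ16} --- which attaches to each ciphertext a proof that encryption was performed correctly, using a perfectly binding commitment scheme (built from injective one-way functions, or from one-way functions and the assumed circuit lower bound via \cite{BarakOV07}) and the assumed perfectly sound non-interactive zero knowledge proofs for $\mathsf{NP}$ --- to obtain a perfectly and strongly correct, statically secure $\FRE$ with leakage $\tfld$.

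With this scheme in hand, I would define the concept class $\EncThr$ relative to it and invoke Theorem~\ref{thm:sos}: $\EncThr$ is online learnable in polynomial time with mistake bound $d+4$ (Corollary~\ref{cor:online}), yet admits no $(\alpha=1/4,\beta=1/4)$-accurate and $(\eps=1,\delta=1/4n)$-differentially private PAC learner, which is precisely the claimed separation. I do not expect any single step to be a serious obstacle; the delicate part is the bookkeeping across the chain --- verifying that each transformation provably preserves both perfect correctness and the exact static-security notion we need, and noting that the complexity-theoretic assumption is reused both inside \cite{BitanskyV22} and (if injective one-way functions are not assumed directly) inside \cite{BarakOV07}, so I would confirm that the assumption as stated suffices for both invocations. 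Everything else is a direct composition of the cited results.
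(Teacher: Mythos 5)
Your proposal is correct and follows essentially the same route as the paper: the chain single-input FE $\to$ perfectly correct FE via \cite{BitanskyV22} $\to$ perfectly correct three-input FE via \cite{BrakerskiKS18} $\to$ $\FRE$ with $\tfld$ leakage $\to$ strong correctness via the NIZK transformation of \cite{BunZ16}, followed by an invocation of Theorem~\ref{thm:sos}. The additional bookkeeping you flag (one-way functions from FE, the dual use of the circuit lower bound for both \cite{BitanskyV22} and \cite{BarakOV07}, and why static leakage-indistinguishability follows from FE security once the challenge sequences agree on $\tfld$) matches the paper's treatment.
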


\section{Conclusion}
We conclude with the following open questions.
\begin{itemize}
\item Can we build $\FRE$ schemes satisfying our bisection and log-invariance properties from weaker assumptions? A beautiful line of work~\cite{ChenetteLWW16, CashLOZ16} constructs ``leaky order-revealing encryption schemes'' that enable tantalizingly close functionalities to our $\tfld$. These constructions require much weaker cryptographic assumptions, e,g., just one-way functions and pairings, than what seem to be needed to get multi-input functional encryption for all circuits. 
\item Can one identify a rich, important class of efficient online learners that \emph{can} be efficiently transformed into private PAC learners? 
\item Putting computational complexity aside, can we obtain an improved separation between private sample complexity and non-private sample complexity (characterized by VC dimension) of learning? The current best separation is still only a factor of $\log^*|\mathcal{F}|$~\citep{AlonBLMM22}. Similarly, can we improve our general understanding of the sample complexity of private learning?
\end{itemize}

\section*{Acknowledgments}
MB was supported by NSF CNS-2046425 and a Sloan Research Fellowship, and thanks Mark Zhandry for helpful conversations about order revealing encryption and its variants. AC was supported by NSF CNS-1915763. RD was supported by NSF CNS-2046425.

\bibliographystyle{plainnat}
\bibliography{biblio}

\appendix
\end{document}